\documentclass[lettersize,journal]{IEEEtran}
\IEEEoverridecommandlockouts

\usepackage{amsmath,amssymb,amsfonts,bm,mathtools}
\mathtoolsset{showonlyrefs} 

\usepackage{graphicx}
\usepackage{booktabs}
\usepackage{multirow}
\usepackage{array}
\usepackage{tabularray}     
\usepackage{makecell}
\usepackage{float}
\usepackage{stfloats}       

\usepackage[caption=false,font=footnotesize]{subfig}

\usepackage{tikz}
\usepackage{pgfplots}
\pgfplotsset{compat=1.18}

\usepackage{enumitem}
\setlist{nosep,leftmargin=*} 

\usepackage{siunitx}
\usepackage{microtype} 
\usepackage{algorithm}
\usepackage{algpseudocode} 

\usepackage{xcolor}
\usepackage{listings}

\lstdefinelanguage{Solidity}{
  morekeywords={
    contract,event,function,external,public,private,internal,
    require,emit,returns,address,bytes32,bytes,calldata,memory,
    mapping,struct,modifier,uint256,bool
  },
  sensitive=true,
  morecomment=[l]{//},
  morecomment=[s]{/*}{*/},
  morestring=[b]"
}

\usepackage{amsthm}

\theoremstyle{plain}
\newtheorem{theorem}{Theorem}[section]
\newtheorem{lemma}[theorem]{Lemma}

\theoremstyle{definition}

\theoremstyle{remark}
\newtheorem{remark}[theorem]{Remark}

\newcolumntype{C}[1]{>{\centering\arraybackslash}p{#1}}

\usepackage{url}
\usepackage[hidelinks]{hyperref}

\usepackage[nameinlink,capitalize]{cleveref} 

\crefformat{equation}{(#2#1#3)}
\Crefformat{equation}{Equation~(#2#1#3)}

\crefname{section}{Sec.}{Secs.}
\Crefname{section}{Sec.}{Secs.}
\crefname{subsection}{Sec.}{Secs.}
\Crefname{subsection}{Sec.}{Secs.}
\crefname{figure}{Fig.}{Figs.}
\Crefname{figure}{Fig.}{Figs.}

\newcommand\orcidicon[1]{%
  \href{https://orcid.org/#1}{%
    \IfFileExists{orcid.png}{\includegraphics[width=8pt]{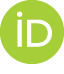}}{\textsuperscript{\tiny ORCID}}%
  }%
}

\title{ZK-HybridFL: Zero-Knowledge Proof-Enhanced Hybrid Ledger for Federated Learning}

\author{%
  \IEEEauthorblockN{%
    \textsuperscript{*}Amirhossein Taherpour\orcidicon{0000-0003-4647-102X}%
    , and
    \textsuperscript{*}Xiaodong Wang\orcidicon{0000-0002-2945-9240}, Fellow, IEEE%
  }\\%
  \IEEEauthorblockA{%
    \textsuperscript{*}Electrical Engineering Department, Columbia University\\%
    \href{mailto:at3532@columbia.edu}{at3532@columbia.edu},
    \href{mailto:xw2008@columbia.edu}{xw2008@columbia.edu}%
  }%
}

\IEEEpubid{}

\begin{document}

\maketitle

\begin{abstract}
Federated learning (FL) enables collaborative model training while preserving data privacy, yet both centralized and decentralized approaches face challenges in scalability, security, and update validation. We propose ZK-HybridFL, a secure decentralized FL framework that integrates a directed acyclic graph (DAG) ledger with dedicated sidechains and zero-knowledge proofs (ZKPs) for privacy-preserving model validation. The framework uses event-driven smart contracts and an oracle-assisted sidechain to verify local model updates without exposing sensitive data. A built-in challenge mechanism efficiently detects adversarial behavior. In experiments on image classification and language modeling tasks, ZK-HybridFL achieves faster convergence, higher accuracy, lower perplexity, and reduced latency compared to Blade-FL and ChainFL. It remains robust against substantial fractions of adversarial and idle nodes, supports sub-second on-chain verification with efficient gas usage, and prevents invalid updates and orphanage-style attacks. This makes ZK-HybridFL a scalable and secure solution for decentralized FL across diverse environments.
\end{abstract}

\begin{IEEEkeywords}
Federated learning (FL), decentralized machine learning, distributed ledger technology (DLT), blockchain, directed acyclic graph (DAG), zero-knowledge proofs (ZKPs), privacy-preserving machine learning.
\end{IEEEkeywords}

\IEEEpeerreviewmaketitle

\section{Introduction}
\label{sec:intro}

\IEEEPARstart{F}{ederated} learning (FL) has emerged as a promising paradigm for collaboratively training models across distributed data silos while preserving privacy~\cite{BB3}. However, traditional FL architectures typically rely on centralized coordinators, potentially creating single points of failure and trust bottlenecks~\cite{WWWZ1}. To overcome these limitations, decentralized FL frameworks have been proposed that leverage distributed ledger technologies (DLTs) for trustless coordination~\cite{BB6}.

A variety of blockchain-based approaches illustrate the potential of decentralizing FL. For instance, BlockFL~\cite{BB9} optimizes secure model exchanges and rewards by adjusting the block generation rate, while PIRATE~\cite{BB10} implements a sharding-based design to achieve Byzantine resilience through secure gradient aggregation. Other notable efforts include adaptive FL methods that reduce communication overhead by up to 50\%~\cite{BB11}, incentive mechanisms grounded in reputation systems~\cite{BB12}, proof-of-stake (PoS)-based resource-saving algorithms~\cite{BB13}, Top-$k$ compression for limiting communication costs~\cite{BB14}, and blockchain-empowered frameworks for 5G-enabled unmanned aerial vehicles (UAVs)~\cite{BB15}. Further studies have analyzed energy consumption in blockchain networks~\cite{BB16}, latency optimization~\cite{BB9}, and integration with mobile edge computing~\cite{BB18}, while foundational research has established theoretical bounds for resource-constrained FL settings~\cite{BB20} and proposed reputation-oriented incentive schemes for preserving Internet of Things (IoT) data privacy~\cite{BB21}.

Despite these advances, significant obstacles remain. Current frameworks often struggle to balance scalability, security, and privacy when operating at scale or in diverse, adversarial environments~\cite{WWZ2}. Synchronous systems (e.g., those relying on Proof of Work (PoW) or Practical Byzantine Fault Tolerance (PBFT)) can become bottlenecks under straggler nodes, while solutions that rely on public datasets for validation risk data exposure and fail to accommodate the heterogeneity of real-world data~\cite{gg5,Taherpour2024HybridChain,Taherpour2025SPIDChain}. Moreover, advanced gradient inversion techniques~\cite{BBB1}, label inference attacks~\cite{BBB2}, and disaggregation methods~\cite{BBB3} highlight the difficulty of protecting participant information purely through naive data-sharing defenses. Although differential privacy~\cite{BBB4} can mitigate certain risks, it does not resolve the fundamental challenge of publicly verifying the integrity of local model updates at scale~\cite{gg1}.

\subsection{State-of-the-Art and Challenges}
Blade-FL~\cite{BB7} removes the need for a centralized server by integrating blockchain directly into the FL workflow. In each epoch, nodes train their local models on mini-batches, then sign and broadcast the updates as blockchain transactions. These updates are verified against a public dataset based on a loss threshold and are aggregated---by averaging the validated updates---into a global model. Subsequently, nodes compete to mine a new block via PoW, which records both the verified transactions and the aggregated model. Once the block is validated network-wide, the updated global model is adopted for the next training round. This cycle repeats until convergence.

ChainFL~\cite{BB8} adopts a hierarchical blockchain structure of node, subchain, and mainchain layers to facilitate FL without a central server. A task is initiated via a smart contract on a DAG ledger, broadcasting the request across shards. Within each shard, the Subchain Leader Node (SLN) distributes the current global model to participating nodes, collects their locally trained updates, and aggregates them---weighted by dataset sizes---using the Raft consensus mechanism before committing the aggregated result to that shard's blockchain. The SLN then submits this shard-level model to the DAG, where top models from different shards are evaluated on a public dataset to form the next global model. This iterative process of local training, aggregation, and cross-shard validation continues until the smart contract conditions signal completion of the FL task.

Both Blade-FL and ChainFL introduce valuable decentralization strategies but face critical limitations in practice. Blade-FL's reliance on PoW leads to heavy computational overhead and high energy consumption, while the use of a public validation dataset raises privacy concerns and risks adversarial manipulation. ChainFL's hierarchical, sharded structure, though beneficial for scaling, still depends on centralized public dataset evaluations and suffers from cross-shard synchronization overhead that can allow stale or malicious updates to proliferate. Motivated by these shortcomings, our work proposes a novel framework that integrates a DAG-based ledger with sidechains and employs zero-knowledge proofs (ZKPs) for privacy-preserving, on-device model validation. This approach not only eliminates the need for public datasets but also boosts efficiency and scalability by ensuring only genuine, high-quality updates are accepted into the global model.

At a broader level, existing solutions also highlight the unmet need for robust and efficient methods of verifying local model updates without compromising user data. Conventional methods often resort to public datasets for legitimacy checks~\cite{gg5,WWZ3}, exposing privacy vulnerabilities and undermining real-world representativeness. Advanced cryptographic protocols, such as ZKPs~\cite{BBB6}, can address privacy and correctness simultaneously, but current implementations often support only specific model types (e.g., linear regression~\cite{BBB9}) or suffer from high computational overheads~\cite{BBB10}. These drawbacks underscore the need for a more scalable, generalizable solution that keeps participant data private while enabling secure, public verification of model contributions.

In this paper, we introduce ZK-HybridFL, a novel decentralized FL architecture that addresses these gaps through an innovative ledger design and cryptographic validation pipeline. Our framework combines a DAG for high-throughput storage of model updates with sidechains dedicated to Event-Driven Smart Contracts (EDSCs). By embedding ZKPs in the sidechain-based contracts, we create a method for verifying training correctness without disclosing sensitive information, thereby eliminating the overhead and security pitfalls of traditional solutions.

\subsection{Contributions}
Our primary contributions are as follows:
\begin{enumerate}
    \item \textbf{Hybrid Ledger System:} We propose a DAG-based ledger for scalable storage of model updates, augmented by sidechains running EDSCs to manage consensus and validation tasks. This structure alleviates bottlenecks inherent in purely PoW- or DAG-based systems.
    \item \textbf{ZKP-Driven Secure Validation:} By integrating ZKPs into EDSCs, our framework enables public verification of inference correctness without exposing private test data or imposing prohibitive computational costs.
    \item \textbf{Experimental Validation:} Through extensive simulations, we demonstrate that our approach outperforms Blade-FL~\cite{BB7} and ChainFL~\cite{BB8} in accuracy, convergence speed, and resilience, even in the presence of adversarial or lazy nodes.
\end{enumerate}

The remainder of the paper is organized as follows. \Cref{sec:consensus} introduces the ZKP-based consensus mechanism and DAG ledger for decentralized FL. \Cref{sec:edsc} describes the event-driven smart contracts and oracle-assisted sidechain. \Cref{sec:procedure-analysis} outlines the ZK-HybridFL workflow and analysis. \Cref{sec:simulations} presents simulation results and comparisons with existing schemes, and \Cref{sec:conclusion} concludes the paper with future research directions.

\section{ZKP-based Consensus Mechanism of ZK-HybridFL}
\label{sec:consensus}
In this section, we first present the use of zero-knowledge proofs (ZKPs) for privacy-preserving model validation. Next, we explain the directed acyclic graph (DAG) ledger, which manages scalable and decentralized model updates. Finally, we discuss the challenge mechanism for verifying blocks and resolving conflicts to ensure system integrity.

\subsection{ZKP}
\label{Sec:ZKP}

\subsubsection{Motivation of using private data for model validation}
When selecting between a public test dataset and private test datasets secured via ZKP in decentralized federated learning (FL), opting for private test datasets offers significant advantages in both privacy and model quality. Utilizing a public test dataset simplifies evaluation by providing a uniform benchmark for all nodes, which facilitates straightforward comparisons. However, the public and static nature of such datasets may result in models being validated on a limited variety of unseen data, potentially restricting their robustness and adaptability~\cite{BBB4}. Additionally, as nodes continuously refine their local models to perform well on the public test set, the evolving patterns in their loss or accuracy can inadvertently expose information about their private training data through techniques such as membership inference or model inversion~\cite{kkkk6,kkkk7}. In contrast, employing private test datasets with ZKPs ensures that each local model is assessed using diverse and private test samples. This approach enhances exposure to a broader data distribution, thereby improving the global model's quality and generalizability while simultaneously protecting data privacy.

\subsubsection{The ZKP process}
A ZKP is a cryptographic technique that enables one party (the prover) to convince another party (the verifier) of the validity of a statement without revealing any information beyond the fact that the statement is valid. In computation-related applications, the prover and verifier agree on a specific algorithm or function. The goal is for the prover to demonstrate to the verifier that the output of the algorithm or function is indeed the result of a particular input, without disclosing any additional details about the input or process.

In ZK-HybridFL, ZKPs enable a node \(j\) (prover) to demonstrate to other nodes \(j'\) (verifiers) that its inference output \(\mathcal{Y}_j^t\) at time step \(t\) is correctly derived by applying its publicly available model \(\mathbf{W}_j^t\), trained during epoch \(t\), to its private test batch \(\mathcal{D}_j^{t,\mathrm{test}}\) using the inference algorithm \(\mathcal{I}\) without revealing any details about \(\mathcal{D}_j^{t,\mathrm{test}}\) or the internal computations. Recall that each node \(j\) maintains a local dataset \(\mathcal{D}_j^t\), and a training mini-batch \(\mathcal{D}_j^{t,\mathrm{train}}\) is used to obtain the model \(\mathbf{W}_j^t\). The test mini-batch \(\mathcal{D}_j^{t,\mathrm{test}}\) is drawn from the remaining data, i.e., \(\mathcal{D}_j^{t,\mathrm{test}} \subset \mathcal{D}_j^t \setminus \mathcal{D}_j^{t,\mathrm{train}}\).

To achieve ZKP, a predefined, immutable program called the circuit is created based on the given inference algorithm \(\mathcal{I}\). Once compiled, this circuit executes a sequence of unforgeable operations that generate both the predicted output \(\mathcal{Y}_j^t\) and its associated loss value \(\mathcal{L}(\mathcal{D}_j^{t,\mathrm{test}}, \mathcal{Y}_j^t)\), which measures the quality of the model \(\mathbf{W}_j^t\). In the process, it produces explicitly defined, immutable intermediate results, referred to as witnesses and denoted by \(\mathcal{U}_j^t\). To ensure the integrity of this computation, a verifier must check the consistency of the inputs \(\mathcal{D}_j^{t,\mathrm{test}}\) and \(\mathbf{W}_j^t\), the intermediate results \(\mathcal{U}_j^t\), and the outputs \(\mathcal{Y}_j^t\) and \(\mathcal{L}(\mathcal{D}_j^{t,\mathrm{test}}, \mathcal{Y}_j^t)\). In particular, the prover node \(j\) and the verifier node \(j'\) follow the sequence of actions outlined below:

\begin{enumerate}
  \item \textbf{Commit:}
  At the beginning of epoch~\(t\), the trainer binds both its freshly trained weight tensor \(\mathbf{W}_j^t\) and its private test batch \(\mathcal{D}_j^{t,\mathrm{test}}\) by posting polynomial commitments
  \[
    C_{j}^{t,\mathrm{model}} = \mathsf{Commit}(\mathbf{W}_j^t), 
    \qquad
    C_{j}^{t,\mathrm{test}} = \mathsf{Commit}(\mathcal{D}_j^{t,\mathrm{test}})
  \]
  to the sidechain.\footnote{A simple hash is sufficient at this stage; \cref{sec:zkp-inst} gives the concrete KZG instantiation and its \SI{35}{k.gas} verification cost.}

  \item \textbf{Key availability:}
  A one-time powers-of-tau ceremony (run off-chain by the oracle committee) produces a circuit-specific proving key \(\mathsf{pk}\) and a public verification key \(\mathsf{vk}\). Because the inference circuit is fixed for the entire task, \(\mathsf{pk}\) and \(\mathsf{vk}\) are reused by every node and for every epoch; only the private witness \((\mathbf{W}_j^t,\mathcal{D}_j^{t,\mathrm{test}})\) changes.

  \item \textbf{Proof generation:}
  Using the proving key \(\mathsf{pk}\) and its private witness \((\mathbf{W}_j^t,\mathcal{D}_j^{t,\mathrm{test}})\), node \(j\) evaluates the circuit to obtain \(\mathcal{U}_j^t\) (the full witness) and the public outputs \(\mathcal{Y}_j^t\) and \(\mathcal{L}_j^t\). It then computes the non-interactive Groth16 proof
  \[
    \Pi_j^t = \mathsf{Prove}\bigl(\mathsf{pk}, \mathcal{U}_j^t\bigr),
  \]
  where all Fiat–Shamir challenges are derived deterministically from
  \(\mathsf{H}\bigl(\mathrm{blockHash}\,\Vert\,C_{j}^{t,\mathrm{model}}\,\Vert\,C_{j}^{t,\mathrm{test}}\bigr)\), so no live randomness exchange is needed. Finally, trainer \(j\) broadcasts
  \[
    Z_j^t =
    \bigl(
      \Pi_j^t,\,
      \mathcal{Y}_j^t,\,
      \mathcal{L}_j^t,\,
      C_{j}^{t,\mathrm{model}},\,
      C_{j}^{t,\mathrm{test}}
    \bigr).
  \]

  \item \textbf{Verification:}
  Any verifier \(j'\) fetches the verification key \(\mathsf{vk}\) from the sidechain and checks
  \[
    \mathsf{Verify}\bigl(
      \mathsf{vk},\,
      \Pi_j^t,\,
      C_{j}^{t,\mathrm{model}},\,
      C_{j}^{t,\mathrm{test}},\,
      \mathcal{Y}_j^t,\,
      \mathcal{L}_j^t
    \bigr) = 1.
  \]
  A result of \(1\) certifies that the public outputs \(\mathcal{Y}_j^t\) and \(\mathcal{L}_j^t\) were produced exactly by the committed model and test data; otherwise, the update is rejected. Once verification succeeds, a \textsf{ProofOK} event is emitted, allowing the contribution to enter the DAG.
\end{enumerate}

A detailed cryptographic instantiation, including the polynomial-commitment scheme and trusted setup, is given in the supplementary material (\cref{sec:zkp-inst}). The keys \(\mathsf{pk}\) and \(\mathsf{vk}\) are fixed across nodes and epochs, while each proof \(\Pi_j^t\) is specific to its witness. Completeness, soundness, and zero-knowledge follow from the Groth16 construction (see \cref{sec:zkp-security}).

\begin{remark}
ZKPs are generally categorized into two main types: succinct non-interactive arguments of knowledge (SNARKs) and scalable transparent arguments of knowledge (STARKs). In our scheme, we utilize ZK-SNARKs due to their succinct proofs, efficient verification, and compatibility with blockchain-based systems. SNARKs such as Groth16~\cite{kkkk1} and Pinocchio~\cite{kkkk2} rely on elliptic-curve cryptography and a trusted setup, which is effectively managed in our scheme by an oracle. While STARKs offer advantages such as transparency (no trusted setup) and post-quantum security, their larger proof sizes and higher computational costs make them less practical in our context. For a detailed discussion of these ZKP methods, see~\cite{kkkk3,kkkk4}.
\end{remark}

\begin{remark}
To prevent proof-generation latency from inflating overall training time, ZK-HybridFL employs a \emph{predict-then-prove} workflow. Immediately after completing a local training step, each node (i) broadcasts its model update together with its hash, and then (ii) proceeds with the next stochastic gradient descent (SGD) iteration without waiting for the zero-knowledge proof to be produced. Proofs are generated asynchronously in the background and thus are effectively amortized over subsequent training steps. A two-epoch (configurable) grace period is provided before any proof failures trigger the challenge mechanism, ensuring that transient delays do not stall the protocol under normal hardware conditions.
\end{remark}

While the core ZKP workflow verifies inference correctness, it does not prevent subtler ``stealth'' attacks, such as submitting trivially small or semantically unchanged updates. We develop extended ZKP defenses to counter these threats. A formal security analysis of these defenses is presented in the supplementary material (\cref{subsec:attacks}), with empirical validation provided by detailed simulations (\cref{sec:extZKP-setup}).

\subsection{DAG}
\label{subsec:dag}
By using a modified IOTA Coordicide consensus mechanism~\cite{VVV1} as described next, ZK-HybridFL employs a DAG ledger to facilitate interactions among trainer nodes while keeping the overall network scalable and secure.

The basic DAG operations used in ZK-HybridFL, such as tip-based parent selection, weight accumulation for confirmation, and graph reachability checks that support the challenge process, are adopted almost verbatim from the Coordicide design of the IOTA ledger.

Our contributions lie in how these established primitives are repurposed for FL. First, every block must include a ZKP that is verified on-chain before the block can gain any weight. This guarantees that only correctly trained updates can influence the ledger. Second, the loss-aware aggregation policy (\cref{sec:loss-aware}) admits only the most recent confirmed blocks and ranks candidate parents by their validated loss. As a result, low-quality or adversarial updates accrue insufficient weight and are naturally pruned from the ledger. To the best of our knowledge, no prior DAG-based blockchain combines proof-gated admission with loss-aware parent selection in the context of a decentralized parameter server. This dual gating mechanism is the core algorithmic contribution of ZK-HybridFL's DAG layer.

In our proposed scheme, each node \(j\) submits a block \(D_j(t) = [\mathbf{W}_j^t, Z_j^t]\) to the DAG ledger during epoch \(t\), where \(\mathbf{W}_j^t\) is the updated local model and \(Z_j^t\) is the ZKP bundle described in \cref{Sec:ZKP}. Blocks \(D_j(t)\) within the DAG can exist in one of three states: confirmed, tip, or unconfirmed. Confirmed blocks \(D_j^{C}(t)\) have an aggregated weight (AW) exceeding a predefined threshold, making them validated and integrated into the stable, immutable part of the DAG. Tip blocks \(D_j^{T}(t)\) are the most recent blocks at the DAG's frontier, ready to be extended by incoming blocks. Unconfirmed blocks \(D_j^{U}(t)\) are those added to the DAG but still in a pending state, awaiting sufficient AW to reach confirmation.

During each epoch \(t\), the global model \(\tilde{\mathbf{W}}_j^{t}\) is obtained through local model aggregation using the model updates from the most recently confirmed blocks, as detailed in \cref{SC4}. Next, we provide a detailed explanation of the consensus mechanism for the DAG component and the computation of the AW used for block confirmation within the DAG.

To submit its contributed block \(D_j(t)\) for epoch \(t\) to the DAG, node \(j\) begins by randomly selecting \(K_T\) tip blocks. These chosen blocks form a set \(\mathcal{D}_j^{T}(t)\) of potential parents with \(|\mathcal{D}_j^{T}(t)| = K_T\). For each block \(D_{j'}^{T}(t')\) in this set, where \(t' < t\), node \(j\) verifies its ZKP and extracts its associated loss. As a result, node \(j\) identifies \(K_V\) blocks with valid ZKPs that have the lowest loss values to form the parent block set \(\mathcal{D}_j^{V}(t)\) for \(D_j(t)\), with \(|\mathcal{D}_j^{V}(t)| = K_V\). Then \(D_j(t)\) is labeled as a tip block \(D_j^{T}(t)\), and all tip blocks \(D_{j'}^{T}(t') \in \mathcal{D}_j^{V}(t)\) are changed to unconfirmed status, i.e., \(D_{j'}^{U}(t')\). Finally, \(D_j^{T}(t)\) is integrated into the DAG by selecting the blocks in \(\mathcal{D}_j^{V}(t)\) as its predecessors. As a result of these actions performed by all nodes, the AW values of the blocks on the DAG change, which may alter the validity status of some blocks. Specifically, certain blocks \(D_{j''}^{U}(t'')\) for \(t'' < t' < t\) become confirmed, i.e., they are updated to \(D_{j''}^{C}(t'')\). Next, we explain how the AW of each block on the DAG is altered and how the transition from tip to confirmed status occurs.

\Cref{DAG_Final} illustrates a DAG ledger for two consecutive epochs \(t-1\) and \(t\). In epoch \(t-1\), each vertex represents a block \(D_{j_i}(t_i)\) by node \(j_i\), for \(i = 1,\ldots,4\), at epoch \(t_i < t-1\). A direct edge from one vertex to another, for example, from \(D_{j_2}^{T}(t_2)\) to \(D_{j_1}^{U}(t_1)\), signifies that \(j_2\) verifies block \(D_{j_1}^{U}(t_1)\). \Cref{DAG_Final} also displays the status evolution of the blocks after one epoch. A block is labeled as a tip if it lacks incoming validation edges. Therefore, in this figure, for epochs \(t-1\) and \(t\), the blocks \(\{D_{j_2}^{T}(t_2), D_{j_3}^{T}(t_3), D_{j_4}^{T}(t_4)\}\) and \(\{D_{j_5}^{T}(t_5), D_{j_6}^{T}(t_6)\}\) are tip blocks, respectively.

Moreover, a block that is neither a tip nor confirmed is referred to as an unconfirmed block. Hence, while blocks \(\{D_{j_2}^{T}(t_2), D_{j_3}^{T}(t_3), D_{j_4}^{T}(t_4)\}\) are tips in epoch \(t-1\), block \(D_{j_3}^{U}(t_3)\) is classified as unconfirmed in epoch \(t\), while \(\{D_{j_2}^{T}(t_2), D_{j_4}^{T}(t_4)\}\) remain tips. Observe that the status of \(D_{j_1}^{U}(t_1)\) transitions from unconfirmed to confirmed, which is denoted by \(D_{j_1}^{C}(t_1)\). Specifically, a block achieves confirmed status when its AW surpasses a predefined threshold \(\eta\). The AW is intricately linked to a weight vector that reflects the relative influence of the nodes.

The weight \(\omega_j\) of each node \(j\) with \(\omega_j > 0\) and \(\sum_{j} \omega_j = 1\) is determined by the number of tokens it has staked. In DAG ledgers, staking refers to the process by which nodes lock their tokens to gain voting power in validating transactions and appending new blocks to the ledger. The more tokens a node stakes, the higher its associated validation weight, influencing the cumulative weight of the blocks it approves. This weight is assumed to remain constant over different epochs but can be dynamically adjusted based on node behavior. Specifically, if a node incorrectly disputes the validity of a block, as discussed in the challenge mechanism outlined in \cref{subsection:challenge}, then a portion of its staked tokens is slashed as a penalty, directly reducing its weight \(\omega_j\). This incentivizes honest participation by making malicious behavior costly. To establish the network's initial trust and ensure a secure starting point, a set of genesis blocks with confirmed status are proposed by the oracle committee (as discussed in \cref{sec:oracle-committee}) as trusted and correct blocks, providing a foundation for subsequent validations.

The AW of a block \(D_{j_i}(t_i)\) is calculated as
\[
  \omega_{j_i} + \sum_{D_{j_{i'}}(t_{i'}) \in \mathcal{F}(D_{j_i}(t_i))} \omega_{j_{i'}},
\]
where \(\mathcal{F}(D_{j_i}(t_i))\) denotes the future cone of \(D_{j_i}(t_i)\), encompassing all blocks that it validates, either directly or through a series of validations. For example, in \Cref{DAG_Final} and in epoch \(t\), blocks \(\{D_{j_1}^{C}(t_1), D_{j_4}^{U}(t_4)\}\) are in the future cone of block \(D_{j_6}^{T}(t_6)\), as block \(D_{j_4}^{U}(t_4)\) is validated by it directly, while \(D_{j_6}^{T}(t_6)\) validates block \(D_{j_1}^{C}(t_1)\) indirectly. Moreover, \Cref{DAG_Final} illustrates that the AW of block \(D_{j_1}(t_1)\) surpasses \(\eta\) after the addition of the weights of blocks \(D_{j_5}^{T}(t_5)\) and \(D_{j_6}^{T}(t_6)\), and its status transitions from unconfirmed to confirmed.

\begin{figure*}[ht]
  \centering
  \begin{tikzpicture}[scale=0.43, transform shape]
    \begin{scope}[shift={(-3,0)}]
      \node at (-5,6) {\Huge Epoch $t-1$};
      \node[circle, draw, fill=black, inner sep=1pt,
            label=above:{\Huge $\bigl(\textcolor{yellow!60!black}{D_{j_{1}}^{U}(t_{1})},\, \textcolor{blue}{\omega_{j_{1}} + \omega_{j_{2}} + \omega_{j_{3}} + \omega_{j_{4}}}\bigr)$}] (D1) at (-5,3) {};
      \node[circle, draw, fill=black, inner sep=1pt,
            label=below:{\Huge $\bigl(\textcolor{red}{D_{j_{2}}^{T}(t_{2})},\, \textcolor{blue}{\omega_{j_{2}}}\bigr)$}] (D2) at (-10,0) {};
      \node[circle, draw, fill=black, inner sep=1pt,
            label=below:{\Huge $\bigl(\textcolor{red}{D_{j_{3}}^{T}(t_{3})},\, \textcolor{blue}{\omega_{j_{3}}}\bigr)$}] (D3) at (-4,0) {};
      \node[circle, draw, fill=black, inner sep=1pt,
            label=below:{\Huge $\bigl(\textcolor{red}{D_{j_{4}}^{T}(t_{4})},\, \textcolor{blue}{\omega_{j_{4}}}\bigr)$}] (D4) at (2,0) {};

      \draw[->, >=stealth, thick] (D2) -- (D1);
      \draw[->, >=stealth, thick] (D3) -- (D1);
      \draw[->, >=stealth, thick] (D4) -- (D1);
    \end{scope}

    \begin{scope}[shift={(3,0)}]
      \node at (5,6) {\Huge Epoch $t$};
      \node[circle, draw, fill=black, inner sep=1pt,
            label=above:{\Huge $\bigl(\textcolor{green}{D_{j_{1}}^{C}(t_{1})},\, \textcolor{blue}{\omega_{j_{1}} + \omega_{j_{2}} + \omega_{j_{3}} + \omega_{j_{4}} + \omega_{j_{5}} + \omega_{j_{6}}}\bigr)$}] (D5) at (5,3) {};

      \node[circle, draw, fill=black, inner sep=1pt] (D6) at (1,0) {};
      \node at (-1.5,1.3) {\Huge $\bigl(\textcolor{yellow!60!black}{D_{j_{2}}^{U}(t_{2})},\, \textcolor{blue}{\omega_{j_{2}} + \omega_{j_{5}}}\bigr)$};

      \node[circle, draw, fill=black, inner sep=1pt] (D7) at (5,0) {};
      \node at (8.5,-0.5) {\Huge $\bigl(\textcolor{yellow!60!black}{D_{j_{3}}^{U}(t_{3})},\, \textcolor{blue}{\omega_{j_{3}} + \omega_{j_{5}}}\bigr)$};

      \node[circle, draw, fill=black, inner sep=1pt] (D8) at (15,0) {};
      \node at (19,0.5) {\Huge $\bigl(\textcolor{yellow!60!black}{D_{j_{4}}^{U}(t_{4})},\, \textcolor{blue}{\omega_{j_{4}} + \omega_{j_{6}}}\bigr)$};

      \node[circle, draw, fill=black, inner sep=1pt,
            label=below:{\Huge $\bigl(\textcolor{red}{D_{j_{5}}^{T}(t_{5})},\, \textcolor{blue}{\omega_{j_{5}}}\bigr)$}] (D9) at (1,-3) {};
      \node[circle, draw, fill=black, inner sep=1pt,
            label=below:{\Huge $\bigl(\textcolor{red}{D_{j_{6}}^{T}(t_{6})},\, \textcolor{blue}{\omega_{j_{6}}}\bigr)$}] (D10) at (7,-3) {};

      \draw[->, >=stealth, thick] (D6) -- (D5);
      \draw[->, >=stealth, thick] (D7) -- (D5);
      \draw[->, >=stealth, thick] (D8) -- (D5);
      \draw[->, >=stealth, thick] (D9) -- (D6);
      \draw[->, >=stealth, thick] (D9) -- (D7);
      \draw[->, >=stealth, thick] (D10) -- (D8);
    \end{scope}
  \end{tikzpicture}
  \caption{Evolution of a DAG ledger over two consecutive epochs. Nodes are color-coded to indicate their status: red for tips, yellow for unconfirmed, and green for confirmed. The blue text represents the AW associated with each node.}
  \label{DAG_Final}
\end{figure*}
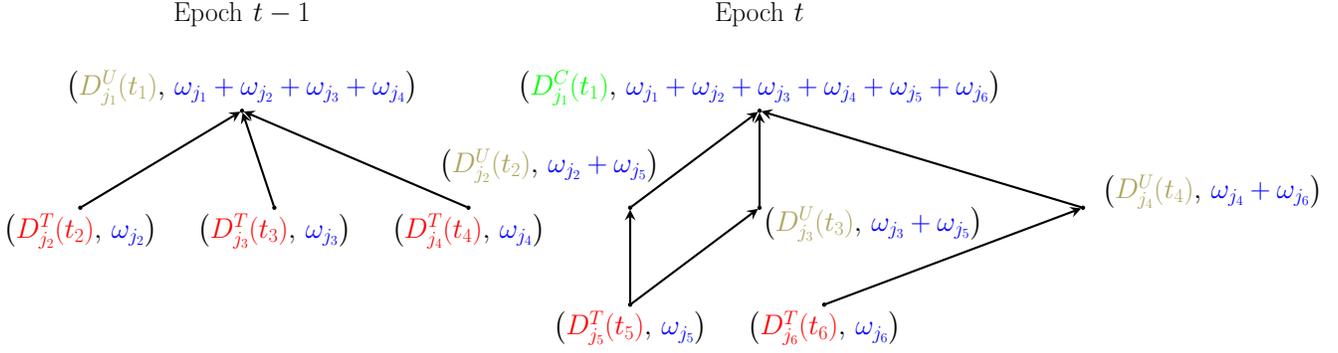

\subsection{Challenge Mechanism}
\label{subsection:challenge}
In DAG ledgers, each block submission requires selecting parent blocks from the current set of tip blocks. Under normal circumstances, this process ensures that new blocks are seamlessly integrated into the network by referencing recent, valid blocks. However, a critical vulnerability known as the \emph{orphanage attack}~\cite{KII9} exploits this mechanism. In an orphanage attack, malicious nodes intentionally select their own invalid blocks, or those of their colluding partners, as parents when submitting new blocks. By continuously referencing their own invalid blocks, these dishonest nodes effectively remove such blocks from the tip pool. Consequently, other honest nodes are unable to examine the validity of these blocks while selecting parents, allowing the invalidity of these strategically added blocks to go unnoticed. As a result, these invalid blocks remain hidden within the DAG despite being removed from the tip set. Later on, as the DAG grows, valid blocks may indirectly reference these previously isolated invalid blocks, causing them to accumulate weight over time. Ultimately, this process can lead to the confirmation of invalid blocks. The confirmation of such blocks not only compromises the quality of the global model derived from confirmed blocks but also alters the topology of the network, disrupting the integrity of the ledger.

The challenge mechanism is motivated by the need to counter this type of attack and restore transparency and fairness to the block selection process. This mechanism empowers honest nodes to flag and contest blocks that remain in an unconfirmed status within the DAG. The process begins with the detection of blocks impacted by an orphanage attack. At the core of this detection mechanism is graph reachability analysis (GRA)~\cite{KII10,KII11,KII12}, a structural algorithm designed to identify blocks that have caused other blocks to become isolated within the DAG. The key principle behind GRA is that any valid block should be reachable from at least one tip block through a path in the DAG graph. If a block is found to be disconnected from all active tips, it indicates a potential issue: either the block was unintentionally abandoned due to network delays, or it was strategically isolated as part of an orphanage attack.

For each suspicious block identified through GRA, the corresponding ZKP is verified. By definition, blocks involved in an orphanage attack are invalid, and their associated ZKPs will fail verification. Once such blocks are detected, the detecting node broadcasts a challenge to all trusted oracles within the network. Upon receiving a challenge, the oracles compile a list of disputed blocks and independently verify their validity based on the provided ZKPs.

Following verification, the oracles participate in a supermajority voting process: if more than \(2/3\) of the oracles agree that a block is invalid, it is formally revoked from the DAG. Block revocation means that the invalid block is excluded from future parent selections, and the AW of affected blocks is updated accordingly to neutralize the impact of the revoked block on the network. Conversely, if a supermajority of oracles determine that the disputed block is valid, the network penalizes the challenging node \(j\) by slashing a fraction of its tokens. This penalty reduces the challenger's \(\omega_j\), thereby decreasing its influence and voting power within the DAG for future block submissions.

\begin{remark}
While an incorrect dispute results in penalizing the disputing node, ZK-HybridFL does not penalize the submitter of an invalid block. This design choice reflects a balance between encouraging participation and acknowledging that honest mistakes or network delays can lead to inadvertent errors. Penalizing such actions could stifle honest contributions, especially when no malicious intent is involved. In contrast, initiating a challenge, especially one that incorrectly disputes a valid block, incurs significant costs by triggering a consensus process among oracles and resulting in computational overhead.
\end{remark}

\section{EDSC Mechanism of ZK-HybridFL}
\label{sec:edsc}
\label{SC:background}
\label{SCs}
Our proposed ZK-HybridFL system employs event-driven smart contracts (EDSCs) to automate predefined actions in response to specific network events. By routing every off-chain emission through a single \texttt{EventAdmission} contract (backed by the \texttt{OracleRegistry}), we avoid constant polling and heavyweight sidechain consensus while ensuring that only vetted, canonical events can trigger on-chain logic. \Cref{subsec:smart-contracts} details the main EDSCs used in ZK-HybridFL, and \Cref{subsuboracle} describes the Oracle Committee's verification and event-publication process.

In ZK-HybridFL, all EDSCs are deployed on a dedicated sidechain that isolates contract execution from the main ledger. This design boosts throughput and scalability by handling high-frequency interactions off-chain, without burdening the primary network. The sidechain architecture and its integration within the broader protocol are presented in \cref{sssidechain}. A more detailed technical description of the Oracle Committee and the sidechain's event-ordering mechanism is available in the supplementary material (\cref{sec:sup_arch}).

\subsection{Smart Contracts in ZK-HybridFL}
\label{subsec:smart-contracts}
Each node \( j \) in the ZK-HybridFL process deploys five EDSCs \(\{S_j^k\}_{k=1}^{5}\) on the sidechain, which are independently triggered by validated events from the oracle. Next, we delineate the function and process of each of these five EDSCs.

\subsubsection{Validation Smart Contract \(S_j^1\)}
\label{SC1}

\(S_j^1\) validates the ZKP bundles of the blocks in \(\mathcal{D}_j^T(t)\). It subscribes to the \texttt{EventApproved} log emitted by the \texttt{EventAdmission} contract (cf.\ \cref{subsuboracle_supp}), which the Oracle Committee emits once it has vetted node \(j'\)'s DAG-published bundle \(Z_{j'}^{t'}\). For each block \(D_{j'}^T(t') \in \mathcal{D}_j^T(t)\), \(S_j^1\) retrieves from the sidechain the commitments
\[
  C_{j'}^{t',\mathrm{model}}, \quad
  C_{j'}^{t',\mathrm{test}},
\]
and the global verification key \(\mathsf{vk}\). It then fetches \(\Pi_{j'}^{t'}\), \(\mathcal{Y}_{j'}^{t'}\), and \(\mathcal{L}_{j'}^{t'}\) from the bundle \(Z_{j'}^{t'}\) on the DAG, invokes
\[
  \mathsf{Verify}\bigl(
    \mathsf{vk},\,
    \Pi_{j'}^{t'},\,
    C_{j'}^{t',\mathrm{model}},\,
    C_{j'}^{t',\mathrm{test}},\,
    \mathcal{Y}_{j'}^{t'},\,
    \mathcal{L}_{j'}^{t'}
  \bigr),
\]
and retains the block only if the call returns \(1\). After processing all candidates, \(S_j^1\) ranks the verified blocks by their loss values \(\mathcal{L}_{j'}^{t'}\) and selects the top \(K_V\) with the lowest loss to form \(\mathcal{D}_j^V(t)\). This finalizes the contract's state, enabling node \(j\) to adopt these blocks as parents in the next stage of ZK-HybridFL.

\subsubsection{Submission Smart Contract \(S_{j}^{2}\)}
\label{SC2}

\(S_j^2\) facilitates the submission of node \(j\)'s new block \(D_j(t)\) to the DAG and updates the aggregated weight (AW) of existing blocks. It subscribes to the \texttt{EventApproved} log emitted by the \texttt{EventAdmission} contract (cf.\ \cref{subsuboracle_supp}), which is triggered once the Oracle Committee has vetted and approved the validated set \(\mathcal{D}_j^V(t)\). The event communicates that node \(j\) has determined the parent blocks from \(\mathcal{D}_j^V(t)\) for its new block \(D_j(t)\); it includes necessary information on these parent blocks alongside the model update \(\mathbf{W}_j^t\) and corresponding ZKP \(Z_j^t\) contained in \(D_j(t)\). \(S_j^2\) then updates the AW values associated with the vertices corresponding to these blocks within \(\mathcal{D}_j^V(t)\) on the DAG and their ancestors, acknowledging their role as parents of the new block. Following this update, \(S_j^2\) labels \(D_j(t)\) as a new tip of the DAG (denoted \(D_j^T(t)\)), while altering the status of each parent block \(D_{j'}^T(t') \in \mathcal{D}_j^V(t)\) from tip to unconfirmed (denoted \(D_{j'}^U(t')\)). If the updated AW of any block exceeds a predetermined threshold, \(S_j^2\) marks such blocks as confirmed. In its final step, \(S_j^2\) transitions its state to complete by embedding \(D_j^T(t)\) into the DAG as a new vertex. This vertex is connected via directed edges to the vertices corresponding to the parent blocks in \(\mathcal{D}_j^V(t)\), effectively updating the DAG structure to reflect the addition of node \(j\)'s block and the revised parent relationships.

\subsubsection{Challenge Smart Contract \(S_{j}^{3}\)}
\label{SC3}

\(S_j^3\) manages the dispute mechanism for potentially invalid blocks within the ZK-HybridFL framework. It subscribes to the \texttt{EventApproved} log emitted by the \texttt{EventAdmission} contract (cf.\ \cref{subsuboracle_supp}), which the Oracle Committee emits once it has vetted node \(j\)'s off-chain graph reachability analysis (GRA) detection of a suspicious block \(D_{j'}(t')\). Upon triggering, \(S_j^3\) receives the GRA detection results as inputs comprising the identifiers of the suspicious blocks \(\{D_{j'}(t')\}\), the identity of the node \(j'\) that created them, relevant ZKPs or metadata supporting the suspicion, and a fraction of node \(j\)'s tokens staked as collateral.

Following receipt of these inputs, \(S_j^3\) notifies trusted oracles of the questionable blocks by emitting an event containing details of the suspicious blocks, GRA findings, and supporting evidence. Simultaneously, the contract holds the staked tokens in escrow. This notification prompts the oracles to validate the challenge: they assess the evidence to determine the validity of the suspicious blocks. The output of \(S_j^3\) thus involves alerting the oracles and securely holding the stake while awaiting their consensus. Depending on the oracles' decision, \(S_j^3\) then updates its state accordingly. If the oracles confirm that the challenged blocks are invalid, \(S_j^3\) marks these blocks as revoked within the DAG ledger, ensuring the integrity of the ledger by potentially reconnecting or adjusting references as needed, and releases the staked tokens back to node \(j\). Conversely, if the oracles find the challenge unfounded, \(S_j^3\) imposes a penalty by slashing the staked tokens, reducing node \(j\)'s future influence, and maintains the status of the challenged blocks. This final state transition solidifies the resolution of the dispute, either by cleansing the DAG of invalid blocks or by penalizing incorrect challenges.

\subsubsection{Model Aggregator Smart Contract \(S_{j}^{4}\)}
\label{SC4}
\label{sec:loss-aware}

\(S_j^4\) is assigned to compute the global model \(\tilde{\mathbf{W}}_j^{t}\) for node \(j\). It subscribes to the \texttt{EventApproved} log emitted by the \texttt{EventAdmission} contract (cf.\ \cref{subsuboracle_supp}), which the Oracle Committee emits once node \(j\) has signaled availability for epoch \(t+1\) and the dispute resolution in \(S_j^3\) has concluded. This event carries the latest AW of DAG blocks along with their updated validity status. Upon receiving that log, \(S_j^4\) determines the set \(\mathcal{J}_{j}^t\) of all nodes \(j'\) that produced at least one new confirmed block in epoch \(t\). For each \(j'\), it identifies the most recent epoch \(t_{j'}^*\) such that \(D_{j'}^{t_{j'}^*}\) became confirmed in epoch \(t\), and retrieves the corresponding model update \(\mathbf{W}_{j'}^{t_{j'}^*}\). The contract then computes the global model via weighted aggregation:
\begin{equation}
  \tilde{\mathbf{W}}_j^{t}
   \;=\;
   \sum_{j' \in \mathcal{J}_{j}^t}
       \frac{\omega_{j'}}{\sum_{k \in \mathcal{J}_{j}^t}\omega_{k}}
       \,\mathbf{W}_{j'}^{t_{j'}^*},
  \label{eq:global-model}
\end{equation}
ensuring that only the latest confirmed update from each node contributes and that weights are normalized over \(\mathcal{J}_{j}^t\). Once aggregation completes, \(S_j^4\) transitions to the complete state and returns \(\tilde{\mathbf{W}}_j^{t}\) to node \(j\), enabling it to commence the next training epoch. A detailed ablation study validating the benefits of this stake-weighted aggregation scheme over uniform averaging is presented in the supplementary material (\cref{sec:weighting-ablation}).

\subsubsection{Reward Distribution Smart Contract \(S_{j}^{5}\)}
\label{SC5}

\(S_j^5\) is allocated for dispensing rewards to trainer node \(j\) upon successful submission of its contributed block \(D_{j}^{T}(t)\). It subscribes to the \texttt{EventApproved} log emitted by the \texttt{EventAdmission} contract (cf.\ \cref{subsuboracle_supp}), which the Oracle Committee emits once the submission process in \(S_j^2\) has concluded. This event provides information regarding the contributed block \(D_{j}^{T}(t)\), its corresponding model \(\mathbf{W}_{j}^{t}\), and the verification details confirming successful integration into the DAG. Specifically, once triggered, \(S_j^5\) verifies the final status of block \(D_{j}^{T}(t)\) and the correctness of \(\mathbf{W}_{j}^{t}\), as endorsed by the oracle. It then calculates the reward amount based on established criteria such as model accuracy and the number of valid blocks that node \(j\) has successfully introduced into the system. After finalizing the reward, \(S_j^5\) transitions its state to complete by creating a transaction on the sidechain to credit the trainer node \(j\)'s account with the calculated sum.

\Cref{tab:smart_contracts} summarizes the structure of the smart contracts for ZK-HybridFL, where the input, output, and state variables of these EDSCs in epoch \(t\) are denoted as \(\{I_j^k(t), O_j^k(t), P_j^k(t)\}_{k=1}^{5}\).

\begin{table*}[htbp]
\centering
\caption{Summary of EDSCs for ZK-HybridFL.}
\label{tab:smart_contracts}
\resizebox{\textwidth}{!}{%
\renewcommand{\arraystretch}{1.3}%
\begin{tabular}{|c!{\vrule width 1.5pt}c|c|c|c|}
\hline
\textbf{SC} & \textbf{Major Defined Event Subscription} & \textbf{Input} & \textbf{Output} & \textbf{State} \\ \Xhline{1.5pt}
$S_j^{1}$ & \makecell{Generation of ZKP $Z_j^{t}$} & \makecell{Triggers validation of \\ proofs $Z_{j^\prime}^{t^\prime}$ in blocks \\ $D_{j^\prime}^{T}(t^\prime)$} & \makecell{Validation results of ZKPs \\ and performance evaluations, \\ forming set $\mathcal{D}_{j}^{V}(t)$} & \makecell{Transition to \\ complete after sending \\ results to node $j$} \\ \hline
$S_j^{2}$ & \makecell{Formation of set $\mathcal{D}_{j}^{V}(t)$} & \makecell{Identifies and verifies \\ valid blocks within $\mathcal{D}_{j}^{V}(t)$ \\ on the DAG} & \makecell{Updates DAG by labeling \\ block $D_j(t)$ as tip $D_j^{T}(t)$, \\ changes status of parent blocks} & \makecell{Releases contributed block \\ $D_j^{T}(t)$ into the DAG} \\ \hline
$S_j^{3}$ & \makecell{Detection of potentially \\ invalid block $D_{j^\prime}(t^\prime)$} & \makecell{Receives GRA detection results \\ and token stake \\ from node $j$, including \\ flagged blocks $\{D_{j^\prime}(t^\prime)\}$ and \\ relevant ZKPs} & \makecell{Notifies oracles of suspicious blocks \\ and stakes tokens in escrow; \\ oracles verify challenge validity \\ and determine slashing or refund} & \makecell{Updates DAG by marking \\ invalid blocks and slashing tokens \\ if the challenge fails; \\ returns tokens if challenge succeeds} \\ \hline
$S_j^{4}$ & \makecell{Announcement of trainer availability \\ for next epoch $t+1$} & \makecell{Aggregates model updates \\ $\mathbf{W}_{j^\prime}^{t^\prime}$ from confirmed blocks \\ $D_{j^\prime}^{C}(t^\prime)$} & \makecell{Generates global model \\ $\tilde{\mathbf{W}}_j^{t}$ for epoch $t+1$ \\ via aggregation rule} & \makecell{Sends global model \\ $\tilde{\mathbf{W}}_j^{t}$ to node $j$} \\ \hline
$S_j^{5}$ & \makecell{Formation of block $D_j^{T}(t)$} & \makecell{Verifies completion and integration \\ of model update $D_j^{T}(t)$ and \\ model $\mathbf{W}_j^{t}$} & \makecell{Determines reward amount \\ based on predefined criteria \\ (e.g., accuracy, contribution quality)} & \makecell{Releases calculated reward \\ to trainer node $j$ via blockchain transaction} \\ \hline
\end{tabular}%
}
\end{table*}

\subsection{Oracle-assisted Sidechain}
\subsubsection{Oracle Committee}
\label{subsuboracle}
\label{sec:oracle-committee}
In ZK-HybridFL, the Oracle Committee acts as a trusted intermediary responsible for verifying events published by the nodes before they trigger the EDSCs. When a node publishes an event intended to trigger a smart contract, the oracles initially place a hold on the event and verify it to ensure that it adheres to the expected structure and contains accurate information. For instance, the Oracle Committee checks that the event's content is consistent with the data recorded on the DAG ledger, such as the associated ZKPs, and that the performance metrics in the event align with the corresponding blocks in the DAG. If the event passes all verification checks, the oracles lift the hold on the event within the sidechain, allowing it to trigger any subscribed smart contracts. This process ensures that the event is neither malicious nor incorrectly formatted, thereby preventing invalid events from triggering smart contracts.

This verification mechanism is critical in ZK-HybridFL because, unlike conventional sidechains that use dedicated consensus protocols such as Raft in ChainFL, our sidechain does not employ a separate consensus process. Instead, the Oracle Committee acts as a lightweight verification layer, reducing the overhead typically associated with consensus protocols while ensuring that only valid events can trigger smart contracts. This approach minimizes the scalability limitations of the sidechain by offloading event validation to the Oracle Committee, allowing the DAG ledger to remain the primary consensus layer without bottlenecks caused by sidechain consensus.

\subsubsection{Sidechain}
\label{sssidechain}
The ZK-HybridFL sidechain serves as a specialized ledger for storing and executing EDSCs, ensuring that high-frequency interactions do not overload the DAG. Its block structure is designed to encapsulate the data elements essential for the protocol's operation and network logistics. In the initial blocks, the sidechain records identifiers that distinguish oracle nodes---responsible for validating events---from non-oracle participants, along with stake-related metadata that determines each node's weight \(\omega_j\). During the network's one-time deployment phase, EDSCs are deployed on the sidechain. Once live, subsequent blocks record interactions with these EDSCs, capturing dynamic information such as commitments \(H(\mathcal{D}_j^{t,\mathrm{test}})\) and \(H(\mathbf{W}_{j}^{t})\), proof verifiers \(v_{j}^{t}\) for each node \(j\) and epoch \(t\), reward distributions for successful model submissions, and updates to tokens staked by participants.

Unlike conventional blockchains that rely on consensus mechanisms such as proof of work (PoW) to validate blocks, the ZK-HybridFL sidechain adopts a different strategy. In traditional blockchains, consensus is used to ensure all nodes agree on a single, unique sequence of blocks. For example, in PoW-based systems, nodes compete to propose the next block by solving a computational puzzle. Once a node successfully proposes a block, the network agrees to attach that block to the chain, establishing transaction order and conflict resolution. Consequently, any transaction included in an earlier block (e.g., Block~A) is recognized as having occurred before transactions in subsequently attached blocks (e.g., Block~B).

In contrast, the ZK-HybridFL sidechain does not use a separate consensus algorithm for block validation. Instead, it leverages its event-driven architecture to inherently maintain transaction order. When an event is validated and a corresponding block is attached to the sidechain, the timestamp of this attachment determines the transaction order. Thus, if a transaction is included in a block that is attached with an earlier timestamp, it is automatically considered to have occurred before transactions in later-attached blocks. This built-in ordering mechanism eliminates the need for additional consensus protocols, simplifying transaction validation and ordering on the sidechain.

In decentralized networks like ZK-HybridFL, relying on a single global clock such as a real-world timestamp introduces significant challenges~\cite{KII13}. Clock drift, network latency, and the absence of a trusted time authority all make it difficult to maintain an accurate and uniform notion of time across all nodes. Consequently, event ordering becomes prone to inconsistencies and disagreements when purely dependent on physical clocks. To address this, ZK-HybridFL adopts Lamport clocks~\cite{KII14}, which assign logical timestamps reflecting the causal dependencies among events, rather than relying on physical time references. By doing so, the system circumvents the complexities of clock synchronization in a heterogeneous environment while still guaranteeing a deterministic and consistent event sequence. As each oracle increments its local counter and exchanges logical timestamps with peers, a coherent final order naturally emerges even under asynchronous conditions. This ensures conflict-free execution of smart contracts and recording their corresponding events on the sidechain and allows all nodes to reconcile their states without resorting to traditional consensus protocols. For more details about ordering events in a decentralized network using Lamport clocks, the reader can refer to~\cite{KII15}.

\section{The ZK-HybridFL Procedure and Analysis}
\label{sec:procedure-analysis}

\subsection{Workflow of ZK-HybridFL}
\label{subsection:workflow}
This subsection outlines the workflow of ZK-HybridFL from the viewpoint
of node \(j\) at epoch \(t\). It describes how node \(j\) performs local
training, commits its state, generates proofs, submits blocks, and
participates in the on-chain verification, challenge, and aggregation
stages.\footnote{All on-chain triggers in Stages 2–4 result from threshold-signed \texttt{EventApproved} logs emitted by the \texttt{EventAdmission} contract. These logs are produced only after off-chain validation by the Oracle Committee; see \cref{subsuboracle_supp}.} These four stages repeat until the model converges.

\subsubsection{Stage 1: Training and Proof Certification}
\label{subsec:stage1}
\begin{enumerate}[label=\alph*.]

  \item \textbf{Local training.}
        Node \(j\) downloads the latest global model
        \(\tilde{\mathbf{W}}_{j}^{\,t-1}\) from the sidechain and runs
        \(R\) stochastic gradient descent (SGD) iterations on its private training batch
        \(\mathcal{D}_j^{t,\mathrm{train}}\) (mini-batch size \(B\)), yielding
        updated weights \(\mathbf{W}_{j}^{t}\).

  \item \textbf{Commit.}
        It posts two KZG commitments
        \[
          C_{j}^{t,\mathrm{model}}
            = \mathsf{Commit}(\mathbf{W}_{j}^{t}),
          \quad
          C_{j}^{t,\mathrm{test}}
            = \mathsf{Commit}(\mathcal{D}_{j}^{t,\mathrm{test}})
        \]
        to the sidechain, together with a Lamport timestamp.

  \item \textbf{Proof generation (off-chain).}
        Using the universal proving key \(\mathsf{pk}\), node \(j\)
        evaluates the inference circuit on its private witness
        \(\bigl(\mathbf{W}_{j}^{t},\mathcal{D}_{j}^{t,\mathrm{test}}\bigr)\) and
        produces the noninteractive Groth16 proof \(\Pi_{j}^{t}\). It then
        assembles the proof bundle
        \[
          Z_{j}^{t}
            = \bigl(
                \Pi_{j}^{t},\,
                \mathcal{Y}_{j}^{t},\,
                \mathcal{L}_{j}^{t},\,
                C_{j}^{t,\mathrm{model}},\,
                C_{j}^{t,\mathrm{test}}
              \bigr)
        \]
        and buffers it locally until submission.

  \item \textbf{Block construction.}
        Once \(\Pi_{j}^{t}\) is ready, node \(j\) forms the block
        \[
          D_{j}(t)
            = \bigl[\mathbf{W}_{j}^{t},\,Z_{j}^{t}\bigr]
        \]
        for insertion into the DAG.

\end{enumerate}

\subsubsection{Stage 2: Block Submission}
\label{subsec:stage2}
\begin{enumerate}[label=\alph*.]

  \item \textbf{Fetch tips.}
        Node \(j\) reads the latest approved events to reconstruct the current tip set
        \(\mathcal{D}_j^{T}(t)\) and pulls each tip’s commitments
        \(C_{j'}^{t',\mathrm{model}}\), \(C_{j'}^{t',\mathrm{test}}\)
        and the global verification key \(\mathsf{vk}\) from the sidechain.

  \item \textbf{Verification.}
        Node \(j\) invokes the validation smart contract \(S_j^1\), which, for
        each candidate block, performs
        \[
          \mathsf{Verify}\bigl(
            \mathsf{vk},\,
            \Pi_{j'}^{t'},\,
            C_{j'}^{t',\mathrm{model}},\,
            C_{j'}^{t',\mathrm{test}},\,
            \mathcal{Y}_{j'}^{t'},\,
            \mathcal{L}_{j'}^{t'}
          \bigr)
        \]
        and returns the top-\(K_V\) verified parents
        \(\mathcal{D}_j^{V}(t)\).

  \item \textbf{Publish block.}
        With \(\mathcal{D}_j^{V}(t)\) confirmed, node \(j\) calls
        \(S_j^2\), which (i) records the chosen parents,
        (ii) embeds \(D_j(t)\) into the DAG, and (iii) marks it as a new tip.

\end{enumerate}

\subsubsection{Stage 3: Consensus-Driven Confirmation and Challenge}
\label{subsec:stage3}
\begin{enumerate}[label=\alph*.]

  \item Node \(j\) syncs with the sidechain, incorporating new blocks
        \(D_{j'}(t)\) whose approved events have appeared on chain.

  \item Upon synchronization, \(S_j^2\) advances to its second phase:
        it updates block statuses and records which blocks transition
        from unconfirmed to confirmed in epoch \(t\).

  \item Node \(j\) executes graph reachability analysis (GRA) on the DAG.

  \item Based on the GRA output, node \(j\) may trigger the challenge
        contract \(S_j^3\) to dispute the validity of certain blocks.

  \item If triggered, \(S_j^3\) stakes tokens from node \(j\) and notifies
        the oracles to adjudicate the dispute.

  \item Depending on the oracles’ consensus (cf.\ \cref{subsection:challenge}),
        either the DAG is updated or node \(j\)’s stake is slashed.

\end{enumerate}

\subsubsection{Stage 4: Global Model Aggregation}
\label{subsec:stage4}
\begin{enumerate}[label=\alph*.]

  \item Node \(j\) calls the aggregation smart contract \(S_j^4\) with the list
        of confirmed blocks from epoch \(t\).

  \item \(S_j^4\) retrieves the corresponding models from the DAG and
        computes the new global model
        \(\tilde{\mathbf{W}}_{j}^{\,t}\), storing it on the sidechain.

  \item Finally, node \(j\) invokes the reward contract \(S_j^5\),
        which disburses tokens to node \(j\) based on its contributed
        loss \(\mathcal{L}_{j}^{t}\) (cf.\ \cref{SC5}).

\end{enumerate}

At this point, node \(j\) holds the updated global model
\(\tilde{\mathbf{W}}_{j}^{\,t}\) and begins the next epoch’s local
training, returning to Stage~1.

\begin{remark}
The four-stage protocol in \cref{subsection:workflow} remains intact; we simply augment Stages~1 and~2 to consume the committee-published scalars \([L_t,B_t]\) and \(\tau_{\max}\). At the boundary between epochs \(t-1\) and \(t\), each node \(j\) retrieves the signed events \texttt{NormThresholdsPublished} and \texttt{CosineThresholdsPublished} (see \cref{sec:thresholds}) and caches the new bounds before beginning Stage~1. After local training and KZG commitments, Stage~1 proof generation now emits three noninteractive proofs: \(\Pi_{j}^{t}\) (Groth16 for correct inference and loss), \(\Sigma_{j}^{t}\) (Bulletproof enforcing \(L_t \le \|\Delta\mathbf{W}_j^t\|_2 \le B_t\)), and \(\Gamma_{j}^{t}\) (Groth16 subproof enforcing \(\cos(z_{\mathrm{old}},z_{\mathrm{new}}) \le \tau_{\max}\) on the fixed probe set). These are concatenated into the extended bundle \(Z_{j}^{t,\mathrm{ext}}\). Stage~2's validation contract \(S_j^1\) then runs the three-step verification (Groth16 \(\Pi_{j}^{t}\), Bulletproof \(\Sigma_{j}^{t}\), cosine subproof \(\Gamma_{j}^{t}\)) as in \cref{sec:validation-contract-extended} and returns the top-\(K_V\) parents exactly as before. Stages~3 and~4 continue to react only to the usual \textsf{ProofOK} events and require no modifications.
\end{remark}

\begin{remark}
In ZK-HybridFL, the network can be effectively divided into two types of nodes—full nodes and light nodes—based on their resource capabilities. This division is motivated by the need to accommodate diverse participant environments, optimize system performance, and ensure scalability. Full nodes, with ample computational power and storage, maintain the complete global DAG ledger, comprehensive sidechains, and carry out resource-intensive tasks, including oracle functions. These nodes handle heavy operations such as ledger maintenance, smart contract execution, and dynamic event validation, which are crucial for the integrity and reliability of the overall system.

In contrast, light nodes operate with lighter versions of the DAG and sidechain, focusing on essential local training and basic participation in the FL process. By delegating complex, resource-heavy tasks to full nodes, light nodes can efficiently contribute to model updates and generate proofs without the burden of maintaining an entire global state. This stratification not only leverages the strengths of more capable participants but also enables a wide range of devices with limited resources to join the network. The oracle functionality, consistent across both node types, serves as a trusted intermediary, validating events and bridging interactions between light and full nodes. \Cref{full_light} illustrates the architectural division between full nodes and light nodes in ZK-HybridFL. The overall interaction between the trainers, the Oracle Committee, the sidechain smart contracts, and the DAG ledger is illustrated in \Cref{fig:workflow}, including both the main training and challenge phases.
\end{remark}

\begin{figure}[t]
    \centering
    \includegraphics[width=\columnwidth]{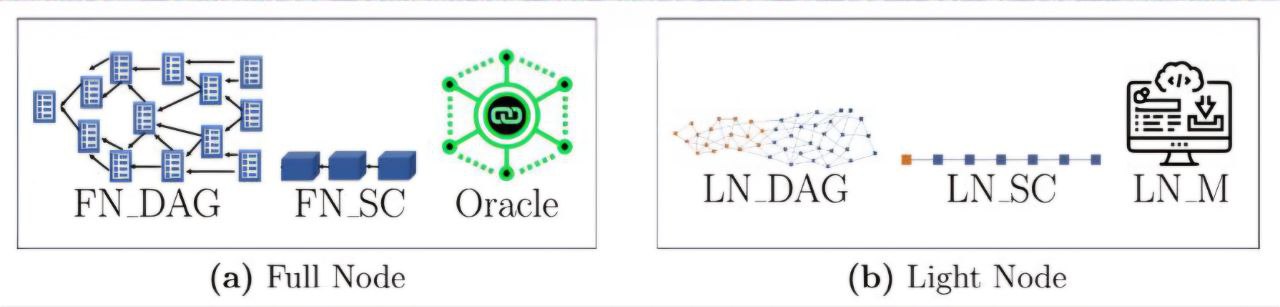}
    \caption{Full nodes (FNs) maintain a complete DAG (FN\_DAG) and sidechain (FN\_SC) while handling oracle functions. Light nodes (LNs) use a trimmed DAG (LN\_DAG) and lightweight sidechain (LN\_SC), with LN\_M serving as the LN's core controlling module.}
    \label{full_light}
\end{figure}

\begin{figure}[t]
    \centering
    \includegraphics[width=\columnwidth]{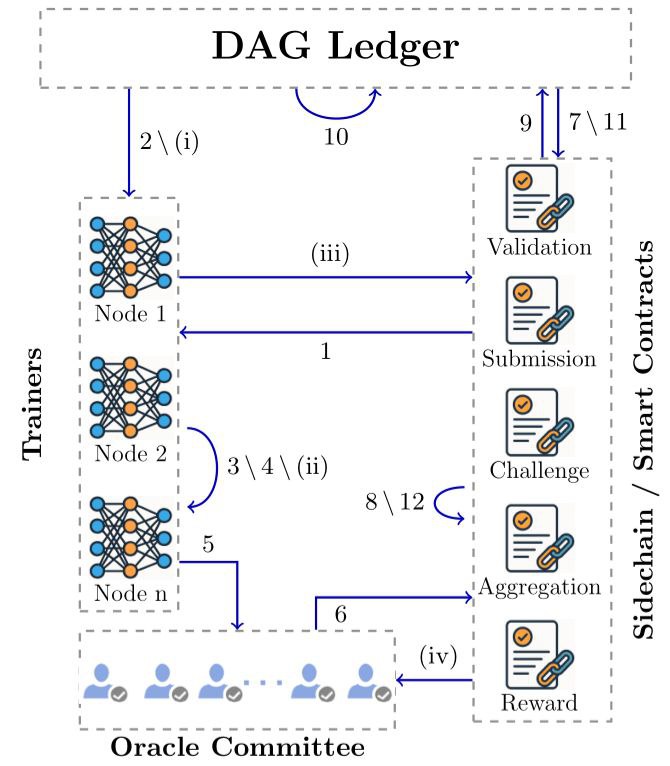}
    \caption{ZK-HybridFL workflow (1–12): 
      1) aggregate IDs; 2) fetch blocks; 3) local train; 4) bundle proofs; 
      5) submit bundle; 6) committee admit; 7) fetch ZKP tips; 8) validate ZKPs; 
      9) parent selection; 10) attach block; 11) update weights; 12) aggregate and reward. 
      Challenge loop (i–iv): (i) fetch proof; (ii) local GRA; 
      (iii) submit proof; (iv) reward or slash.}
    \label{fig:workflow}
\end{figure}

\section{Simulation Results}
\label{sec:simulations}
In this section, we describe the simulation setup and present experimental results comparing ZK-HybridFL with Blade-FL and ChainFL. A comprehensive analysis between ZK-HybridFL and ChainFL from both learning and distributed-ledger perspectives is provided in the supplementary material (\cref{subsec:comparison_chainfl}).

\subsection{Simulation Setup}
To simulate Blade-FL and ChainFL, we use their public implementations from the respective GitHub repositories~\cite{gggg1,gggg2}, with minor modifications to match our experimental protocol. For ZK-HybridFL, the federated learning (FL) process is built using TensorFlow Federated (TFF)~\cite{gggg3} to simulate local model training across distributed nodes. Training with a mini-batch of size \(B\) consists of \(R\) stochastic gradient descent (SGD) iterations per FL epoch, inducing heterogeneous compute loads across nodes.

For decentralized coordination, the blockchain architecture integrates a directed acyclic graph (DAG) ledger, simulated using the GoShimmer framework~\cite{gggg4}, to manage parallelized model-update submissions. A Substrate-based sidechain~\cite{gggg5} hosts the core smart contracts (cf.\ \cref{sec:edsc}) for ZKP verification, model aggregation, challenge resolution, and reward distribution. WebAssembly (Wasm)-based~\cite{gggg6} contracts ensure low-overhead execution of this logic. DAG–sidechain communication is implemented via gRPC~\cite{gggg7}, and Apache Kafka~\cite{gggg8} serves as a message broker for the event-driven architecture. ZKPs are generated using EZKL~\cite{gggg9} and zkML~\cite{gggg10}.

We denote \(n\) as the total number of nodes, \(\gamma\) as the percentage of lazy nodes, and \(\mu\) as the percentage of adversarial nodes. The corresponding counts are \([\!n\gamma]\) and \([\!n\mu]\), respectively, where \([\cdot]\) denotes rounding. Lazy nodes in all three schemes simply resubmit a model update from one or more past epochs instead of retraining.

Adversarial nodes are modeled as follows. In Blade-FL, the adversarial set collectively controls \(51\%\) of the proof-of-work (PoW) hashing power, dominating block creation and ensuring that their noisy updates are predominantly used in global aggregation. The parameter \(\mu\) controls \emph{how many} nodes are adversarial, but the group as a whole always holds \(51\%\) of the hash power. Thus, smaller \(\mu\) yields fewer but more powerful adversarial nodes, whereas larger \(\mu\) spreads the same total PoW across more attackers. In ChainFL and ZK-HybridFL, adversaries (i) add structured noise to their locally trained updates and (ii) conduct orphanage attacks by repeatedly selecting their own or colluding blocks as parents in the DAG, thereby boosting the aggregated weight of adversarial blocks.

Unless otherwise stated, the mini-batch size is \(B=50\), the number of local SGD iterations per epoch is \(R=5\) with learning rate \(\eta=0.01\), and each submitted block in ZK-HybridFL selects \(K_{V}=4\) parent blocks.

To emulate realistic network conditions, nodes are assigned heterogeneous bandwidths in the range \(\SI{10}{Mbps}\)–\(\SI{50}{Mbps}\) and latencies in the range \(\SI{50}{\milli\second}\)–\(\SI{200}{\milli\second}\). These parameters are intrinsic inputs to the simulator and are chosen to reflect typical wide-area settings.

\subsection{Learning Tasks}
\subsection*{Task~1: Image Classification}
Task~1 uses the MNIST dataset with \(70{,}000\) grayscale images of size \(28 \times 28\) across ten classes (digits \(0\)–\(9\)). A lightweight convolutional neural network (CNN) based on MobileNetV2~\cite{gggg12} is used. The evaluation metric is classification accuracy, i.e., the fraction of correctly classified test samples.

For Blade-FL and ChainFL, \(10{,}000\) images are reserved as a public validation dataset, and the remaining \(60{,}000\) images are evenly distributed among the \(n\) nodes. Each node splits its local data into \(80\%\) for training and \(20\%\) for testing. In ZK-HybridFL, there is no public validation dataset; instead, all \(70{,}000\) images are evenly split across nodes, again with an \(80\%\)–\(20\%\) train–test split, and \(20\%\) of each node’s local test data is further designated as a private inference batch for ZKP generation. All reported test accuracies are computed on the aggregated local test sets of each scheme; the public datasets (Blade-FL, ChainFL) and private inference batches (ZK-HybridFL) are used only for validation, not for final evaluation.

\subsection*{Task~2: Text Sentiment Analysis}
Task~2 is a next-word prediction task using a gated recurrent unit (GRU)-based language model trained on the Penn Treebank dataset with \(345{,}526\) tokens. The model outputs a probability distribution over the vocabulary for each next word. Performance is evaluated via perplexity,
\[
  \mathrm{Perplexity}
  = \exp\!\Bigl(
    -\tfrac{1}{N}\sum_{i=1}^{N} \log p_{y_i}
  \Bigr),
\]
where \(p_{y_i}\) is the predicted probability of the true next word \(y_i\) at position \(i\), and \(N\) is the total number of predictions. Lower perplexity indicates better predictions.

For Blade-FL and ChainFL, \(45{,}526\) tokens are reserved as a public validation dataset, and the remaining \(300{,}000\) tokens are evenly distributed among the \(n\) nodes. Each node uses \(80\%\) of its tokens for local training and \(20\%\) for local testing. In ZK-HybridFL, each node additionally marks \(20\%\) of its local test subset as a private inference batch for ZKP generation.

\subsection{Results and Analysis}
\label{sec:results}

\subsubsection{Learning Perspectives}

\subsubsection*{\textbf{FL training convergence}}
\label{subsec:simulation_learning}
\Cref{training_loss} shows the training-loss trajectories of Blade-FL, ChainFL, and ZK-HybridFL in a network with \(n=15\) nodes, \(\mu=20\%\) adversarial nodes, and \(\gamma=10\%\) lazy nodes. Blade-FL achieves moderate loss reduction in early epochs but ultimately fails to converge. Once adversarial nodes effectively control \(51\%\) of the PoW computation, they dominate block creation, and their noisy updates increasingly contaminate the global model, preventing convergence to a low-loss solution.

ChainFL converges better than Blade-FL due to its DAG-based and sharded design. However, it remains inferior to ZK-HybridFL. As discussed from the ledger perspective in \cref{subsubsec:ledger_perspective}, ChainFL’s dependence on a public reference dataset for validation makes it vulnerable to lazy nodes that resubmit stale updates which still pass the validation threshold. Furthermore, adversarial nodes can influence tip-based selection on the DAG and introduce subtle corruptions into the global model, preventing convergence to very low loss.

ZK-HybridFL, by contrast, leverages ZKPs and a loss-aware DAG policy (cf.\ \cref{subsec:dag}) to ensure that only freshly trained, correctly validated updates contribute to aggregation. Even in the presence of adversarial and lazy behavior, invalid or stale updates are systematically filtered. This is reflected in the consistently decreasing loss curve of ZK-HybridFL, which indicates steady progress toward a high-quality model.

\subsubsection*{\textbf{Invalid model detection}}
\Cref{detection} reports the number of invalid models detected over time for the same setting as \cref{training_loss}. A model is considered invalid if it fails to satisfy the scheme-specific inclusion criteria for global aggregation. In Blade-FL, an update is invalid if, after being broadcast and signed, it is never included in a mined block. In ChainFL, an update is invalid if it is not selected as a DAG parent within a prescribed staleness window, rendering it permanently ineligible for future parent selection. In ZK-HybridFL, a model is invalid if it is revoked via the challenge mechanism (cf.\ \cref{subsection:challenge}).

The horizontal reference line in \cref{detection} corresponds to an ideal system that perfectly detects all invalid models. Blade-FL performs worst: many invalid updates remain undetected or are detected late. ChainFL performs better but still lags significantly behind ZK-HybridFL.

ZK-HybridFL initially flags fewer invalid models than the other two schemes, because a formal revocation requires completion of the challenge procedure and oracle adjudication. Thus, a model may be added in one epoch and only revoked several epochs later. Over time, however, ZK-HybridFL “catches up,” and the cumulative number of revoked models approaches the ideal line. Combined with \cref{training_loss}, this behavior confirms the learning-theoretic analysis in \cref{subsubsec:learning_perspective}: by ensuring that only valid, high-quality updates influence the global model, ZK-HybridFL achieves both faster convergence and stronger robustness in the presence of adversarial and lazy nodes.

\begin{figure}[!t]
    \centering
    \subfloat[Task~1]{%
        \includegraphics[width=\columnwidth]{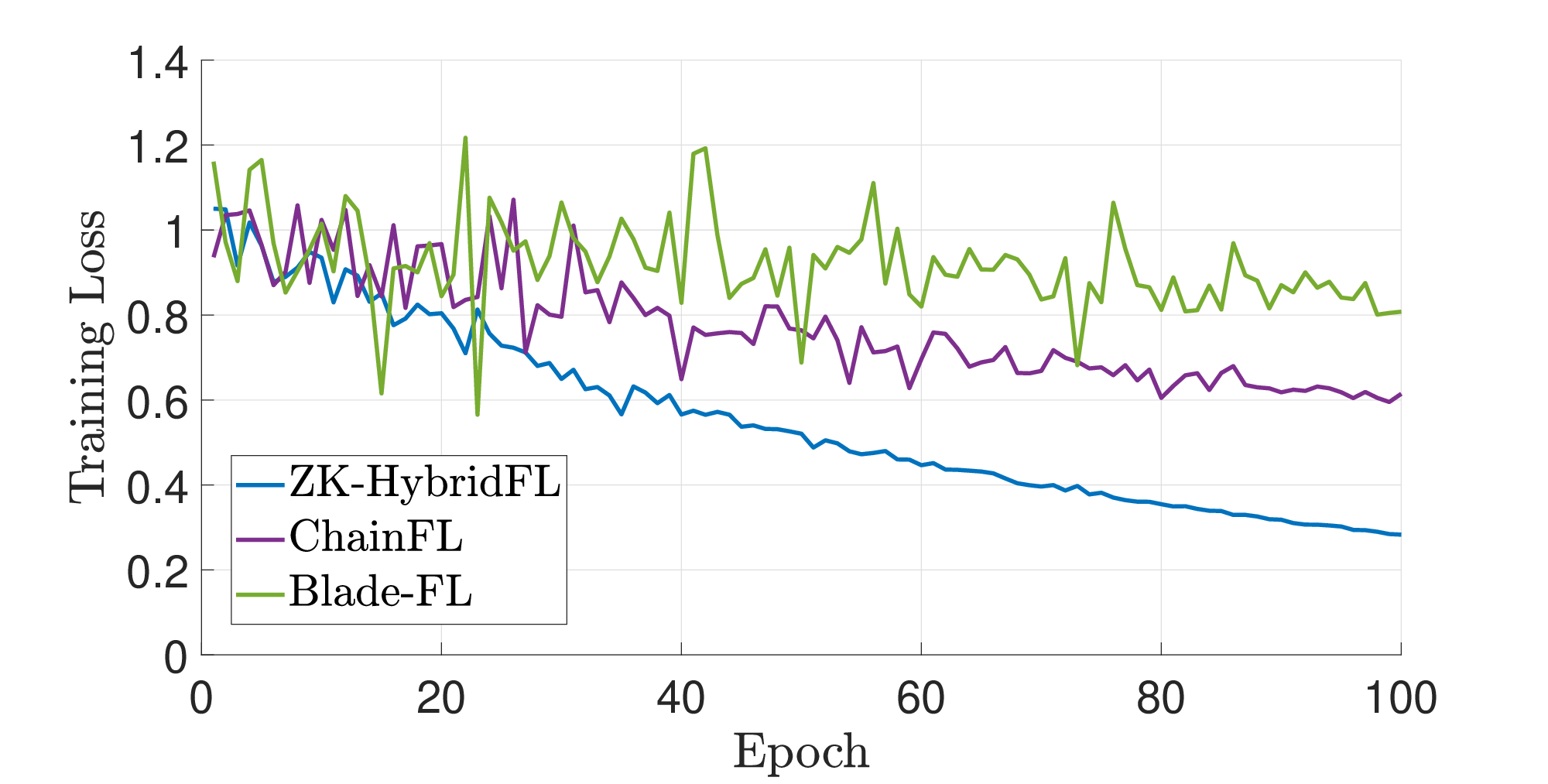}%
        \label{fff_1}%
    }\\[-0.25\baselineskip]
    \subfloat[Task~2]{%
        \includegraphics[width=\columnwidth]{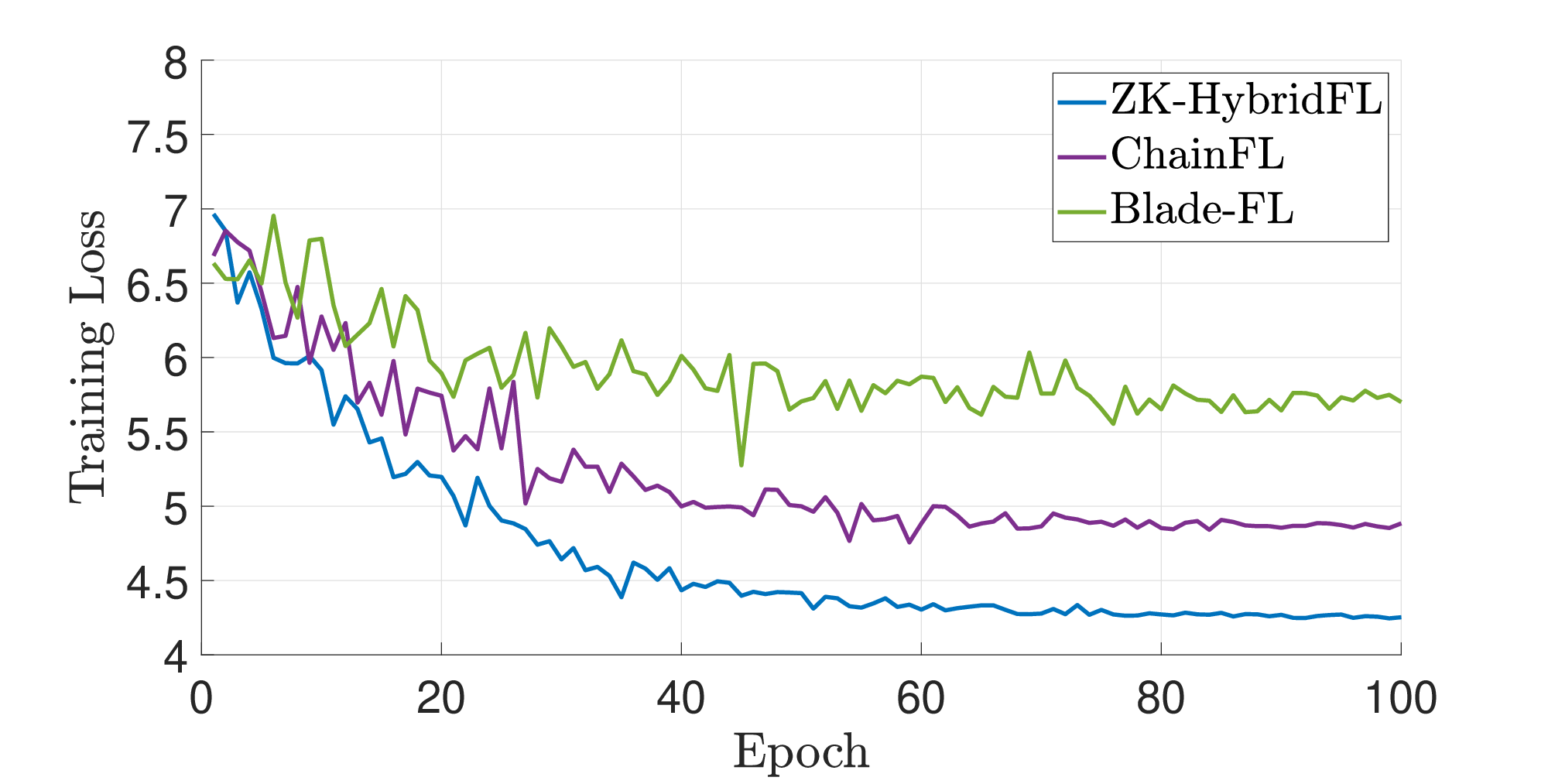}%
        \label{fff_2}%
    }
    \caption{Training loss of Blade-FL, ChainFL, and ZK-HybridFL for \(n=15\), \(\mu=20\%\) adversaries, \(\gamma=10\%\) lazy nodes, \(R=5\), and \(B=50\).}
    \label{training_loss}
\end{figure}

\begin{figure}[!t]
    \centering
    \subfloat[Task~1]{%
        \includegraphics[width=\columnwidth]{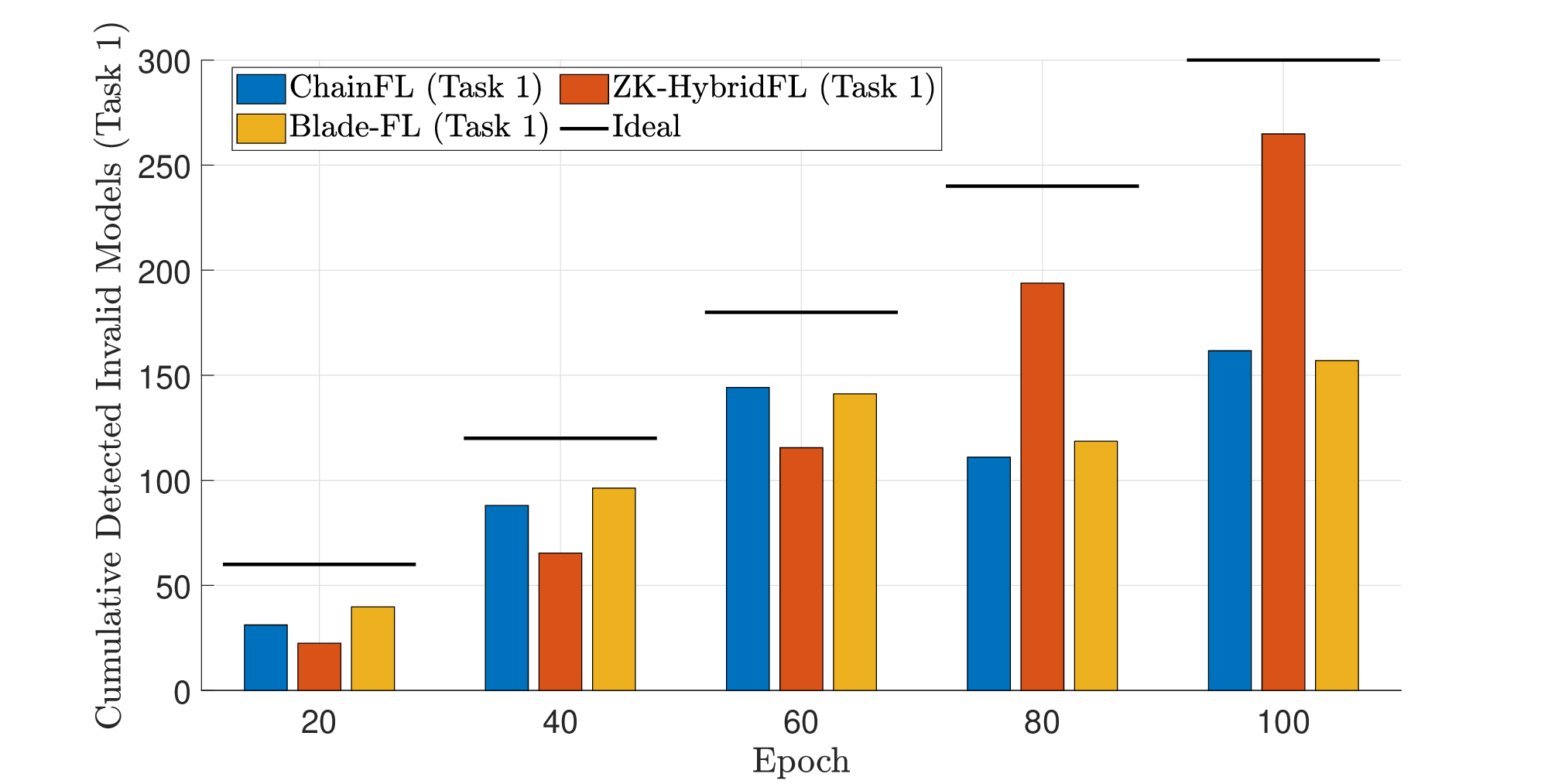}%
        \label{fff_3_1}%
    }\\[-0.25\baselineskip]
    \subfloat[Task~2]{%
        \includegraphics[width=\columnwidth]{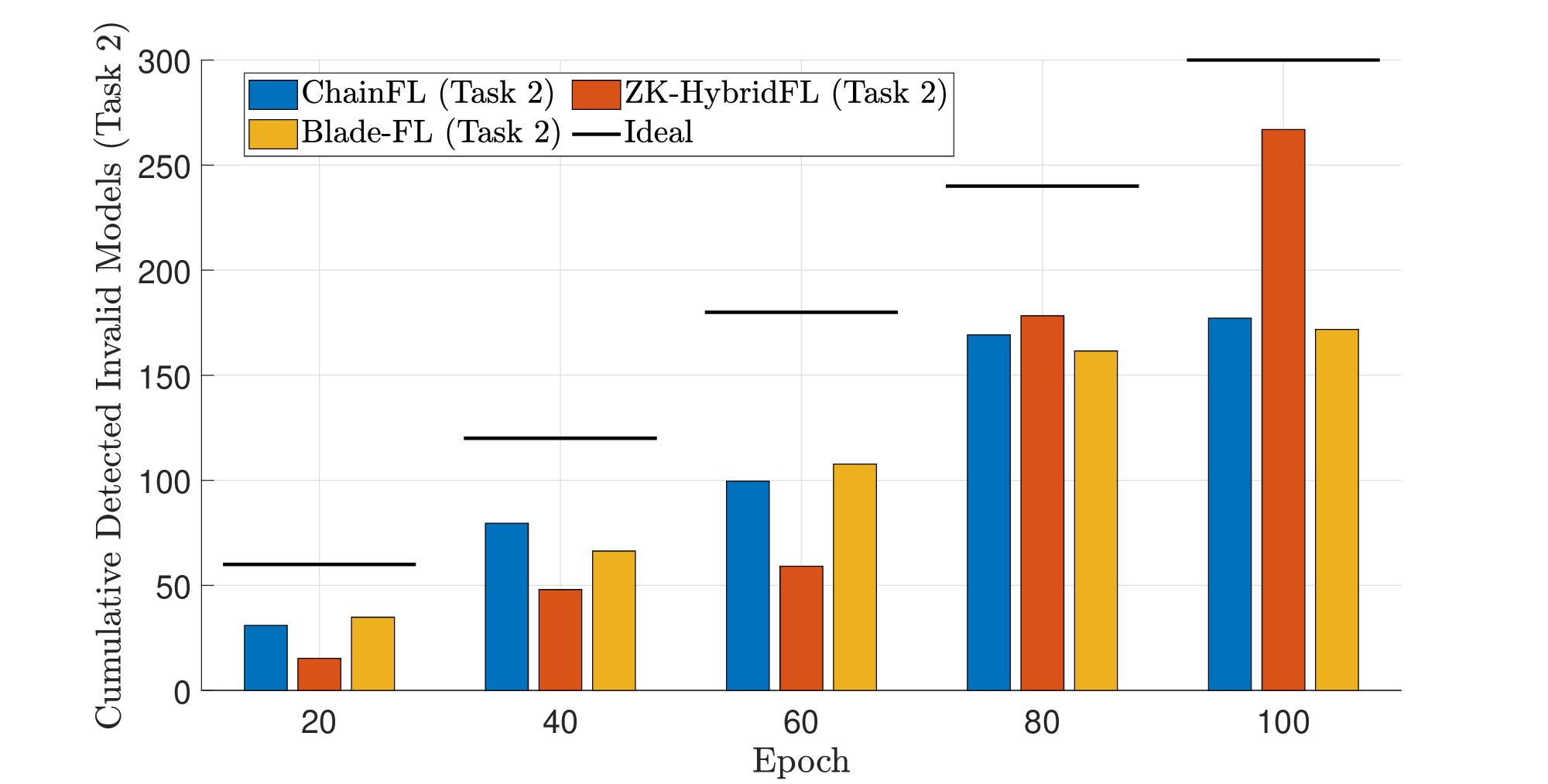}%
        \label{fff_3_2}%
    }
    \caption{Number of detected invalid models over training epochs, corresponding to \cref{training_loss}.}
    \label{detection}
\end{figure}

\subsubsection*{\textbf{Model performance vs.\ number of nodes}}
\Cref{nodes_vary} plots test accuracy (Task~1) and perplexity (Task~2) as the number of nodes \(n\) ranges from \(5\) to \(30\), with \(\mu=\gamma=15\%\) fixed. For Task~1, ZK-HybridFL rapidly improves from accuracy \(0.80\) at \(n=5\) to \(0.90\) at \(n=10\) and \(0.95\) at \(n=15\), then saturates near \(0.98\)–\(0.99\) for \(n\ge 20\). This steep improvement demonstrates that the ZKP-gated validation pipeline successfully exploits larger networks: additional honest nodes yield more high-quality updates, while adversarial and lazy contributions are suppressed.

ChainFL’s accuracy increases more slowly, from \(0.55\) at \(n=5\) to \(0.70\) at \(n=30\). Its architecture partially dilutes malicious contributions, but the use of a public dataset for validation still allows stale or subtly corrupted updates to pass the threshold, limiting the benefits of larger network sizes.

Blade-FL suffers as the network grows: accuracy starts at \(0.45\) with \(5\) nodes and drops to \(0.33\) at \(30\) nodes, highlighting the fragility of PoW-based consensus under adversarial conditions.

Task~2 exhibits an analogous pattern. ZK-HybridFL consistently achieves the lowest perplexity, decreasing from \(145.73\) at \(n=5\) to \(117.67\) at \(n=30\). ChainFL improves from \(218.59\) to \(167.08\) over the same range, while Blade-FL degrades from \(284.17\) to \(406.89\). These results reinforce that only ZK-HybridFL fully capitalizes on larger networks by ensuring that the additional capacity is translated into higher-quality, validated updates.

\begin{figure}[!t]
    \centering
    \subfloat[Task~1]{%
        \includegraphics[width=\columnwidth]{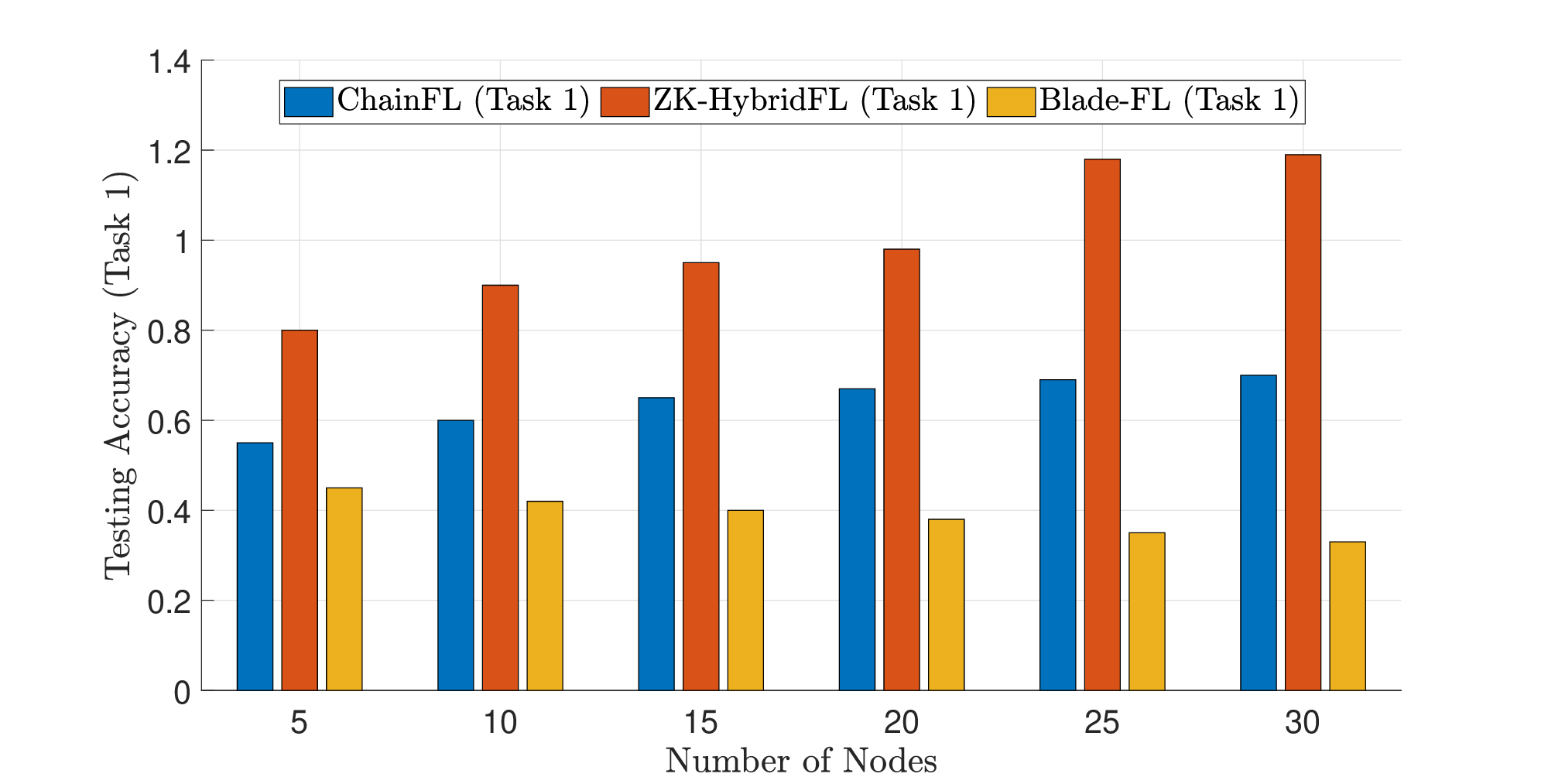}%
        \label{fff_4_1}%
    }\\[-0.25\baselineskip]
    \subfloat[Task~2]{%
        \includegraphics[width=\columnwidth]{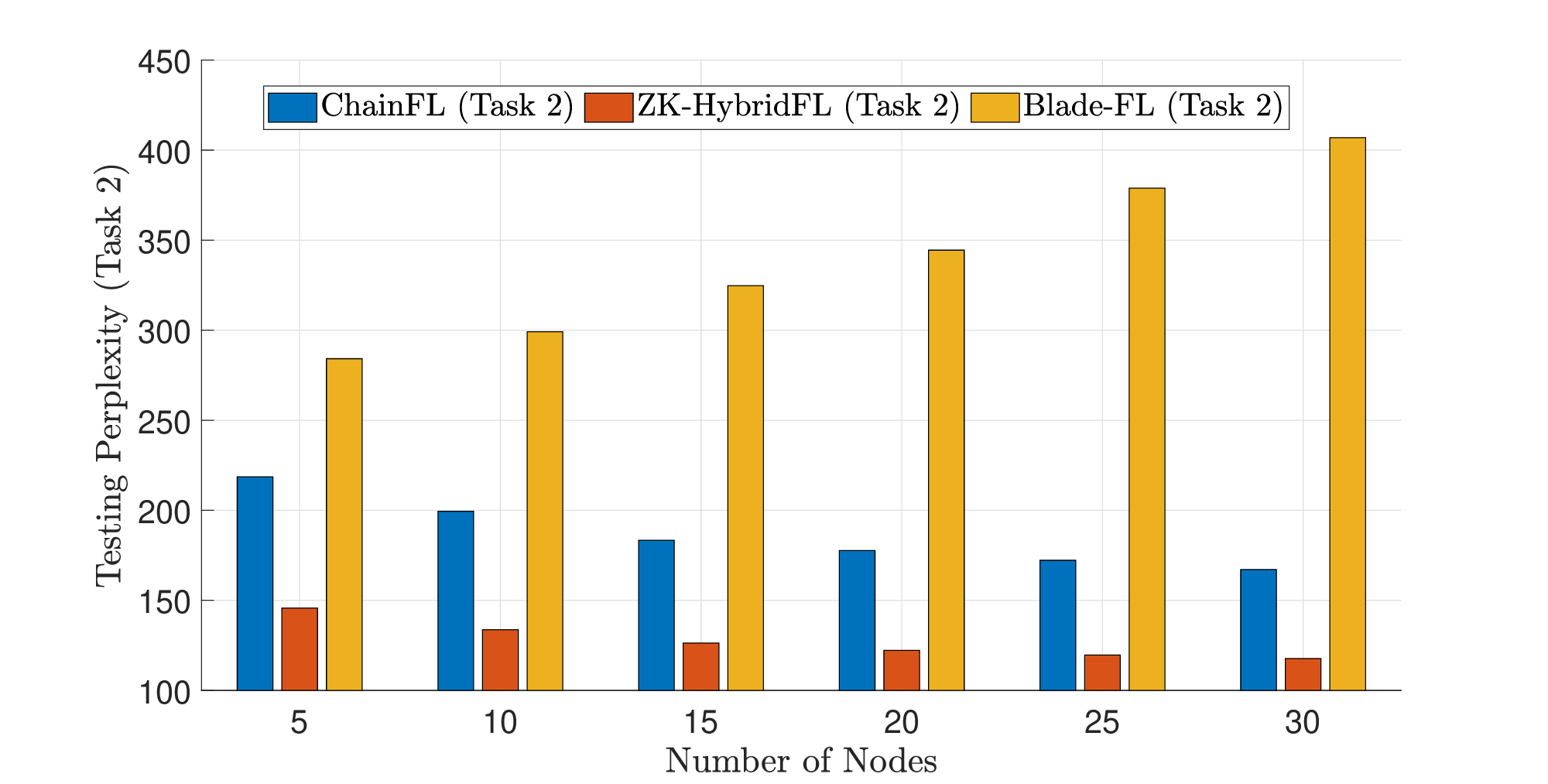}%
        \label{fff_4_2}%
    }
    \caption{Model performance versus number of nodes \(n\) for ZK-HybridFL, ChainFL, and Blade-FL with \(\mu=\gamma=15\%\).}
    \label{nodes_vary}
\end{figure}

\subsubsection*{\textbf{Model performance vs.\ percentage of adversarial nodes}}
\Cref{adversary_vary} shows test accuracy (Task~1) and perplexity (Task~2) as the adversarial ratio \(\mu\) increases from \(5\%\) to \(30\%\), with \(n=15\) and \(\gamma=15\%\) fixed. For Task~1, ChainFL’s accuracy drops from \(0.70\) at \(\mu=5\%\) to \(0.50\) at \(\mu=30\%\), reflecting its vulnerability to adversaries that exploit the public validation dataset and tip-based aggregation. Blade-FL starts at \(0.50\) and degrades sharply to \(0.30\), highlighting the susceptibility of PoW-based consensus to majority-hash attacks.

ZK-HybridFL exhibits strong robustness, maintaining accuracy near \(0.99\) at \(\mu=5\%\) and still achieving \(0.88\) at \(\mu=30\%\). The ZKP-gated inference validation and loss-aware DAG aggregation jointly prevent adversarial nodes from quietly introducing corrupted updates into the global model.

For Task~2, the same qualitative trends hold. ZK-HybridFL retains low perplexity, increasing only from \(117.18\) to \(120.92\) as \(\mu\) grows from \(5\%\) to \(30\%\). ChainFL’s perplexity worsens from \(174.87\) to \(218.59\), and Blade-FL’s from \(270.64\) to \(378.89\). These results underscore the benefit of ZK-HybridFL’s dual-layer validation over the mechanisms employed in Blade-FL and ChainFL.

\begin{figure}[!t]
    \centering
    \subfloat[Task~1]{%
        \includegraphics[width=\columnwidth]{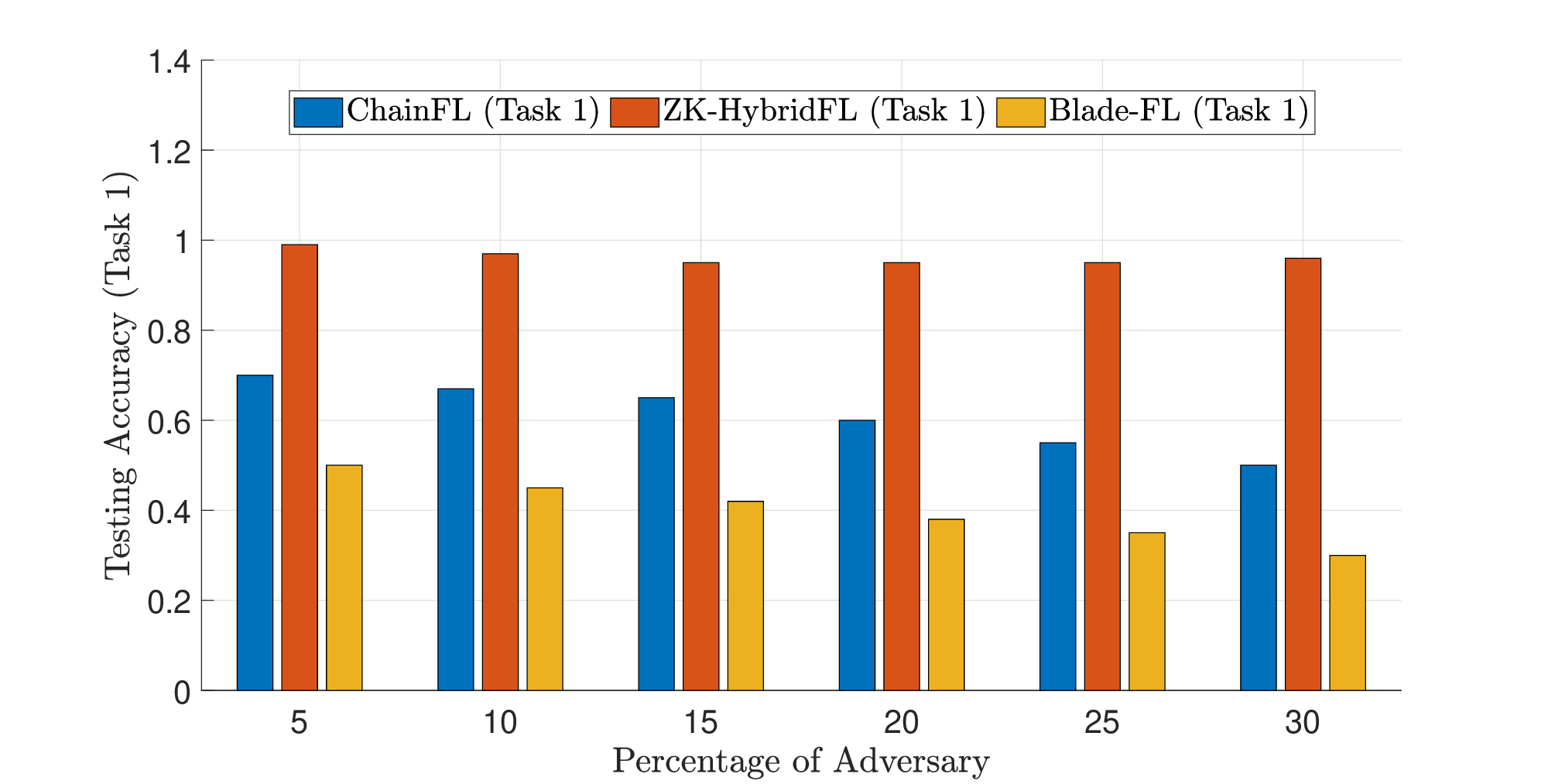}%
        \label{fff_5_1}%
    }\\[-0.25\baselineskip]
    \subfloat[Task~2]{%
        \includegraphics[width=\columnwidth]{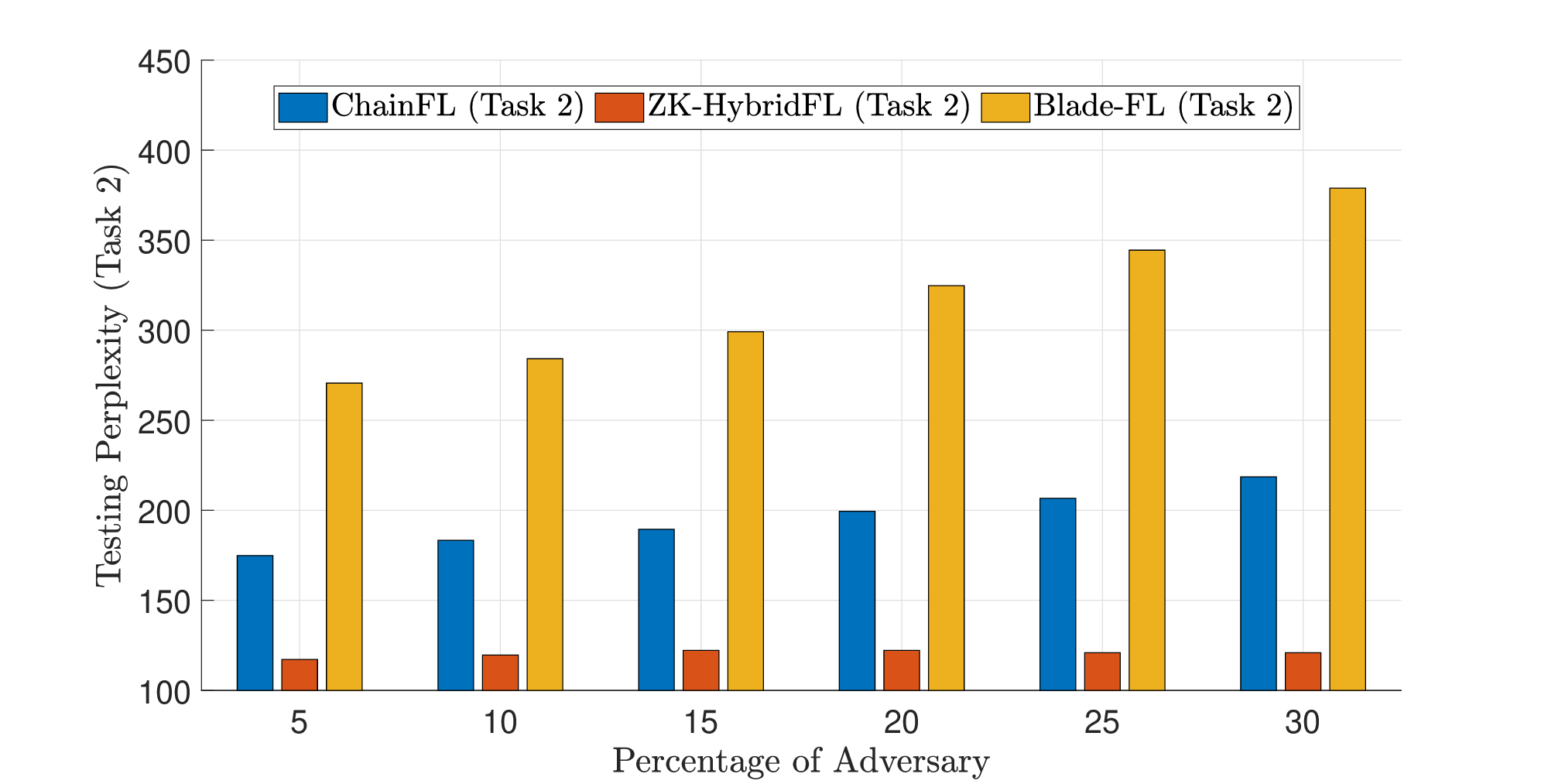}%
        \label{fff_5_2}%
    }
    \caption{Model performance versus adversarial ratio \(\mu\) for ZK-HybridFL, ChainFL, and Blade-FL with \(n=15\) and \(\gamma=15\%\).}
    \label{adversary_vary}
\end{figure}

\subsubsection*{\textbf{Model performance vs.\ percentage of lazy nodes}}
\Cref{lazy_vary} reports test accuracy (Task~1) and perplexity (Task~2) as the lazy-node ratio \(\gamma\) increases from \(5\%\) to \(30\%\), with \(n=15\) and \(\mu=15\%\) fixed. Lazy nodes resubmit past models without retraining, polluting aggregation with stale information.

For Task~1, ChainFL’s accuracy decreases from \(0.73\) to \(0.60\), and Blade-FL’s from \(0.55\) to \(0.42\), as \(\gamma\) increases. ZK-HybridFL remains highly accurate, dropping only from \(0.99\) to \(0.94\), owing to its ability to detect and discard stale updates.

For Task~2, ChainFL’s perplexity increases from \(169.65\) to \(199.42\) as \(\gamma\) grows, since stale submissions can still satisfy the public validation threshold. Blade-FL, already stressed by adversarial behavior, degrades from \(241.85\) to \(307.21\). In contrast, ZK-HybridFL’s committed-hash comparisons on the sidechain identify duplicate or outdated submissions, keeping perplexity low, with only a mild increase from \(117.18\) to \(123.55\).

These experiments further confirm that ZK-HybridFL’s sidechain-based validation and challenge mechanisms effectively prevent stale and malicious updates from influencing the global model. Additional experiments, including detailed ledger latency/throughput scaling, stake-weight ablations, and extended robustness studies, are provided in \cref{very_tired}.

\begin{figure}[!t]
    \centering
    \subfloat[Task~1]{%
        \includegraphics[width=\columnwidth]{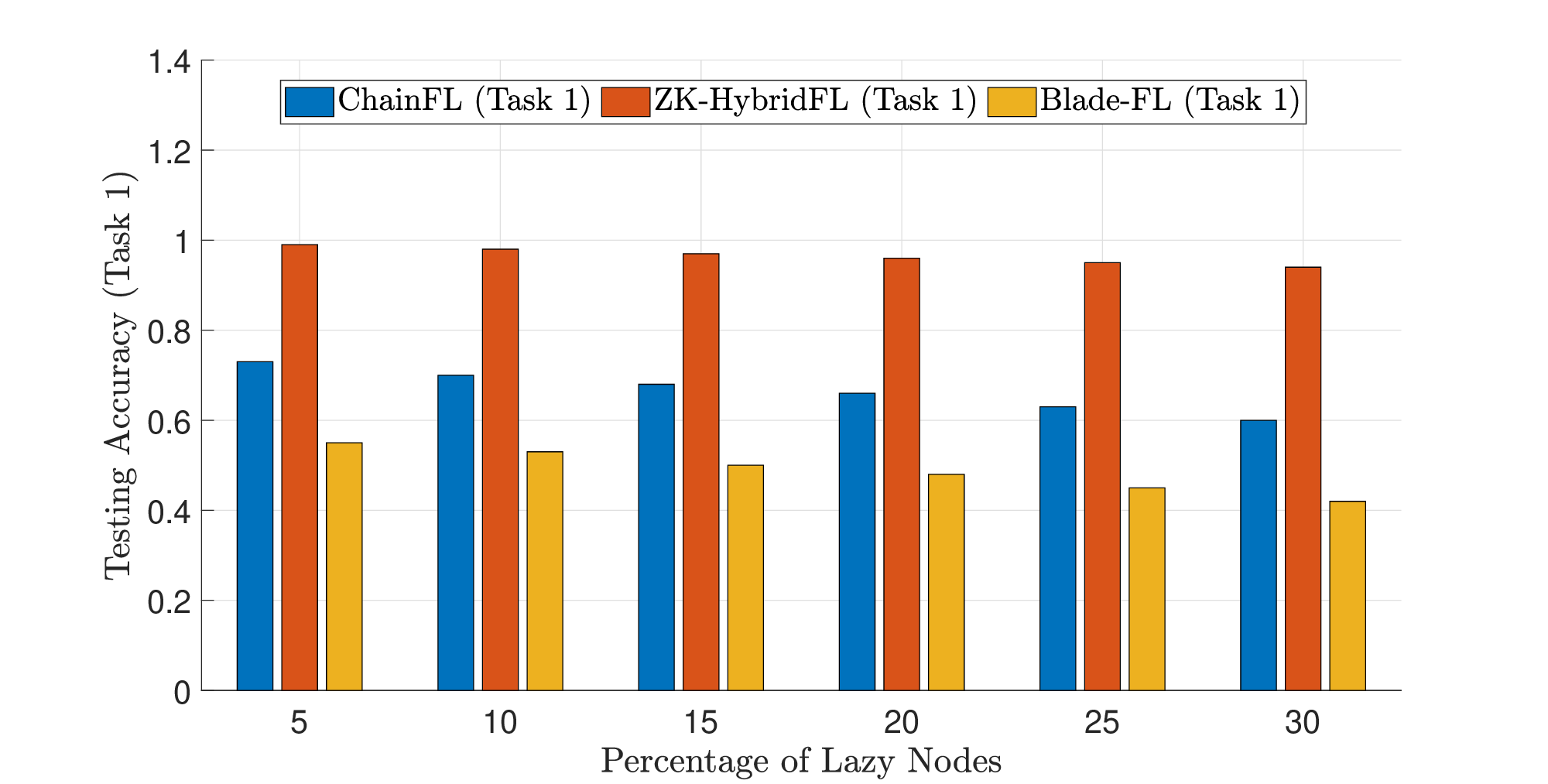}%
        \label{fff_6_1}%
    }\\[-0.25\baselineskip]
    \subfloat[Task~2]{%
        \includegraphics[width=\columnwidth]{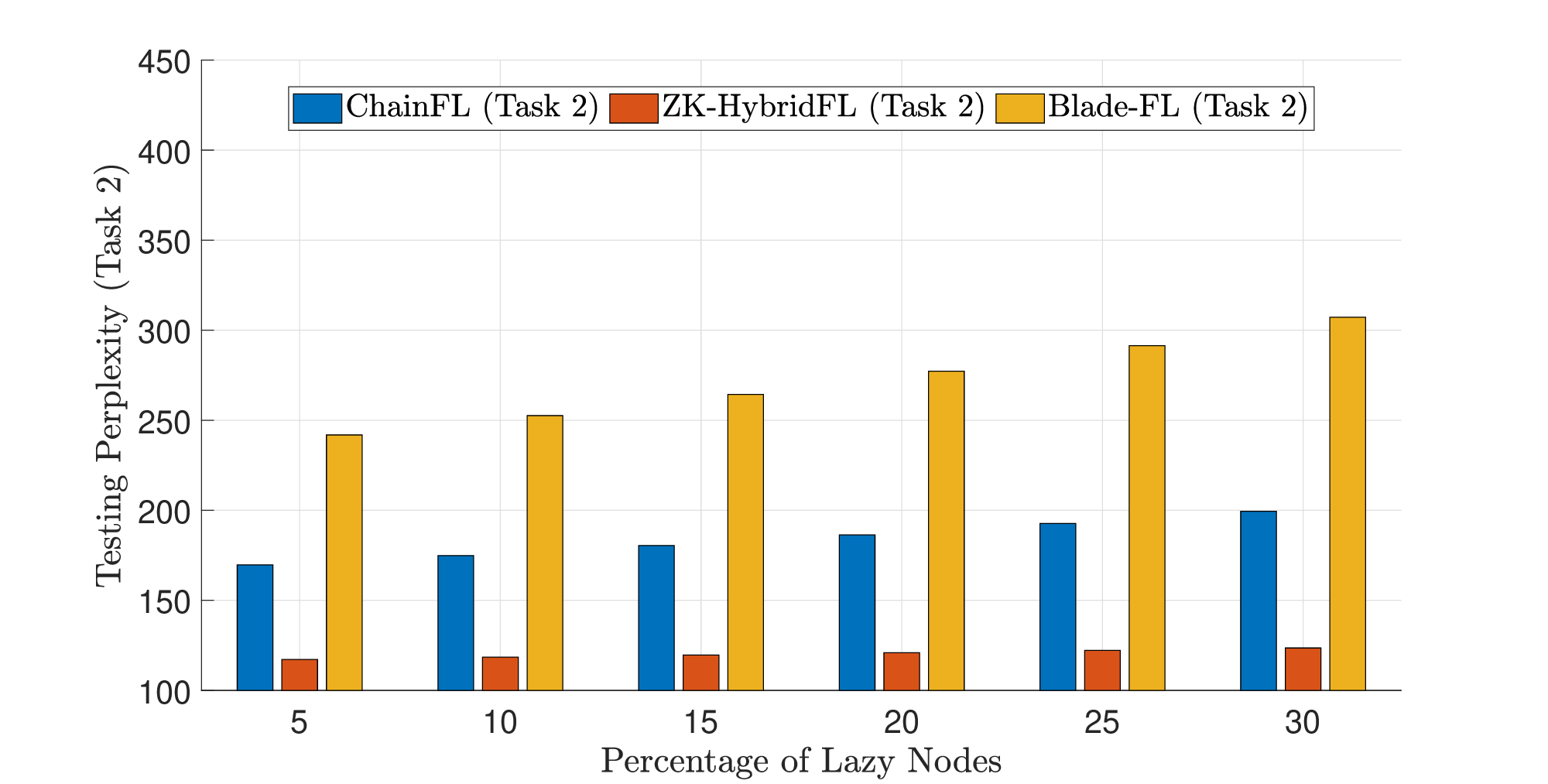}%
        \label{fff_6_2}%
    }
    \caption{Model performance versus lazy-node ratio \(\gamma\) for ZK-HybridFL, ChainFL, and Blade-FL with \(n=15\) and \(\mu=15\%\).}
    \label{lazy_vary}
\end{figure}

\subsubsection{Ledger Perspective}
\label{subsubsec:ledger_perspective}

\subsubsection*{\textbf{Latency and throughput}}
\label{tired_V1}
We measure \emph{latency} as the average time to complete one global update round, from the start of local training until the corresponding block is integrated into the ledger. \emph{Throughput} is the number of successful global update rounds completed by the network per minute; lower latency directly translates into higher throughput.

In ZK-HybridFL, latency is dominated by (i) local training and proof generation in \cref{subsec:stage1}, (ii) parent selection and DAG updates in \cref{subsec:stage2}, and (iii) occasional challenge resolution in \cref{subsec:stage3}. SNARK verification and sidechain contract execution are lightweight (\cref{sec:zkp-cost}), so proof checking adds only milliseconds per tip model. The predict–then–prove workflow overlaps most proof generation with subsequent training, further hiding ZKP costs. In ChainFL, latency is mainly driven by Raft-style consensus within each shard, whereas in Blade-FL it is dominated by the time spent solving proof-of-work (PoW) puzzles for block mining.

\Cref{latency,throughput} summarize latency and throughput as the network scales from \(5\) to \(30\) nodes under \(\mu=20\%\) adversaries and \(\gamma=10\%\) lazy nodes. For Task~1, ZK-HybridFL maintains the lowest latency across all sizes (roughly \(7.7\)–\(8.9\,\text{s}\)), compared with \(8.2\)–\(9.9\,\text{s}\) for ChainFL and \(9.1\)–\(10.2\,\text{s}\) for Blade-FL. Task~2, which uses a heavier GRU model, exhibits higher latencies overall, but ZK-HybridFL again remains fastest (about \(45.5\)–\(46.3\,\text{s}\)) versus \(51.4\)–\(52.3\,\text{s}\) for ChainFL and \(51.5\)–\(52.8\,\text{s}\) for Blade-FL.

These latency gains translate into higher throughput. As \Cref{throughput} shows, ZK-HybridFL sustains the largest number of completed update rounds per minute for both tasks. The sidechain-based event-driven contracts and low-cost ZKP verification allow the DAG to remain uncongested, whereas ChainFL suffers from cross-shard synchronization and Blade-FL from PoW overhead as the network grows.

\begin{figure}[!t]
    \centering
    \subfloat[Task~1]{%
        \includegraphics[width=\columnwidth]{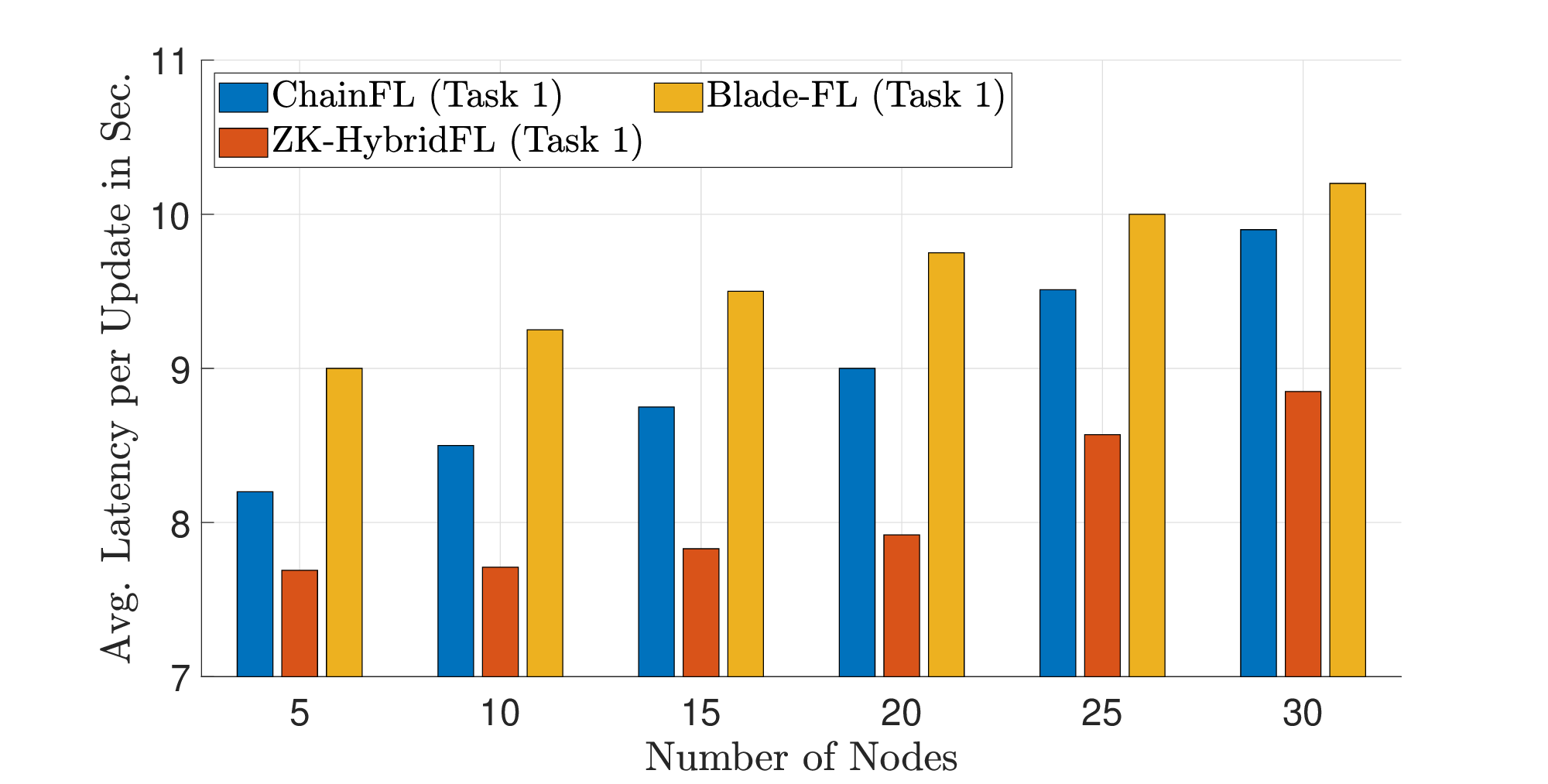}%
        \label{fff_7_1}%
    }\\[-0.25\baselineskip]
    \subfloat[Task~2]{%
        \includegraphics[width=\columnwidth]{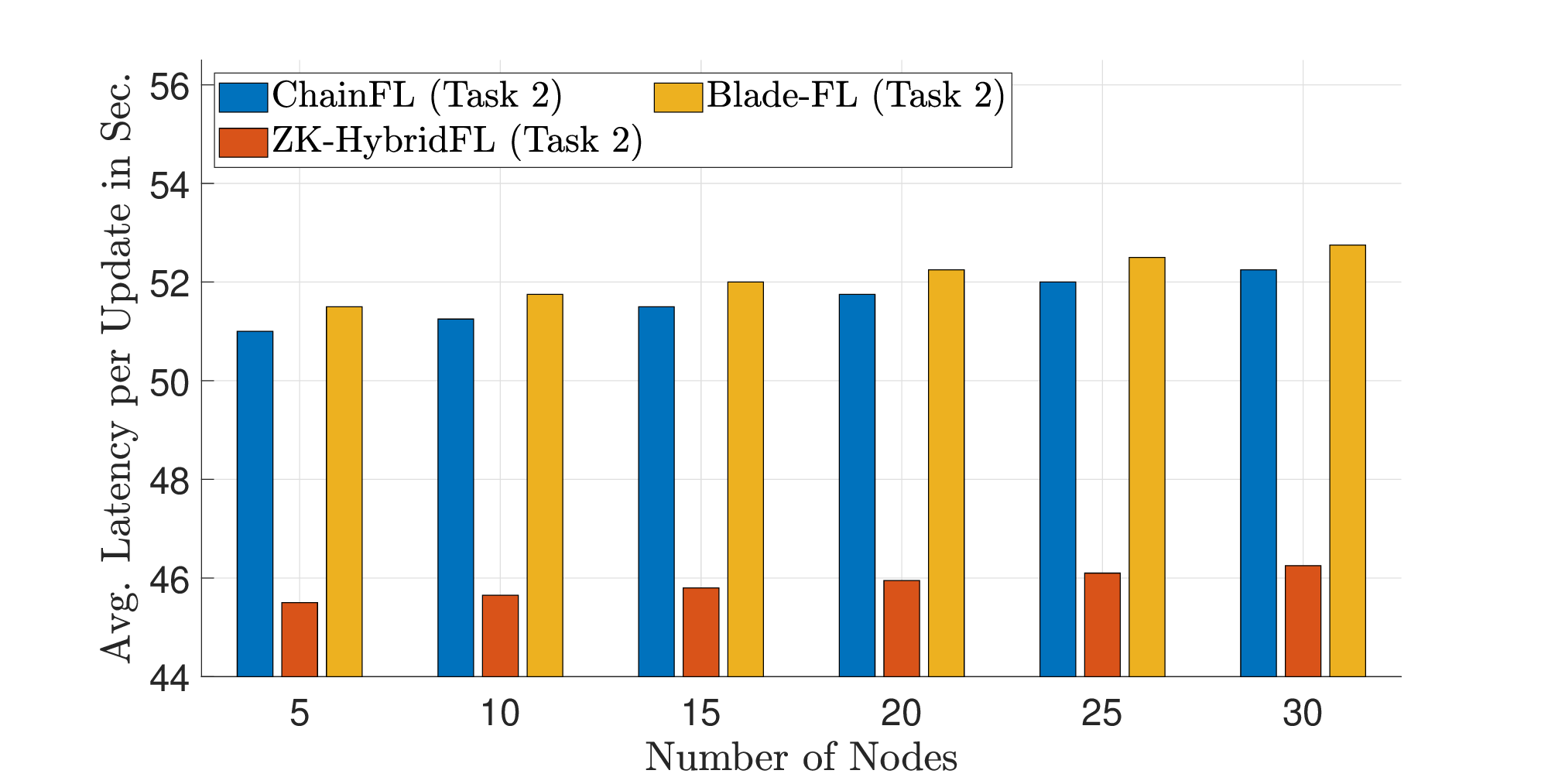}%
        \label{fff_7_2}%
    }
    \caption{Latency of Blade-FL, ChainFL, and ZK-HybridFL versus the number of nodes \(n\) with \(\mu=20\%\) adversarial nodes and \(\gamma=10\%\) lazy nodes.}
    \label{latency}
\end{figure}

\begin{figure}[!t]
    \centering
    \subfloat[Task~1]{%
        \includegraphics[width=\columnwidth]{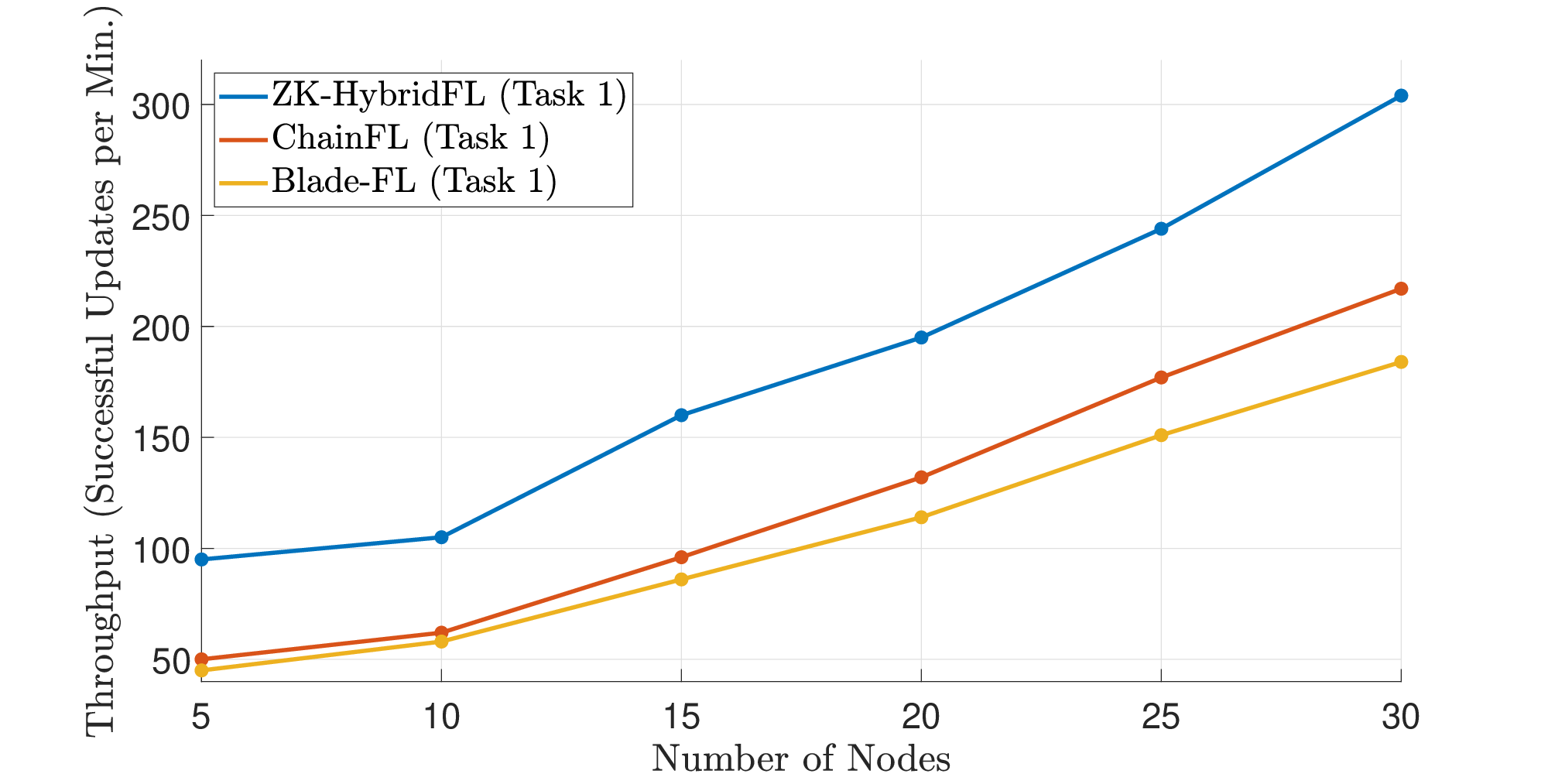}%
        \label{fff_9_1}%
    }\\[-0.25\baselineskip]
    \subfloat[Task~2]{%
        \includegraphics[width=\columnwidth]{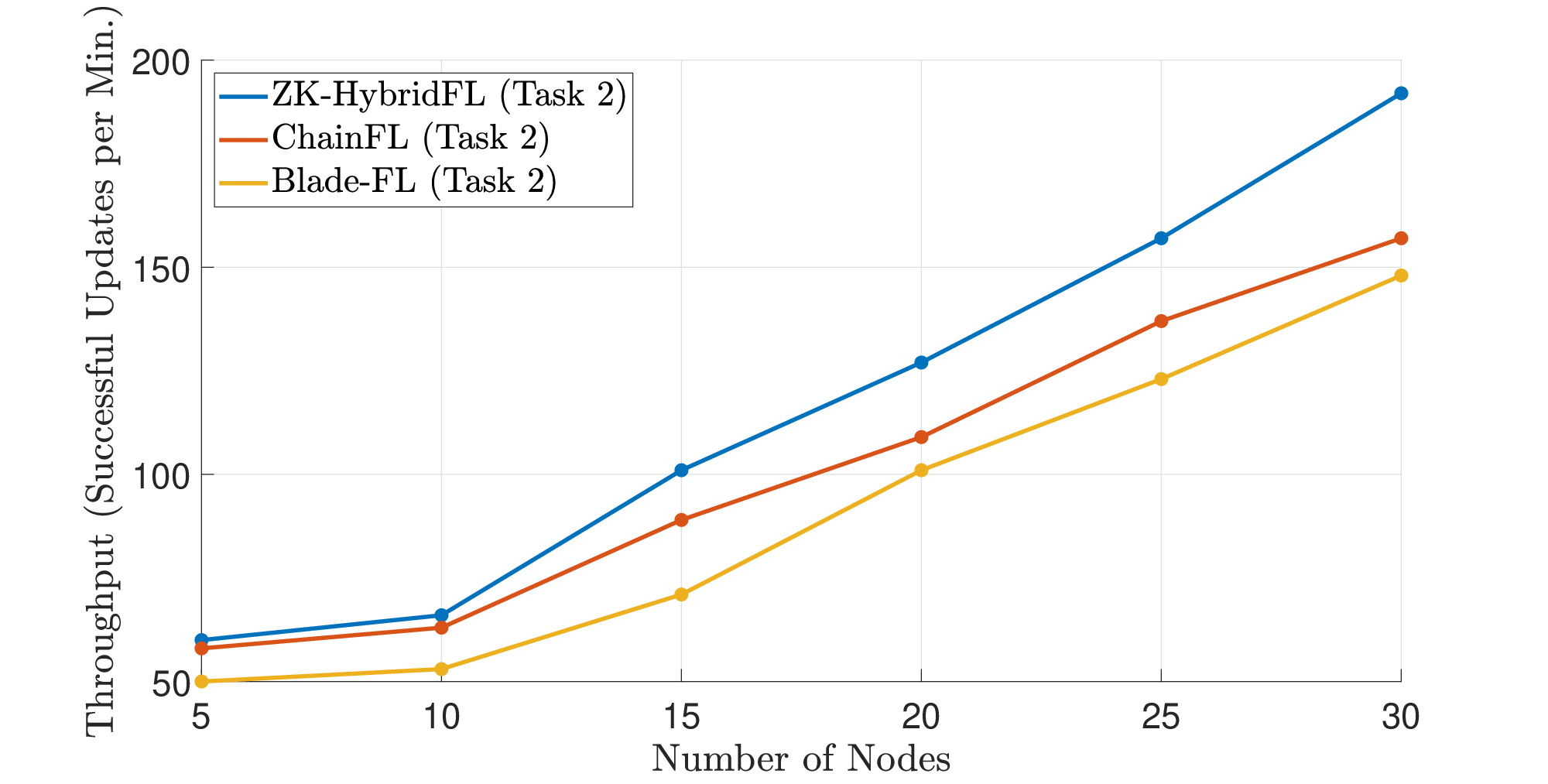}%
        \label{fff_9_2}%
    }
    \caption{Throughput of Blade-FL, ChainFL, and ZK-HybridFL versus the number of nodes \(n\) with \(\mu=20\%\) adversarial nodes and \(\gamma=10\%\) lazy nodes.}
    \label{throughput}
\end{figure}

\subsubsection*{\textbf{Scalability}}
We quantify scalability as the number of global epochs required for convergence. A global epoch is one full cycle in which all nodes perform local training and propagate their updates through the network. Let \(\mathcal L^t\) denote the training loss at epoch \(t\). Convergence is declared when
\begin{equation}
  \label{eq:conv-criterion}
  \frac{|\mathcal L^t - \mathcal L^{t-1}|}{\mathcal L^{t-1}} < \epsilon
\end{equation}
holds for five consecutive epochs, with \(\epsilon = 10^{-3}\).

\Cref{Scalability} reports the number of epochs required to meet \cref{eq:conv-criterion} as a function of \(n\) under \(\mu=20\%\) adversarial and \(\gamma=10\%\) lazy nodes. For both tasks, ZK-HybridFL converges in the fewest epochs, ChainFL is intermediate, and Blade-FL requires the most. As \(n\) increases, adversarial and lazy behavior compounds the weaknesses of Blade-FL’s PoW mechanism and ChainFL’s tip-based selection, forcing additional “recovery” epochs after corrupted updates. ZK-HybridFL, by pruning tainted or stale blocks via its challenge mechanism and loss-aware parent selection, avoids repeatedly training on compromised global models and thus reaches the convergence threshold significantly sooner.

\begin{figure}[!t]
    \centering
    \subfloat[Task~1]{%
        \includegraphics[width=\columnwidth]{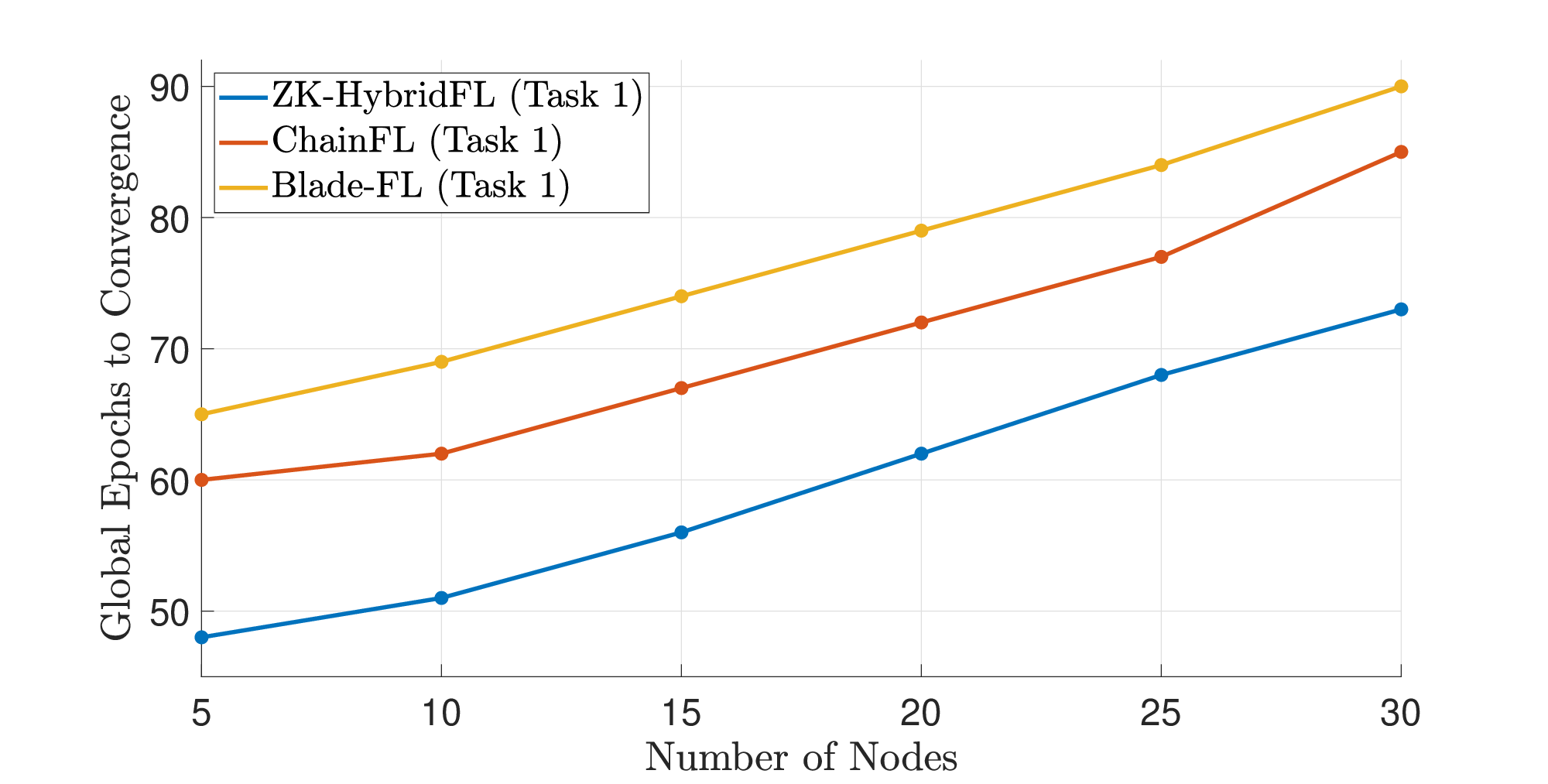}%
        \label{fff_8_1}%
    }\\[-0.25\baselineskip]
    \subfloat[Task~2]{%
        \includegraphics[width=\columnwidth]{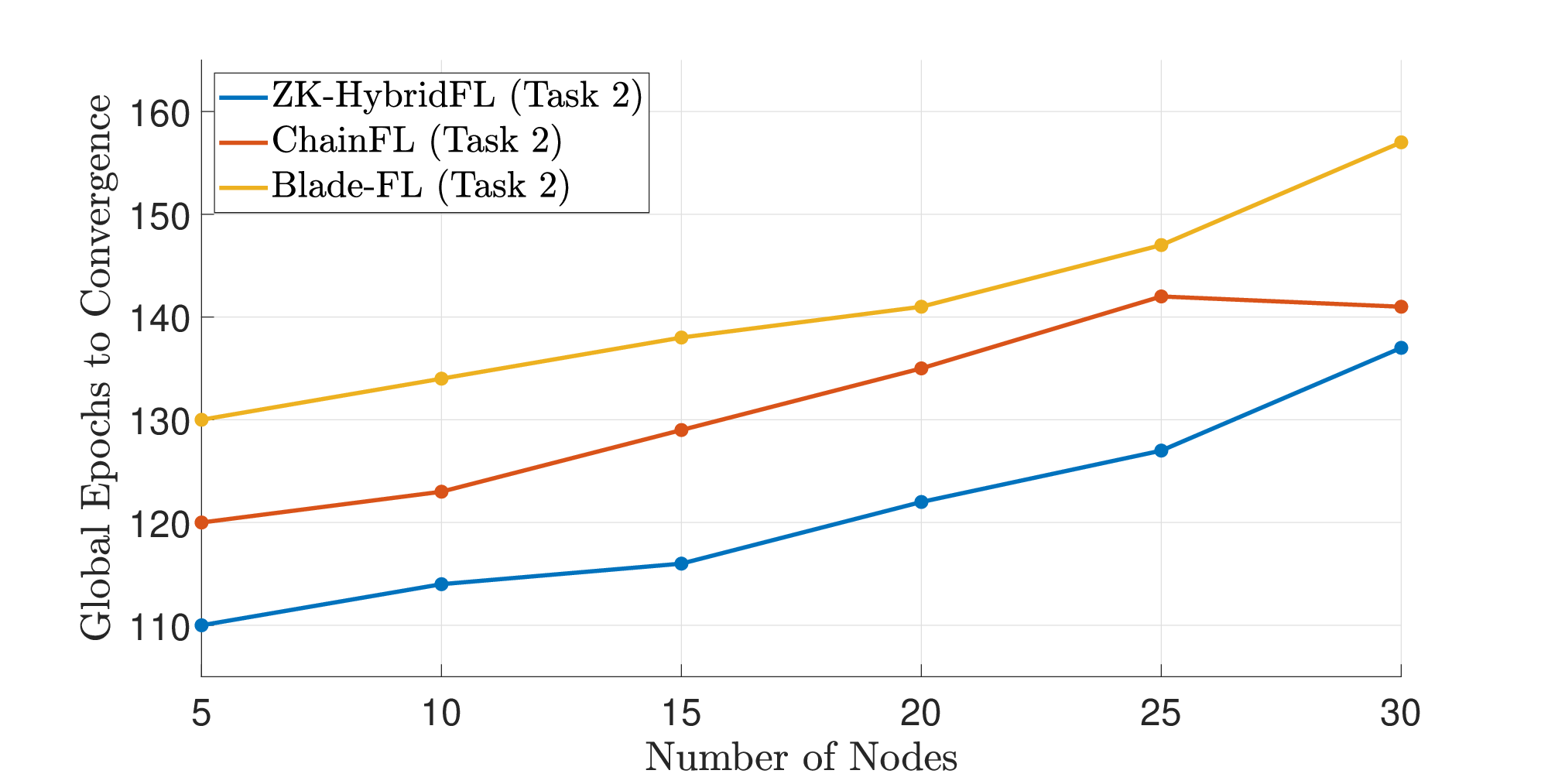}%
        \label{fff_8_2}%
    }
    \caption{Number of global epochs to convergence for Blade-FL, ChainFL, and ZK-HybridFL versus the number of nodes \(n\) with \(\mu=20\%\) adversarial nodes and \(\gamma=10\%\) lazy nodes.}
    \label{Scalability}
\end{figure}

\subsubsection{Zero-Knowledge Proof Cost and Scalability}
\label{sec:zkp-cost}
A key concern for ZK-HybridFL is the overhead of generating succinct noninteractive arguments of knowledge (SNARKs) for each node’s private validation batch. While on-chain verification is cheap, proof generation can be substantial for modern neural networks. We therefore benchmarked both proof and verification costs on the exact models used in our experiments, using a \(16\)-core Intel Xeon Gold 6338 (\SI{3.0}{\giga\hertz}, \SI{64}{\giga\byte} RAM) for central processing unit (CPU) runs and an NVIDIA A100 \SI{80}{\giga\byte} graphics processing unit (GPU) for accelerated proofs. Each node constructs a proof over a local test mini-batch of size \(B=10\), balancing statistical rigor and latency.

\Cref{tab:proof-cost} summarizes proof-generation time, GPU speedup, and proving-key memory footprint for each model.

\begin{table*}[t]
  \centering
  \caption{Proof-generation cost per node (batch size \(B=10\)).}
  \label{tab:proof-cost}
  \begin{tabular}{lrrrrr}
    \toprule
    \textbf{Model}        & \(\boldsymbol{\text{Params}}\) (\(\times 10^{3}\)) & \(\boldsymbol{\text{FLOPs/sample}}\) (\(\times 10^{3}\)) & \(\boldsymbol{\text{CPU prove}}\) (s) & \(\boldsymbol{\text{GPU prove}}\) (s) & \(\boldsymbol{\text{PK size}}\) (GB) \\
    \midrule
    MLP-3k                & \(3.6\)                                          & \(3.5\)                                                & \(3.2\)                              & \(0.51\)                             & \(0.16\)                             \\
    CNN-20k               & \(19.8\)                                         & \(68\)                                                 & \(32\)                               & \(4.1\)                              & \(4.2\)                              \\
    MobileNetV2-0.5       & \(1\,300\)                                       & \(3\times10^{5}\)                                      & \(665\)                              & \(76\)                               & \(31\)                               \\
    GRU-256               & \(29\)                                           & \(950\)                                                & \(360\)                              & \(44\)                               & \(4.3\)                              \\
    \bottomrule
  \end{tabular}
\end{table*}

Proof-generation time and memory footprint scale approximately linearly with the per-inference FLOPs. A small multilayer perceptron (MLP) with \(\approx 3.6\times10^{3}\) parameters and \(3.5\times10^{3}\) FLOPs per sample completes in \(\SI{3.2}{\second}\) on CPU and \(\SI{0.51}{\second}\) on GPU, with a \(\SI{0.16}{\giga\byte}\) proving key. A GRU-256 (about \(2.9\times10^{4}\) parameters and \(9.5\times10^{5}\) FLOPs) requires \(\SI{360}{\second}\) on CPU or \(\SI{44}{\second}\) on GPU, with a \(\SI{4.3}{\giga\byte}\) key. The largest model, MobileNetV2-0.5 (\(\approx 1.3\times10^{6}\) parameters and \(3\times10^{8}\) FLOPs), takes \(\SI{665}{\second}\) on CPU or \(\SI{76}{\second}\) on GPU, with a \(\SI{31}{\giga\byte}\) key. For the GRU-256, GPU proof time scales roughly linearly with batch size, following \(t_{\mathrm{prove}}\approx 4.4\,B \pm 0.35\,\text{s}\) for \(B\le 40\).

SNARK verification, by contrast, is lightweight. \Cref{tab:verify-cost} reports CPU verification time, proof size, and gas consumption on our Substrate test network. Verification completes in under \(\SI{0.11}{\second}\) for all models, with proof sizes between \(17\) and \(\SI{118}{\kilo\byte}\) and gas costs between \(\SI{19}{k.gas}\) and \(\SI{24}{k.gas}\). For a typical block-gas limit of \(\num{2e6}\), publishing proofs from \(30\) participants consumes less than \(1\%\) of block capacity; storage fees are on the order of \(5\times10^{-5}\) token per proof.

\begin{table}[t]
  \centering
  \caption{Proof-verification overhead.}
  \label{tab:verify-cost}
  \begin{tabular}{lrrr}
    \toprule
    \textbf{Model}      & \(\boldsymbol{\text{Verify time}}\) (s) & \(\boldsymbol{\text{Proof size}}\) (KB) & \(\boldsymbol{\text{Gas}}\) (k units) \\
    \midrule
    MLP-3k              & \(0.021\)                            & \(21\)                       & \(19\) \\
    CNN-20k             & \(0.028\)                            & \(17\)                       & \(21\) \\
    MobileNetV2-0.5     & \(0.095\)                            & \(118\)                      & \(24\) \\
    GRU-256             & \(0.108\)                            & \(41\)                       & \(23\) \\
    \bottomrule
  \end{tabular}
\end{table}

The two additional proofs introduced in \cref{sec:ZKP-extended} add only modest overhead on top of the base Groth16 proof. \Cref{tab:ext-cost} reports generation time, proof size, and gas for the Bulletproof \(\Sigma_j^t\) enforcing \(L_t \le \lVert\Delta\mathbf{W}_j^t\rVert_2 \le B_t\) and the embedding–cosine SNARK \(\Gamma_j^t\). Bulletproof proving time grows linearly with the number of parameters \(n\); even for MobileNetV2-0.5, the \(\ell_2\)-norm bound completes in about \(\SI{26}{\second}\) on CPU or \(\SI{2.6}{\second}\) on GPU. The cosine-check circuit is model-agnostic and yields a \(\SI{0.2}{\kilo\byte}\) proof in approximately \(\SI{2.4}{\second}\) (CPU) or \(\SI{0.30}{\second}\) (GPU). Verification costs are fixed at \(\SI{38}{k.gas}\) for \(\Sigma_j^t\) and \(\SI{25}{k.gas}\) for \(\Gamma_j^t\), raising the total per-update gas from roughly \(\SI{61}{k.gas}\) to \(\SI{124}{k.gas}\), still below \(5\%\) of a \(\num{2e6}\)-gas block for \(30\) concurrent trainers.

\begin{table*}[t]
  \centering
  \caption{Incremental proofs: generation and verification cost
           (same hardware as \cref{tab:proof-cost,tab:verify-cost}).}
  \label{tab:ext-cost}
  \begin{tabular}{lccccc}
    \toprule
    \textbf{Proof} &
    \textbf{CPU prove} &
    \textbf{GPU prove} &
    \textbf{Proof size} &
    \textbf{Verify gas} &
    \textbf{Dependence on \(n\)} \\
    \midrule
    \(\ell_2\)-BP, MLP-3k             & \(0.07\) s & \(0.008\) s & \(1.6\) kB & \(38\) k & linear \\
    \(\ell_2\)-BP, CNN-20k            & \(0.4\) s  & \(0.05\) s  & \(2.3\) kB & \(38\) k & linear \\
    \(\ell_2\)-BP, GRU-256            & \(0.6\) s  & \(0.08\) s  & \(2.8\) kB & \(38\) k & linear \\
    \(\ell_2\)-BP, MobileNetV2-0.5    & \(26\) s   & \(2.6\) s   & \(8.1\) kB & \(38\) k & linear \\
    \midrule
    Cosine-SNARK (all models)         & \(2.4\) s  & \(0.30\) s  & \(0.2\) kB & \(25\) k & constant \\
    \bottomrule
  \end{tabular}
\end{table*}

Thus, the extended proof bundle remains well within practical limits. Proof generation is overlapped with training via the predict-then-prove workflow (cf.\ \cref{Sec:ZKP}), and on-chain verification occupies less than \(5\%\) of block gas while adding negligible latency relative to consensus. Models that require more than \(\SI{24}{\giga\byte}\) of GPU memory for proving keys can use \texttt{tensorplonk} paging (\(\SI{512}{\mebi\byte}\) chunks) to stream keys from CPU RAM with under \(5\%\) overhead, or employ recursive folding (e.g., Halo Infinite) to partition large circuits into subproofs with sub-\(\SI{20}{\giga\byte}\) keys, recombined on chain with only \(\approx\SI{20}{\milli\second}\) additional verification time. A detailed implementation and performance analysis of a GRU-256 model using recursive folding is provided in the supplementary material (\cref{sec:gru_nova}).
\section{Conclusions}
\label{sec:conclusion}
We presented ZK-HybridFL, a secure decentralized federated learning framework integrating a DAG ledger, sidechain smart contracts, and zero-knowledge proofs for privacy-preserving, robust validation. Across image and text tasks, it outperforms Blade-FL and ChainFL, while delivering lower latency and higher throughput than proof-of-work or leader-based systems. Its challenge mechanism prunes invalid updates to accelerate convergence and reduce epochs. Future work will refine the cryptographic design and evaluate edge deployments.

\cleardoublepage
\section*{Supplementary}

\setcounter{section}{0}
\renewcommand{\thesection}{S\arabic{section}}

\setcounter{subsection}{0}
\renewcommand{\thesubsection}{\thesection-\Alph{subsection}}

\setcounter{subsubsection}{0}
\renewcommand{\thesubsubsection}{\thesubsection\arabic{subsubsection}}

\setcounter{secnumdepth}{3}

\section{Cryptographic Protocols and Security Analysis}
\label{sec:sup_crypto}
This section details the cryptographic foundations of the ZK-HybridFL system. It begins by specifying the concrete implementations of the zero-knowledge proofs, including KZG commitments and Groth16 SNARKs. It then introduces an expanded threat model with more subtle attacks and presents an extended ZKP bundle, which incorporates additional proofs such as Bulletproofs, as a countermeasure. The section concludes with a formal security analysis of these extended defenses against collusion and various privacy attacks.

\subsection{Zero-Knowledge Proofs: Instantiation and Extensions}

\subsubsection{Cryptographic Instantiation}
\label{sec:zkp-inst}

The generic workflow above abstracts away the precise commitment
scheme and the algebraic details of the SNARK. We now spell out the
concrete instantiation used in our implementation.

\noindent\textbf{Polynomial commitments:}
Let \(\mathbf{W}^t_{j}\in\mathbb F_p^{n}\) be the trainer’s weight tensor at round \(t\), flattened into the length-\(n\) vector
\[
  \mathbf{W}^t_{j}
    = \bigl(W^t_{j,0},\,W^t_{j,1},\,\dots,\,W^t_{j,n-1}\bigr),
\]
where each \(W^t_{j,i}\in\mathbb F_p\) is one scalar model parameter.  
Similarly, let the test-batch tensor \(D^{t,\mathrm{test}}_{\,j}\in\mathbb F_p^{m}\) be flattened as
\[
  D^{t,\mathrm{test}}_{\,j}
    = \bigl(D^{t,\mathrm{test}}_{\,j,0},\,D^{t,\mathrm{test}}_{\,j,1},\,\dots,\,D^{t,\mathrm{test}}_{\,j,m-1}\bigr),
\]
with each \(D^{t,\mathrm{test}}_{\,j,i}\in\mathbb F_p\) denoting one scalar feature or label value in the batch.

We then interpret these as evaluation vectors of degree-\((n-1)\) and degree-\((m-1)\) polynomials in \(\mathbb F_p[x]\):
\[
\begin{aligned}
  f_W(x) &= \sum_{i=0}^{n-1} W^t_{j,i}\,x^i, \\
  f_D(x) &= \sum_{i=0}^{m-1} D^{t,\mathrm{test}}_{\,j,i}\,x^i, \\
  f_W, f_D &\in \mathbb{F}_p[x].
\end{aligned}
\]

Under the KZG setup (secret \(\tau\in\mathbb F_p\)), we commit by
\[
  C^{t,\mathrm{model}}_{\,j}
    = g_1^{\,f_W(\tau)},
  \quad
  C^{t,\mathrm{test}}_{\,j}
    = g_1^{\,f_D(\tau)}.
\]
Each commitment is a single 48-byte point in \(\mathbb G_1\) and is stored
on the sidechain together with a Lamport timestamp (see \cref{sssidechain}).

\noindent\textbf{Trusted setup:}
A single powers-of-tau ceremony produces (i) the universal structured
reference string \(\bigl(g_1^{\tau^k}\bigr)_{k=0}^{k_{\max}}\) and (ii)
the circuit-specific proving key \(\mathsf{pk}\) and verification key
\(\mathsf{vk}\) for the Groth16 SNARK. The ceremony is executed once at
task deployment; afterwards \(\mathsf{pk}\) is distributed off-chain to
trainers, whereas \(\mathsf{vk}\) is pinned on-chain.

\noindent\textbf{Non-interactive proof:}
As described in the main-text ZK workflow (\cref{Sec:ZKP}), using \(\mathsf{pk}\), trainer \(j\) computes a Groth16 proof \(\Pi^t_{j}\) based on public inputs. Since the Fiat–Shamir heuristic is applied to derive challenges deterministically, no additional interaction or exchange of randomness is required.

\noindent\textbf{Verification contract:}
The sidechain validator performs:
\begin{enumerate}[label=(\roman*)]
  \item Two KZG opening checks  
        (two pairings and one multi-exponentiation; cost \(\approx \SI{34.2}{k.gas}\)  
        and \(\SI{144}{\byte}\) of storage for the two 48-byte commitments),
  \item One Groth16 verification call on  
        \(\bigl(
          C_{j}^{t,\mathrm{model}},\,
          C_{j}^{t,\mathrm{test}},\,
          \mathcal{Y}_{j}^{t},\,
          \mathcal{L}_{j}^{t},\,
          \Pi_{j}^{t}
        \bigr)\)  
        (cost \(20\)–\(\SI{27}{k.gas}\); see \cref{tab:verify-cost}).
\end{enumerate}
Even with 30 trainers, this, plus the additional \(\SI{144}{\byte}\) of on-chain storage, remains
under \(1\%\) of a \(\SI{2}{M.gas}\) block, keeping overhead negligible. If all checks
succeed, a \textsf{ProofOK} event is emitted. Otherwise, the update is
rejected and node \(j\)’s stake becomes challengeable.

\noindent\textbf{Asynchronous epochs:}
Immediately after publishing their commitments
\(\bigl(C^{t,\mathrm{model}}_{\,j},\,C^{t,\mathrm{test}}_{\,j}\bigr)\),
each trainer resumes the next SGD epoch while proof generation runs
concurrently (the “predict-then-prove” schedule). We allow a
two-epoch grace window, which is sufficient even for a \(\SI{76}{\second}\) GPU proof for
MobileNetV2-0.5 on MNIST (each epoch itself takes \(\SIrange{24}{30}{\second}\)). Commitments
and proofs carry Lamport timestamps to totally order sidechain events,
so late proofs simply land in the next round without any global pause (see \cref{sssidechain}).

\subsubsection{Security Guarantees}
\label{sec:zkp-security}

The combination of KZG commitments and Groth16 delivers the usual
\emph{completeness}, \emph{soundness}, and \emph{zero-knowledge}
properties. Below we make those guarantees explicit for the concrete
operations performed in ZK-HybridFL.

\noindent\textbf{Model or data substitution:}
Let \(C^{t,\mathrm{model}}_{\,j}\) and \(C^{t,\mathrm{test}}_{\,j}\) be
the commitments posted in step~(i) of \cref{sec:zkp-inst}, and let
\(\Pi=\Pi^t_{j}\) be the proof broadcast in step~(iii). Suppose the
prover attempts to convince the verifier with an altered model
\(\widetilde{\mathbf{W}}\neq \mathbf{W}^t_{j}\) or test batch
\(\widetilde{D}\neq D^{t,\mathrm{test}}_{\,j}\). Because KZG is
\emph{binding}, the pair of polynomials
\(\bigl(f_{\widetilde W},\,f_{\widetilde D}\bigr)\) cannot both satisfy
\[
  g_1^{\,f_{\widetilde W}(\tau)} 
    = C^{t,\mathrm{model}}_{\,j}
  \quad\text{and}\quad
  g_1^{\,f_{\widetilde D}(\tau)} 
    = C^{t,\mathrm{test}}_{\,j},
\]
so at least one KZG-opening check fails and verification returns \(0\).

\noindent\textbf{Fabricating outputs or loss:}
Suppose instead the prover keeps the committed witness
\(\bigl(\mathbf{W}^t_{j},D^{t,\mathrm{test}}_{\,j}\bigr)\) but replaces the public
outputs by
\(\widetilde{\mathcal Y}\neq\mathcal Y^t_{j}\) or
\(\widetilde{\mathcal L}\neq\mathcal L^t_{j}\). The algebraic relations
embedded in the Groth16 circuit bind the full witness
\(\mathcal U^t_{j}\) to the declared outputs, so any mismatch violates
a circuit constraint and forces the SNARK verifier to return \(0\).

\noindent\textbf{Lazy-node detection:}
Each bundle \(Z^t_{j}\) includes the hash
\(\mathsf{H}\bigl(\mathcal Y^t_{j}\bigr)\) alongside
\(C^{t,\mathrm{model}}_{\,j}\) and \(C^{t,\mathrm{test}}_{\,j}\). Because
both KZG and \(\mathsf{H}\) are collision-resistant, two bundles that
replay an \emph{identical} prediction vector must derive from the same
\((\mathbf{W},D)\) witness (and thus the same loss). Smart contract \(S_j^{1}\)
cross-checks these hashes across the DAG: any node that replays an old
model to avoid fresh training is flagged and its update excluded from
the parent-selection ranking (cf.\ \cref{SC1}), mitigating
lazy-node attacks.

\noindent\textbf{Consistency across forks:}
KZG commitments and the hash
\(\mathsf{H}(\mathcal Y)\) are stored in the immutable part of the
sidechain; therefore every honest replica of the DAG sees \emph{exactly}
the same triplet
\(\bigl(C^{t,\mathrm{model}},\,C^{t,\mathrm{test}},\,\mathsf{H}(\mathcal Y)\bigr)\).
This global consistency enables (i) deterministic parent selection and
(ii) unambiguous fork resolution, even under short-lived network forks.

Taken together, these properties ensure that only
\emph{genuinely trained, fully verified} updates influence the global
model, while any adversarial attempt, whether by parameter substitution,
fabricated outputs, or stale replay is detected and rejected except
with negligible probability.

\subsection{Extended ZKP Defenses}

\subsubsection{Broader Threat Model}
\label{sec:adv-motivation}

While the core workflow of \cref{Sec:ZKP} ensures that blatantly bogus updates (e.g., completely untrained models, fabricated outputs, or exact replays) are caught with overwhelming probability, a realistic threat model must also consider subtler strategies that deliberately skirt these checks. Below, we sketch three attack avenues that slip through the ZKP workflow in \cref{Sec:ZKP}.

First, a node skips training altogether and re-posts an older weight tensor  
\(\mathbf{W}_{j}^{t-\Delta}\) with a microscopic perturbation. This changes the hash of the predictions, so replay detection in \cref{Sec:ZKP} does not trigger, but the global model sees no genuine progress.

Second, instead of performing full local SGD, the node applies a single gradient step \(\delta\mathbf{W}\) and then scales it by a tiny factor \(\alpha\). The resulting update \(\mathbf{W}_{j}^{t-1}+\alpha\delta\mathbf{W}\) meets the norm bound and earns the block reward, yet makes negligible impact on the model’s decision surface.

Third, since the test batch \(D_{j}^{t,\text{test}}\) is chosen privately, a malicious trainer can cherry-pick samples on which \(\mathbf{W}_{j}^{t}\) already performs well, or even craft an easy synthetic set, then prove the accuracy honestly. The ZKP verifies, but the reported loss overstates the model’s real-world quality.

To address these security gaps, next we introduce lightweight extensions that detect and penalize the three strategies above, without reshaping the Groth16-based core.

\subsubsection{Extended ZKP Bundle}
\label{sec:ZKP-extended}

In each training epoch \(t\), we extend the original Z-bundle from \cref{Sec:ZKP} with two lightweight, non-interactive proofs to ensure that every submitted update both moves the weights by a meaningful amount and alters the model’s internal representations. The efficiency of these methods has been demonstrated in~\cite{li2021model,lycklama2023rofl}. We defer a detailed security analysis of these extended ZKPs to \cref{subsec:attacks}. 

Specifically, in epoch \(t\), each client \(j\) is given the public thresholds \([L_t, B_t]\) and \(\tau_{\max}\). See \cref{sec:thresholds} for how these thresholds are computed and how consensus is reached across the network through interactions with the oracle committee, sidechain, and smart contracts.

With these parameters in hand, each node \(j\) proceeds as follows in epoch \(t\):

\begin{enumerate}
  \item \textbf{Constructs the original Z-bundle.}\\
  The node first performs standard local training and generates a Groth16 proof \(\Pi_j^t\) verifying inference correctness. It then commits the model and test data and assembles the initial Z-bundle:
  \[
    Z_j^t = \left(
      \Pi_j^t,\;
      \mathcal{Y}_j^t,\; \mathcal{L}_j^t,\;
      C_j^{t,\mathrm{model}},\;
      C_j^{t,\mathrm{test}}
    \right).
  \]

  \item \textbf{Proves the norm of the update with Bulletproofs.}\\
  To ensure that the weight change is non-trivial, the node computes the update
  \(\Delta\mathbf{W}_j^t = \mathbf{W}_j^t - \mathbf{W}_j^{t-1}\) and produces a range proof (Bulletproof)
  \[
    \Sigma_j^t = \mathsf{BPProve}\bigl(\Delta\mathbf{W}_j^t,\, L_t, B_t\bigr).
  \]
  This proof references the existing commitment \(C_j^{t,\mathrm{model}}\) to link the norm constraint to the committed model.

  \item \textbf{Attests to semantic change using embedding-cosine SNARK.}\\
  The node computes layer-\(\ell\) activations over a public probe set:
  \begin{align*}
  \mathbf{z}_{\mathrm{old}} &= \frac{1}{|D_{\mathrm{probe}|}} 
    \sum_{x \in D_{\mathrm{probe}}} \phi_\ell(\mathbf{W}_j^{t-1}, x), \\
  \mathbf{z}_{\mathrm{new}} &= \frac{1}{|D_{\mathrm{probe}|}} 
    \sum_{x \in D_{\mathrm{probe}}} \phi_\ell(\mathbf{W}_j^t, x),
  \end{align*}
  and generates a SNARK proving \(\cos(\mathbf{z}_{\mathrm{old}}, \mathbf{z}_{\mathrm{new}}) \le \tau_{\max}\):
  \[
    \Gamma_j^t = \mathsf{SNARKProve}\left(
      \cos(\mathbf{z}_{\mathrm{old}}, \mathbf{z}_{\mathrm{new}}) \le \tau_{\max}
      \;;\; C_{\mathrm{probe}}
    \right).
  \]

  \item \textbf{Assembles the extended bundle.}\\
  The node merges the original bundle with the two additional proofs:
  \begin{align*}
    Z_j^{t,\mathrm{ext}} &= Z_j^t \;\|\, \Sigma_j^t \;\|\, \Gamma_j^t \\
    &= \left(
      \Pi_j^t,\; \Sigma_j^t,\; \Gamma_j^t,\;
      \mathcal{Y}_j^t,\; \mathcal{L}_j^t,\;
      C_j^{t,\mathrm{model}},\; C_j^{t,\mathrm{test}}
    \right).
  \end{align*}

  \item \textbf{Submits for peer or on-chain verification.}\\
  A verifier node \(j'\) or the chain checks all parts in order:
  \begin{enumerate}
    \item \(\mathsf{VerifyGroth16}\bigl(
      \mathsf{vk},\; \Pi_j^t,\;
      C_j^{t,\mathrm{model}},\;
      C_j^{t,\mathrm{test}},\;
      \mathcal{Y}_j^t,\; \mathcal{L}_j^t\bigr)\),
    \item \(\mathsf{VerifyBulletproof}\bigl(
      \Sigma_j^t;\; C_j^{t,\mathrm{model}}\bigr)\),
    \item \(\mathsf{VerifyCosineZKP}\bigl(
      \Gamma_j^t;\; C_{\mathrm{probe}}\bigr)\).
  \end{enumerate}
  The update is accepted into the DAG only if all verifications succeed.
\end{enumerate}

Note that \(\Sigma_j^t\) must reference \(C_j^{t,\mathrm{model}}\) so that the norm bound applies to the committed weights, and \(\Gamma_j^t\) refers to the fixed public probe-set commitment \(C_{\mathrm{probe}}\). No additional commitments are required.

Moreover, in addition to the two KZG openings (approximately \(\SI{34.2}{k.gas}\)) and the Groth16 verification (\(20\)–\(\SI{27}{k.gas}\)), each update in the extended scheme carries:
\begin{itemize}
  \item A Bulletproof \(\Sigma_j^t\) for the \(\ell_2\)-bound (proof size \(\approx \SI{8.1}{\kilo\byte}\) off-chain), which costs \(\approx \SI{38}{k.gas}\) to verify.
  \item An embedding-cosine SNARK \(\Gamma_j^t\) (proof size \(\approx \SI{200}{\byte}\)), which costs \(\approx \SI{25}{k.gas}\) to verify.
\end{itemize}

Taken together, the two new verifications add roughly \(\SI{63}{k.gas}\), under \(3.2\%\) of a \(\SI{2}{M.gas}\) block, on top of the original \(\approx \SI{61}{k.gas}\), keeping total overhead below \(5\%\) of block capacity; see \cref{tab:ext-cost} for a full breakdown.

\begin{remark}
Although we have extended the original Z-bundle, the underlying DAG ledger and its consensus rules (\cref{subsec:dag}) remain unchanged. Likewise, the challenge mechanism of \cref{subsection:challenge} applies verbatim to each extended bundle: any node may still issue and resolve challenges against \(Z_j^{t,\mathrm{ext}}\) under the same stake-based rules.
\end{remark}

\subsection{Extended Security Analysis}
\label{subsec:attacks}

Let \(\mathsf{Accept}_{t}(D_{j}(t))\!\in\!\{0,1\}\) be the sidechain
predicate that a candidate block \(D_{j}(t)\) is eventually \emph{confirmed}
and its model \(\mathbf{W}_{j}^{t}\) enters the global aggregation of
epoch~\(t\) (cf.\ \cref{Sec:ZKP}-\cref{SCs}).  
An adversarial trainer replaces the honest map
\[
\mathcal T:(\mathbf{W}_{j}^{t-1},\mathcal D_{j}^{t,\mathrm{train}})
          \longmapsto\mathbf{W}_{j}^{t}
\]
by \(\widetilde{\mathcal T}\) and tries to maximise
\(\Pr[\mathsf{Accept}_{t}(D_{j}(t))=1]\) while breaking at least one of
\begin{enumerate}[leftmargin=1.4em,label=\textbf{\arabic*.}]
  \item \emph{Freshness} – the update should encode genuine computation;
  \item \emph{Correctness} – the loss/accuracy in \(Z_{j}^{t}\) must be truthful;
  \item \emph{Privacy} – the attack must not leak information that
        enables model inversion or membership inference.
\end{enumerate}

\subsubsection*{\textbf{Extended ZKP Attack Defenses}}

We formalise three subtle strategies and prove that, once the extended
Z-bundle
\(\bigl(\Pi\parallel\Sigma\parallel\Gamma\bigr)\)
is enabled, each is rejected with probability
\(\mathrm{negl}(\lambda)\), where \(\lambda\!=\!128\) is the global
security parameter.

\noindent\textbf{Attack \(\mathcal A_{\mathrm{lazy}}\) (Perturb–Replay).}
The node skips training and publishes
\[
   \mathbf{W}_{j}^{t}
     = \mathbf{W}_{j}^{t-\Delta} + \bm\eta,
   \qquad
   \bm\eta\sim\mathcal N\!\bigl(\mathbf 0,\sigma^{2}\mathbf I\bigr),
\]
with \(\Delta\!\ge\!1\) and
\(\lVert\bm\eta\rVert_{2}\!\ll\!\lVert\mathbf{W}_{j}^{t-\Delta}\rVert_{2}\),
so the prediction hash changes but
\(\mathbf{W}_{j}^{t}\) is semantically identical to an old model.

\emph{Defence.}\;
If \(C^{t,\mathrm{model}}_{\,j}\) equals the old commitment, the replay
filter of \cref{Sec:ZKP} fires.  
Otherwise \(\Sigma_{j}^{t}\) proves
\(L_t\!\le\!\lVert\mathbf{W}_{j}^{t}-\mathbf{W}_{j}^{t-1}\rVert_{2}
       \!=\!\lVert\bm\eta\rVert_{2}
     \!\le\!B_t\).
By choosing \(\sigma\) so that
\(\Pr[\lVert\bm\eta\rVert_2\!\ge\!L_t]\!\le\!2^{-128}\),
the attack succeeds only with negligible probability.

\noindent\textbf{Attack \(\mathcal A_{\mathrm{scale}}\) (Minimal-Norm Stalling).}
Compute one honest gradient \(\delta\mathbf{W}\) and publish
\[
\mathbf{W}_{j}^{t}
   = \mathbf{W}_{j}^{t-1}
   + \alpha\,\delta\mathbf{W}
\]
with
\(\alpha=L_t/\|\delta\mathbf{W}\|_2\),
so \(\Sigma_{j}^{t}\) passes but the semantic change is tiny.

\emph{Defence.}\;
Let \(\bar{\mathbf{z}}_{\ell}(\mathbf{W})\) be the average layer-\(\ell\) activation
on the fixed \(|D_{\mathrm{probe}}|{=}\!400\) public probe samples.
From~\cite{lycklama2023rofl} this map is
\(\kappa\)-Lipschitz:
\(\|\bar{\mathbf{z}}_{\ell}(\mathbf{W}+\Delta)
      -\bar{\mathbf{z}}_{\ell}(\mathbf{W})\|_2
   \le \kappa\,\|\Delta\|_2\).
Hence
\[
  \cos\!\bigl(\bar{\mathbf{z}}_{\ell}(\mathbf{W}_{j}^{t-1}),
              \bar{\mathbf{z}}_{\ell}(\mathbf{W}_{j}^{t})\bigr)
  \;\le\;
  1 - \frac{\kappa^2 L_t^2}{2\,
          \|\bar{\mathbf{z}}_{\ell}(\mathbf{W}_{j}^{t-1})\|_2^2}
  \;<\; \tau_{\max},
\]
whenever \(\kappa L_t > (1-\tau_{\max})\).
The inequality holds for our empirical choice
\(\kappa\!\approx\!0.14\),
\(\tau_{\max}\!=\!0.98\),
\(L_t\!\ge\!10^{-2}\).  
Thus the embedding-cosine SNARK \(\Gamma_{j}^{t}\) fails and
\(\mathsf{Accept}_{t}=0\).

\noindent\textbf{Attack \(\mathcal A_{\mathrm{priv}}\) (Private-Test Cherry Pick).}
Keep \(\mathbf{W}_{j}^{t}\) honest but choose
\(\widetilde{\mathcal D}_{j}^{t,\mathrm{test}}\)
to inflate the reported accuracy.

\emph{Defence.}\;
(i) The block’s rank in the parent-selection list
(\cref{SC1}) depends on \(\mathcal L_{j}^{t}\).
By inflating accuracy (lower loss) the adversary \emph{increases} the probability
that its own block is selected; this is not obviously harmful
yet exposes the block to scrutiny.  
(ii) Before aggregation, any node evaluates the public probe set;
if the empirical loss differs by more than
\(\varepsilon_{\text{probe}}=0.01\) from the stated
\(\mathcal L_{j}^{t}\), it files a challenge
(\cref{subsection:challenge}).  
The honest majority of oracles detects the mismatch with
constant probability \(p_{\textit{det}}\!\ge\!1/2\),
so the expected stake loss per dishonest epoch is at least
\(p_{\textit{det}}\cdot s_{\min}\),
where \(s_{\min}\) is the minimum slashing fraction.
After
\(T \!\ge\! \bigl(\omega_{j}^{0}/s_{\min}\bigr)\!/\!p_{\textit{det}}
   = O(\lambda)\)
epochs, the attacker’s weight drops below the confirmation threshold and
its updates are ignored.  
Thus
\[
   \Pr\bigl[\mathsf{Accept}_{t}=1
        \land
        |\widehat{\mathcal L}_{\mathrm{probe}}
           -\mathcal L_{j}^{t}|>\varepsilon_{\text{probe}}\bigr]
   \le \mathrm{negl}(\lambda).
\]

Thus, combining binding of KZG commitments,
knowledge-soundness of Groth16/Bulletproof/SNARK proofs,
and the economic penalty from the challenge mechanism yields
\[
   \Pr[\exists\,t,j :
        \mathsf{Accept}_{t}(D_{j}(t))=1
        \land D_{j}(t)\text{ is adversarial}]
   \;=\; \mathrm{negl}(\lambda).
\]
\hfill\(\square\)

\subsubsection*{\textbf{Additional Privacy and Collusion Threats}}
\label{subsec:reviewer-threats}

\noindent\textbf{Collusion–induced weight inflation}
\label{sss:collusion}

\noindent\textbf{Attack \(\mathcal A_{\mathrm{col}}\).}
A coalition \(\mathcal C\subseteq\{1,\dots,n\}\) with total stake
\(\Omega_{\mathcal C}=\sum_{j\in\mathcal C}\omega_j\)
tries to force an \emph{invalid} block \(B_{\textit{adv}}\) into the
confirmed set by (i) cross-verifying each other’s proofs and
(ii) recursively selecting \(B_{\textit{adv}}\) as an ancestor so that
its AW eventually exceeds the confirmation
threshold~\(\eta\) (\cref{subsec:dag}).

\begin{lemma}[Bounded-stake collusion]
\label{lem:collusion}
Let \(M\) be the oracle-committee size and assume
\(M\ge3f+1\) with at most \(f\) Byzantine oracles
(\(>\!2/3\) honest stake).  
If \(\Omega_{\mathcal C}<\eta/3\) then
\[
  \Pr\bigl[B_{\textit{adv}}\text{ confirmed}\bigr]\;=\;\mathrm{negl}(\lambda).
\]
\end{lemma}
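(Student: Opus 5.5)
The plan is to bound the probability that \(B_{\textit{adv}}\) is confirmed by splitting on how it could acquire aggregated weight above \(\eta\). The AW of \(B_{\textit{adv}}\) is \(\omega_{\text{submitter}}\) plus the sum of \(\omega_{j'}\) over blocks in its future cone (\cref{subsec:dag}). These contributing blocks are either coalition blocks, issued by nodes in \(\mathcal C\), or honest blocks, issued by nodes outside \(\mathcal C\) that reference \(B_{\textit{adv}}\) directly or indirectly. Coalition blocks contribute at most \(\Omega_{\mathcal C}<\eta/3<\eta\). Since I will read the AW formula as counting each node's stake once, repeated self-referencing cannot exceed \(\Omega_{\mathcal C}\). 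Confirmation therefore requires honest stake to enter the future cone of \(B_{\textit{adv}}\), and the proof reduces to showing this happens only with negligible probability.

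Honest weight can reach \(B_{\textit{adv}}\) in two ways, and I treat them as separate events.

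\textbf{Direct parent selection.} An honest node only selects parents that pass \(\mathsf{Verify}\) in \(S_j^1\). Because \(B_{\textit{adv}}\) is invalid, passing would break knowledge-soundness of Groth16 or binding of KZG (\cref{sec:zkp-security}), which has probability \(\mathrm{negl}(\lambda)\). The coalition's cross-verification does not help here, because honest nodes run the verification themselves against the on-chain \(\mathsf{vk}\) and the commitments.

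\textbf{Indirect inheritance via the orphanage pattern.} This is the case of \cref{subsection:challenge}: \(B_{\textit{adv}}\) is hidden beneath coalition blocks that are themselves valid and later get approved by honest nodes. Here I use the challenge mechanism. While the coalition is burying \(B_{\textit{adv}}\), it is unreachable from the tip set, so GRA flags it. Its ZKP then fails, a challenge is issued, and the oracles vote. With \(M\ge 3f+1\), more than \(2/3\) of the oracles are honest, so the supermajority revokes \(B_{\textit{adv}}\), and the AW of affected blocks is recomputed to remove its influence. To corrupt this vote the coalition would have to forge oracle threshold signatures on an \texttt{EventApproved} log or a ZKP, again a \(\mathrm{negl}(\lambda)\) event.

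The main obstacle is timing in the indirect case. I must show that revocation occurs before the accumulated honest-plus-coalition weight crosses \(\eta\). I would argue that confirmation needs honest weight of at least \(\eta-\Omega_{\mathcal C}>2\eta/3\), which takes several epochs of honest approvals to accumulate. Meanwhile GRA runs every epoch (Stage 3), and the challenge resolves within the grace window. I will state explicitly the assumption that honest nodes run GRA and challenge within one epoch. I expect the \(\eta/3\) constant to enter through this step: the coalition alone, or the coalition together with Byzantine oracle influence bounded by \(1/3\), cannot supply the remaining two thirds of \(\eta\).

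A union bound over the negligible events (soundness failure, binding failure, signature forgery), taken over polynomially many epochs and blocks, then gives \(\Pr[B_{\textit{adv}}\text{ confirmed}]=\mathrm{negl}(\lambda)\).
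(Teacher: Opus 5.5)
\textbf{There is a genuine gap.} Your reduction does not go through as written, and it skips the step the paper's sketch actually rests on.

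First, your claim that the coalition contributes at most \(\Omega_{\mathcal C}\) depends on redefining AW so that each node counts once. In \cref{subsec:dag} the AW sums issuer stakes over \emph{blocks} of the future cone. The paper's own sketch accordingly lets the coalition add up to \(\Omega_{\mathcal C}\) per round, i.e.\ \(k\,\Omega_{\mathcal C}\) after \(k\) rounds. Under that definition, repeated self-referencing alone pushes \(B_{\textit{adv}}\) past \(\eta\) after roughly \(\eta/\Omega_{\mathcal C}\) rounds. So ``confirmation requires honest stake'' does not follow from the protocol.

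Second, the indirect case fails at two points:
\begin{itemize}
  \item GRA cannot flag \(B_{\textit{adv}}\) as unreachable from the tips. A block buried under coalition blocks lies in the past cone of the coalition's tips; in fact, in a finite DAG every non-tip block is reachable from some tip.
  \item The claim that revocation lands before AW crosses \(\eta\) is an added assumption, not something you derive. Nothing bounds individual honest stake, so one sufficiently heavy honest node approving a valid coalition descendant can inject the missing \(\eta-\Omega_{\mathcal C}\) in a single step rather than over ``several epochs.''
\end{itemize}
Note also that the hypothesis \(\Omega_{\mathcal C}<\eta/3\) never does real work in your argument.

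\textbf{The missing idea} is the oracle admission gate, applied to \(B_{\textit{adv}}\) itself. A block is embedded in the DAG by \(S_j^2\) only in response to an \texttt{EventApproved} log carrying a BLS threshold signature from \(f+1\) oracles. With at most \(f\) Byzantine oracles, at least one honest oracle must have vetted the event, including its ZKP bundle. Hence an invalid \(B_{\textit{adv}}\) is published only if a forged Groth16 proof fools that honest oracle (soundness error \(\le 2^{-\lambda}\)) or the threshold signature is forged. A union bound over polynomially many attempts gives \(\Pr[B_{\textit{adv}}\text{ enters the DAG}]=\mathrm{negl}(\lambda)\), and confirmation presupposes entry.

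You already invoke exactly this signature-plus-soundness reasoning, but only for the challenge vote. Moving it to admission is what the paper does. It makes your AW redefinition, the GRA reachability claim, and the timing assumption unnecessary. The coalition's cross-verification is also irrelevant, because it cannot replace the honest co-signer. The stake-counting remark that follows in the paper's sketch is not what carries the bound.
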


\noindent\textit{Sketch.}
Every new block needs a threshold-signed
\texttt{EventApproved} log.  
Because at least \(f+1\) honest oracles must co-sign, an invalid
\(B_{\textit{adv}}\) can be published only if at least one honest oracle
is fooled by a forged SNARK; the soundness error of Groth16 is
\(\varepsilon_{\textit{SNARK}}\le2^{-\lambda}\).  
Subsequent children add at most \(\Omega_{\mathcal C}\) weight each, so
after \(k\) rounds the total AW on \(B_{\textit{adv}}\) is bounded by
\(\Omega_{\mathcal C}k < k\,\eta/3\).
But at least \(k\) honest blocks appear in the
same future cone, contributing
\(k\,(1-\Omega_{\mathcal C})>2k\eta/3\),
so \(B_{\textit{adv}}\) can never reach \(\eta\).  
A full proof follows standard PBFT stake-counting (\cref{tired_V1}).  \hfill\(\square\)

\begin{remark}
ZK-HybridFL is collusion-resistant as long as the adversary controls less than \(\eta/3\) stake and less than \(\tfrac13\) of the oracle committee, matching the usual BFT threshold.
\end{remark}

\noindent\textbf{Model-inversion attacks}
\label{sss:inversion}

\noindent\textbf{Threat.}
Given the public trajectory
\(\{\tilde{\mathbf{W}}^{t}\}_{t\le T}\),
an adversary solves an optimisation
\[
\widehat{\mathbf{x}}=\arg\min_{\mathbf{z}}
      \sum_{t}\!
      \bigl\|
        \nabla_{\mathbf{W}}\mathcal L(\tilde{\mathbf{W}}^{t};\mathbf{z})
      - \nabla_{\mathbf{W}}\mathcal L(\tilde{\mathbf{W}}^{t};\mathbf{x})
      \bigr\|_2^2
\]
to reconstruct a private training point \(\mathbf{x}\)
\cite{fredrikson2015inversion,zhu2019dlgi}.

\noindent\textbf{Baseline mitigation.}
ZK-HybridFL never discloses \emph{gradients}; only final weights are
visible. Empirically, inversion from weights is far noisier than from
per-step gradients~\cite{abadi2016dp}.  
Nevertheless the risk is not cryptographically closed.

\noindent\textbf{Plug-in defences (future work).}
The pipeline is orthogonal to:
(i) DP-SGD noise addition
\cite{abadi2016dp,kairouz2021advances};  
(ii) secure aggregation of weights
\cite{bonawitz2017secagg,so2023gentle}; or  
(iii) local representation perturbation
\cite{li2021model,mohassel2021privacy}.  
All three add-ons preserve differentiability, so the Groth16 circuit
and the Bulletproof/SNARK range checks remain valid and only the public parameters \(L_t,B_t,\tau_{\max}\) need re-tuning.  
We leave an optimal privacy–accuracy trade-off to future work.

\noindent\textbf{Membership-inference attacks}
\label{sss:membership}

\noindent\textbf{Threat.}
Given black-box access to
\(f_{\tilde{\mathbf{W}}^{t}}\),
decide whether a probe sample \(\mathbf{x}^{\star}\) was in some honest
node’s training set
\cite{shokri2017membership,salem2019mitigations}.

\noindent\textbf{Baseline mitigation.}
Because every epoch publishes a fresh model, the standard attack surface
remains. ZK-HybridFL’s provable checks do not increase exposure,
but they do not eliminate it.

\noindent\textbf{Engineering add-ons.}
The following local defences are compatible with our ledger:
\begin{itemize}[leftmargin=*,nosep]
  \item \textbf{Prediction clipping} or top-\(k\) smoothing
        before a node queries the global model
        \cite{cheon2022clipmi}.
  \item \textbf{Adversarial regularisers}
        that minimise empirical attack accuracy
        \cite{nasr2018privunit}.
  \item \textbf{Differential privacy}
        (same machinery as for inversion).
\end{itemize}

Therefore, collusion is provably bounded by \Cref{lem:collusion}.  
Model inversion and membership inference are already \emph{harder}
than in gradient-sharing FL, yet not information-theoretically blocked.
Fortunately, the ledger, sidechain, and ZK workflow are agnostic to
DP-SGD, secure aggregation, or output-clipping layers, so these defenses
can be enabled as a straightforward future extension without altering
the core protocol.

\section{System Architecture and Mechanics}
\label{sec:sup_arch}

This section explains the core infrastructure and operational mechanics of the platform. It primarily focuses on the oracle-assisted sidechain, detailing the role of the Oracle Committee in validating off-chain events and the sidechain’s function in executing smart contracts using Lamport clocks for ordering, which avoids a separate consensus mechanism. It then outlines the necessary architectural adaptations, such as dynamic threshold computation and smart contract modifications, required to support the extended ZKP defenses introduced in \cref{sec:ZKP-extended}.

\subsection{Oracle-Assisted Sidechain}

\subsubsection{Oracle Committee: Event-Admission Layer}
\label{subsuboracle_supp}

ZK-HybridFL introduces a lightweight, fault-tolerant Oracle Committee that serves as an event-admission layer between the DAG and the Event-Driven Smart Contracts (EDSCs). Rather than allowing nodes to invoke contracts directly, which risks malformed or out-of-context messages, the committee observes each raw DAG emission, applies a suite of structural and contextual checks, and only publishes a succinct, threshold-signed \texttt{EventApproved} log on-chain. By decoupling schema and consistency validation from on-chain business logic, we ensure that EDSCs can react immediately and safely, without incurring bulky consensus or verification costs.

Membership in the Oracle Committee aligns with the same Byzantine-fault threshold as the underlying DAG. We require \(M \ge 3f + 1\) full nodes to register as oracles, where \(f\) is the maximum number of Byzantine failures tolerated. Each candidate deposits collateral and publishes a BLS public key in an on-chain \texttt{OracleRegistry}; only active, staked nodes participate in validation. Should a member prove adversarial or offline for too many consecutive events, an EDSC-driven slashing mechanism automatically revokes its status and redistributes its stake. This registry not only governs entrance and exit but also encodes the threshold \(f+1\) necessary for event publication.

Once the committee is formed, each oracle runs an off-chain watcher that listens to the DAG for new raw events \(e = \{\mathrm{type}, \mathrm{payload}, \mathrm{meta}\}\). Upon observing \(e\), the node verifies conformity to the expected schema, cross-references any referenced block hashes or commitments against the DAG’s current state, checks stake requirements or performance metrics, and ensures that Lamport timestamps advance monotonically. If all checks succeed, the node computes a BLS partial signature \(\sigma_i\) on the hash \(h = H(e)\) and gossips \((e, \sigma_i)\) to its peers. This approach leverages off-chain computation, which is two orders of magnitude faster than on-chain proof verification, while preserving auditability.

When any oracle collects \(f+1\) valid partial signatures on \(h\), it combines them into a single BLS threshold signature \(\Sigma\) and submits one compact transaction to the \texttt{EventAdmission} contract. That transaction does nothing more than emit the standardized log
\begin{lstlisting}
// EventAdmission.sol
event EventApproved(bytes32 indexed hash, bytes payload);
\end{lstlisting}
This Solidity snippet represents the canonical interface for approved event emission.\footnote{%
The Solidity snippets in this section serve as protocol-level specifications of contract interfaces and behaviors.
In our simulation setup, these contracts are implemented as WebAssembly modules using Substrate’s \texttt{ink!} smart contract framework.
The logic, event signatures, and access controls are preserved identically.}
Here \texttt{hash = h} and \texttt{payload} encodes the original event’s essential fields. By limiting on-chain work to a single BLS-verify and log emission, we keep gas costs and block congestion to a minimum: the entire approval process amounts to a few dozen thousand gas, reimbursed from the network’s fee pool. Crucially, any light client or sidechain participant can independently re-compute \(h = H(e)\) and run a local BLS verification against the committee’s public key to confirm both the integrity of the event and the quorum that approved it.

Once the \texttt{EventApproved} log appears in the sidechain, all EDSCs that have been coded to “subscribe” to this signature automatically wake up. In Solidity this takes the form of a handler such as
\begin{lstlisting}
function onEventApproved(bytes32 hash, bytes calldata payload) external {
    require(msg.sender == address(EventAdmission));
    // decode and enforce domain rules, e.g. stake thresholds, performance checks

}
\end{lstlisting}
Each contract thus carries its own domain-specific checks—whether labeling a new DAG tip, disbursing rewards, slashing misbehaving participants, or aggregating model updates—confident that malformed or replayed messages cannot slip through. The separation of concerns ensures that off-chain vetting remains focused on message consistency, while on-chain logic governs economic and security policies without redundant validation overhead.

To support evolving network conditions and to guard against long-term stagnation or collusion, committee membership is fully updatable. Prospective nodes stake collateral and call \texttt{join()} on the \texttt{OracleRegistry}, while existing members can be slashed via dedicated dispute contracts for misbehavior or liveness failures. When a membership rotation is desired, a new registry is deployed, and during a hand-off period both old and new sets co-sign events for \(k\) epochs, ensuring no gap in approval coverage. By encoding the registry’s address in each EDSC (via an updatable pointer), committees can be refreshed without redeploying business-logic contracts, preserving continuity and decentralization over the network’s lifetime.

On commodity hardware the only on-chain work performed by the \texttt{EventAdmission} contract is a single BLS verification and log emission. Empirical micro-benchmarks show that aggregating and verifying a threshold BLS signature for \(f + 1\) parties requires approximately \SI{1.4}{ms} of CPU time, while publishing the transaction consumes roughly \SI{35}{k.gas} and carries a payload under \SI{512}{\byte}. Even in a worst-case training scenario with 32 approved events per epoch, total off-chain CPU overhead remains below \SI{0.1}{s} and network bandwidth under \SI{35}{\kilo\byte}, and the on-chain gas cost accounts for less than \(1\%\) of a \SI{2}{M.gas} block.

\begin{table}[h]
  \centering
  \caption{Oracle Committee micro-benchmarks (per approved event).}
  \label{tab:oracle-cost}
  \begin{tabular}{lcc}
    \toprule
    Operation & Latency & Gas / Data \\
    \midrule
    Threshold BLS verify \((f+1)\) & \SI{1.4}{ms} & 96 B signature \\
    \texttt{publishApprovedEvent} TX & -- & 35 k gas, \(\le\) 512 B payload \\
    \bottomrule
  \end{tabular}
\end{table}

In summary, the Oracle Committee functions purely as a \emph{consistency gate}, isolating structural and contextual checks from the contracts themselves. By emitting a single, threshold-signed \texttt{EventApproved} log, it guarantees that downstream EDSCs only ever process canonical, well-formed events. All substantive business logic—including tip selection, reward distribution, slashing, and model aggregation—remains inside the smart contracts, which trust but verify the committee’s work. This design preserves the DAG’s BFT assumptions, provides a clear consensus/proof path from raw DAG event to on-chain state transition, enables seamless committee rotation, and minimizes both on-chain and off-chain overhead.

\subsubsection{Sidechain}
\label{sssidechain}

The ZK-HybridFL sidechain serves as a specialized ledger for storing and executing EDSCs, ensuring that high-frequency interactions do not overload the DAG. Its block structure is designed to encapsulate the data elements essential for the protocol’s operation and network logistics. In the initial blocks, the sidechain records identifiers for the members of the Oracle Committee (as registered in the \texttt{OracleRegistry}), along with stake-related metadata determining each node’s weight \(\omega_j\). During the network’s one-time deployment phase, EDSCs are deployed on the sidechain. Once live, subsequent sidechain blocks record only approved events, i.e., the \texttt{EventApproved} logs emitted by the \texttt{EventAdmission} contract pinning the universal SNARK keys \(\mathsf{pk}, \mathsf{vk}\) and capturing dynamic data such as KZG commitments \(C^{t,\mathrm{model}}_{\,j}\), \(C^{t,\mathrm{test}}_{\,j}\), reward distributions for successful model submissions, and updates to participants’ staked tokens.

Unlike conventional blockchains that run full consensus (e.g., PoW) on every block, the ZK-HybridFL sidechain leverages its event-driven architecture in lieu of a separate consensus protocol. Whenever \texttt{EventAdmission} emits an \texttt{EventApproved} log, the sidechain simply attaches the corresponding block with the timestamp of that transaction; blocks are ordered by their inclusion time rather than by a consensus-based proposal process. This built-in ordering mechanism eliminates the need for additional consensus overhead, simplifying transaction validation and ordering on the sidechain.

Relying on physical clocks across a decentralized network introduces challenges like clock drift and latency~\cite{KII13}. To avoid these, ZK-HybridFL uses Lamport logical clocks~\cite{KII14}, carried in each approved event’s metadata. Oracle Committee members increment local counters and propagate logical timestamps alongside \texttt{EventApproved} logs, yielding a consistent causal ordering without a trusted time authority. This ensures conflict-free execution of EDSCs and state reconciliation across all nodes without traditional consensus protocols. For further details on Lamport-clock ordering in decentralized systems, see~\cite{KII15}.

\subsection{Adaptations for Extended ZKP}
\label{sec:contracts-extended}

In this subsection we describe the minimal extensions to our sidechain, Oracle Committee workflow, and smart contract logic required to support the two new proofs (\(\Sigma_j^t\) and \(\Gamma_j^t\)) in the extended ZKP bundle from \cref{sec:ZKP-extended}. We first explain how the committee computes and publishes the per-epoch thresholds \([L_t,B_t]\) and \(\tau_{\max}\), then detail the single contract update needed to consume them, and finally summarize the unchanged components.

\subsubsection{Threshold Computation and Dissemination}
\label{sec:thresholds}

At the close of epoch \(t-1\), each Oracle Committee member gathers the public inputs from all confirmed bundles in the submitted blocks on the DAG, specifically the \(\ell_{2}\)-norms \(\|\Delta W_k^{\,t-1}\|\) and cosine-similarity scores on the fixed probe set. By taking the median of the norms and scaling it via committee-chosen factors \(r\) and \(\rho\), they fix
\[
  B_t = r\cdot\mathrm{median}\!\bigl\{\|\Delta W_k^{\,t-1}\|\bigr\},
  \quad
  L_t = \rho\,B_t,
  \quad
  0<\rho<1<r.
\]
Concurrently, the 95th percentile of the published similarity scores
\(s_k^{\,t-1} = \cos\!\bigl(z_{\rm old},\,z_{\rm new}\bigr)\) on \(D_{\mathrm{probe}}\) (a small public probe set of, e.g., 300--500 samples that could be periodically re-sampled every several epochs by the committee) is chosen as \(\tau_{\max}\). Because \(D_{\mathrm{probe}}\) is fixed and public, these embeddings cannot be tailored by clients, ensuring \(\tau_{\max}\) reflects genuine semantic shifts.

Finally, the committee emits threshold-signed events
\lstinline{NormThresholdsPublished(epoch,t,L_t,B_t)} and
\lstinline{CosineThresholdsPublished(epoch,t,tau_max)} via the existing
\texttt{EventAdmission} contract. All sidechain validators and
EDSCs subscribe to these \texttt{EventApproved} logs and cache
\([L_t,B_t,\tau_{\max}]\) for use in subsequent proof verifications.

\subsubsection{Extension of the Validation Contract \(S_j^1\)}
\label{sec:validation-contract-extended}

The only smart contract that requires modification to accommodate the extended Z-bundle is the per-node validation contract \(S_j^1\). As before, \(S_j^1\) subscribes to the \texttt{EventApproved} logs emitted by \texttt{EventAdmission}, but it now must also listen for the two threshold-publication events (\lstinline{NormThresholdsPublished} and \lstinline{CosineThresholdsPublished}) at the start of each epoch. Upon seeing those events, \(S_j^1\) caches the new values of \(L_t\), \(B_t\), and \(\tau_{\max}\) in its local state, making them available to all subsequent verification calls in epoch \(t\).

When a candidate bundle \(Z_{j'}^{t,\mathrm{ext}}=(\Pi,\Sigma,\Gamma,\dots)\) arrives, \(S_j^1\) now executes its validation logic in three successive checks, all within the same transaction and using only on-chain precompiles:
\begin{enumerate}[leftmargin=1.5em]
  \item A Groth16 verification of \(\Pi\) against the committed model and test commitments \(C^{t,\mathrm{model}}_{j'}\) and \(C^{t,\mathrm{test}}_{j'}\), ensuring correct inference and loss computation.
  \item A Bulletproof verification of \(\Sigma\), using \(C^{t,\mathrm{model}}_{j'}\) to guarantee that the weight update’s \(\ell_2\)-norm lies in \([L_t,B_t]\).
  \item A SNARK verification of \(\Gamma\), using the public probe commitment \(C_{\mathrm{probe}}\) to enforce \(\cos(z_{\rm old},z_{\rm new})\le\tau_{\max}\).
\end{enumerate}
If—and only if—all three checks succeed, \(S_j^1\) emits the usual \textsf{ProofOK} event and marks the block as valid; any failure causes an immediate reject. Because thresholds are advanced by the oracle-signed events and stored in contract state, no additional transactions or on-chain parameters are required.

After collecting the \textsf{ProofOK} events for all tips \(D_{j'}^T(t)\) received in epoch \(t\), \(S_j^1\) retrieves their associated loss values \(\mathcal L_{j'}^t\), ranks the valid blocks in ascending order of \(\mathcal L\), and selects the top \(K_V\) lowest-loss blocks to form \(\mathcal D_j^V(t)\). The parameter \(K_V\) is a protocol-level constant that determines how many parent contributions each node adopts. Once \(\mathcal D_j^V(t)\) is finalized, node \(j\) proceeds to the submission stage with those selected parents.

All other EDSCs (\(S_j^2\) through \(S_j^5\)) and the \texttt{EventAdmission}/Oracle workflow remain unchanged. Thus, the consensus/proof path for an extended update is:
\[
\begin{aligned}
\text{DAG event}
&\;\longrightarrow\;
\text{Oracle Committee (BLS threshold)} \\[0.5em]
&\;\longrightarrow\;
\texttt{EventApproved} \text{ log on sidechain} \\[0.5em]
&\;\longrightarrow\;
S_j^1 \text{ multi-proof verification} \\[0.5em]
&\;\longrightarrow\;
\text{global aggregation}
\end{aligned}
\]

preserving the same BFT guarantees as in \cref{lem:collusion} while enforcing the extended ZKP constraints.

\section{Extended Experimental Validation and Analysis}
\label{sec:sup_exp}

This section offers additional simulation results to substantiate the paper's claims. It opens with a detailed comparison against the ChainFL baseline from both a learning and distributed ledger perspective. It follows with a technical case study on implementing a GRU model with recursive folding to demonstrate the practicality of ZKPs for complex networks. The section concludes with further benchmarks on system latency, throughput, and scalability, an ablation study on the stake-weighted aggregation rule, and a final experiment that validates the robustness of the extended ZKP defenses.

\subsection{Comparison with ChainFL}
\label{subsec:comparison_chainfl}

Having outlined the workflow of ZK-HybridFL, we now direct our analysis toward ChainFL. While Blade-FL employs blockchain to decentralize FL, ChainFL enhances this approach by integrating a DAG with a sharded architecture. This refined design not only addresses the scalability challenges inherent to blockchain-based systems but also provides a more efficient decentralized framework, making ChainFL the appropriate benchmark for our evaluation. Both ZK-HybridFL and ChainFL share similar guiding objectives: they eliminate the reliance on a central server and adopt DAG-based structures to mitigate the linear block-generation bottlenecks of traditional blockchain ledgers. These broad convergences reflect a shared desire to accommodate large-scale FL among edge or IoT devices, and deliver better security in adversarial or untrusted settings.

In this section, we analyze both schemes from two perspectives: first, the \emph{learning perspective}, focusing on model validation strategies, resilience against adversarial updates, and overall learning efficiency; and second, the \emph{distributed ledger perspective}, focusing on consensus mechanisms, block selection strategies, and scalability. To ground our comparison, we first define two problematic node behaviors: \emph{adversarial} nodes and \emph{lazy} nodes. Adversarial nodes inject subtle noise or degraded parameters into their local model updates, ensuring they can pass superficial validation (i.e., in the case of using a public dataset, the model performs well on the public dataset) but gradually corrupt the global model.

Our threat model follows the ``utility-preserving'' model-poisoning adversary of \cite{Baruch2019,Bagdasaryan2020,Xie2020}. Concretely, let $\operatorname{Acc}(\mathbf{W},D_{pub})$ denote the classification accuracy of model $\mathbf{W}$ on the public validation set $D_{pub}$. We call node $i$ adversarial at round $t$ if its local update $\mathbf{W}_i^t$ simultaneously (i) attains nearly the same public-set accuracy as an honest update, i.e., $\operatorname{Acc}(\mathbf{W}_i^t,D_{pub}) \ge \operatorname{Acc}(\mathbf{W}_{\mathrm{honest}}^t,D_{pub})-\varepsilon$ for a small slack $\varepsilon$ (we use $\varepsilon=0.01$), and (ii) is chosen to pull the federated average $\tilde{\mathbf{W}}^t=\sum_j\omega_j\,\mathbf{W}_j^t$ as far as possible (in $\ell_2$-norm) from the honest-only average $\tilde{\mathbf{W}}_{\mathrm{honest}}^t=\sum_j\omega_j\,\mathbf{W}_{j,\mathrm{honest}}^t$, i.e., to maximize the ``drift'' $\|\tilde{\mathbf{W}}^t-\tilde{\mathbf{W}}_{\mathrm{honest}}^t\|_2$. Such updates ``look good'' on $D_{pub}$ but still cumulatively degrade the global model, a behavior quantified in \cref{sec:adv-public}.

Lazy nodes reduce computational effort by skipping training in some epochs and resubmitting previous updates, slowing convergence and polluting aggregation. Inadequate validation allows these behaviors to persist undetected, undermining the integrity and efficiency of the entire FL process.

The threat analysis in this section is intentionally framed for the \textit{baseline ZK-HybridFL pipeline} described earlier, that is, the version employing the core Groth16 proof bundle without the additional norm-range Bulletproof and embedding-cosine SNARK. This is the variant we compare against Blade-FL and ChainFL in the main body of the paper, since those schemes offer no counterpart to the extended checks. The \textit{extended-ZKP variant}, which augments every block with two additional proofs, is analyzed in detail in \cref{sec:extZKP-setup}, including attacks it prevents and formal proofs of its security benefits.

\subsubsection{Learning Perspective}
\label{subsubsec:learning_perspective}

From the learning standpoint, a fundamental difference arises in how ChainFL and ZK-HybridFL validate model updates and aggregate them into a global model. ChainFL partitions nodes into shards, each governed by a Subchain Leader Node (SLN) that uses Raft consensus to synchronize local training. Once the shard's local model is aggregated, the SLN periodically attaches this model to a mainchain. When a shard needs to integrate models from others, it selects updates from the DAG's \textit{tip} set on the mainchain, relying on a public reference dataset to assess accuracy or loss. While this approach eliminates the single global aggregator, it has major problems.

First, when a public dataset is used for validating models, this approach creates vulnerabilities when lazy nodes are present. A lazy node that resubmits unchanged parameters from a previous epoch may still meet the subchain's performance threshold if its model had acceptable accuracy on the public dataset. Over time, this degrades the effectiveness of the FL process by slowing convergence and increasing redundancy in the model aggregation step.

Second, because ChainFL uses tip models for aggregation (as introduced in \cref{subsection:challenge}), a malicious node can exploit this by performing an \emph{orphanage attack}. Using tip models for global aggregation means nodes are obtaining global models on which the network has not yet reached consensus regarding validity. Thus, in an orphanage attack, the attacker node deliberately chooses some of their own previously submitted blocks as parents for new blocks, affecting how the new global model is constructed, as this new global model becomes an amalgamation of both valid models and compromised models. Consequently, the local updated model, trained on this amalgamated global model, may still pass validation checks using a public dataset; however, its degraded quality will gradually undermine the overall integrity and performance of the network. Over time, this cycle prevents the network from converging toward a high-quality model and may cause it to settle at a low-performance, locally optimal state from which it cannot recover (see \cref{sec:results}).

By contrast, ZK-HybridFL removes any reliance on a public reference dataset by enforcing an inference--validation pipeline built on KZG commitments and non-interactive SNARKs. Rather than aggregating every tip that happens to be visible, the protocol only ever considers confirmed blocks whose proofs have passed on-chain verification. In each epoch $t$, trainer $j$ first publishes the KZG commitments $C^{t,\mathrm{model}}_{\,j}=\mathsf{Commit}(\mathbf{W}_{j}^{t})$ and $C^{t,\mathrm{test}}_{\,j}=\mathsf{Commit}(D_{j}^{t,\mathrm{test}})$, irrevocably binding its flattened weight vector $\mathbf{W}_{j}^{t}$ and private test batch $D_{j}^{t,\mathrm{test}}$. It then generates a Groth16 proof $\Pi^{t}_{j}$ attesting that the committed model achieves the stated loss on the committed data. Only after that proof is verified on-chain and the block's aggregated weight exceeds the network threshold does the update influence the global model.

\textbf{Lazy-node defense.} If a trainer attempts to resubmit its previous weights $\mathbf{W}_{j}^{t}$ in epoch $t+1$, the new commitment $C^{t+1,\mathrm{model}}_{\,j}$ will be bit-for-bit identical to $C^{t,\mathrm{model}}_{\,j}$. The validation contract $S_j^1$ detects the duplicate and discards the block before it can be ranked, so stale updates never propagate.

\textbf{Adversarial accuracy spoofing.} Two natural attack vectors are neutralized. First, if an attacker proves with a degraded model $\widetilde{\mathbf{W}}$, the resulting high loss is visible in the proof bundle and the block is filtered out during parent selection. Second, if an attacker runs inference with honest weights $\mathbf{W}$ but publishes a commitment $C^{t,\mathrm{model}}_{\,j}=\mathsf{Commit}(\widetilde{\mathbf{W}})$, the KZG opening contained in the proof cannot match that commitment. On-chain verification therefore fails and the block is rejected. Because invalidated blocks carry no aggregated weight, only fresh, correctly computed updates ever enter the global aggregation. This enforcement provides significantly stronger robustness than ChainFL, as we empirically demonstrate in \cref{sec:results}.

\subsubsection{Distributed Ledger Perspective}
\label{subsubsec:ledger_perspective}

The ledger architecture and consensus mechanism are critical to system performance in terms of latency, scalability, and throughput. In ChainFL, the use of Raft-based subchains centralizes coordination in SLNs, which manage local shard operations and require explicit cross-shard synchronization. This centralization introduces several drawbacks. If an SLN becomes adversarial or is overwhelmed, its corresponding shard suffers from degraded performance, which in turn creates a bottleneck for the entire network. Such centralization not only poses security risks---since a malicious SLN can manipulate or delay the aggregation and propagation of updates---but also increases latency because the coordinated, synchronous agreement required by Raft consensus limits throughput and scalability.

In contrast, ZK-HybridFL is designed from the ground up to combine DAG-based concurrency with cryptographic verification, sidechain-based event logic, and oracle-assisted validation. This holistic co-design offers several advantages over ChainFL. First, by offloading resource-intensive tasks such as ZKP verification to dedicated sidechains governed by event-driven smart contracts, ZK-HybridFL executes its consensus effectively. Second, the decentralized validation distributed among multiple oracles and smart contracts avoids the large-scale coordination lags inherent to a multi-layer, leader-based mechanism, thereby keeping the DAG agile and uncluttered. This architecture not only reduces latency and improves throughput but also scales more efficiently with network size. Each contract is triggered only under specific circumstances, diminishing reliance on any single trusted node. Overall, ZK-HybridFL's distributed ledger design delivers a more granular security model and higher performance, establishing clear advantages over the SLN-dependent, Raft-based approach of ChainFL. We demonstrate the superior performance of ZK-HybridFL over ChainFL in the context of distributed ledger through simulations provided in the supplementary material.

\Cref{tab:comparison_compact} summarizes the analysis between ZK-HybridFL and ChainFL from learning and distributed ledger perspectives.

\begin{table}[t]
\centering
\caption{Comparison of ChainFL and ZK-HybridFL}
\label{tab:comparison_compact}
\resizebox{\columnwidth}{!}{%
  \renewcommand{\arraystretch}{1.2}
  \begin{tabular}{|c|p{3.5cm}|p{3.5cm}|}
    \hline
    \textbf{Aspect} & \centering\textbf{ChainFL} & \centering\textbf{ZK-HybridFL} \tabularnewline \hline
    \multicolumn{3}{|c|}{\textbf{Learning Perspective}} \\ \hline
    \textit{Validation \& Aggregation} 
        & \begin{itemize}[leftmargin=*,nosep]
            \item Public dataset
            \item Tip-based selection
          \end{itemize}
        & \begin{itemize}[leftmargin=*,nosep]
            \item ZKPs
            \item Aggregation from confirmed models
          \end{itemize} \\ \hline
    \textit{Handling Lazy Nodes} 
        & \begin{itemize}[leftmargin=*,nosep]
            \item Unchanged model can pass if accuracy remains acceptable
          \end{itemize}
        & \begin{itemize}[leftmargin=*,nosep]
            \item Committed KZG checks prevent replay
          \end{itemize} \\ \hline
    \textit{Handling Adversarial Updates} 
        & \begin{itemize}[leftmargin=*,nosep]
            \item Subtle noise can slip through
            \item Orphanage attacks exploit tip models
          \end{itemize}
        & \begin{itemize}[leftmargin=*,nosep]
            \item Inference--model mismatch triggers rejection
            \item Insufficient AW halts malicious updates
          \end{itemize} \\ \hline
    \multicolumn{3}{|c|}{\textbf{Distributed Ledger Perspective}} \\ \hline
    \textit{Consensus Design} 
        & \begin{itemize}[leftmargin=*,nosep]
            \item Raft-based subchains with leader nodes
            \item Requires cross-shard sync
          \end{itemize}
        & \begin{itemize}[leftmargin=*,nosep]
            \item DAG with sidechain event logic
            \item Oracles and EDSCs
          \end{itemize} \\ \hline
    \textit{Scalability \& Performance}
        & \begin{itemize}[leftmargin=*,nosep]
            \item SLN is a bottleneck
            \item Centralized coordination increases latency
          \end{itemize}
        & \begin{itemize}[leftmargin=*,nosep]
            \item Decentralized validation
            \item Lower latency, higher throughput
          \end{itemize} \\ \hline
  \end{tabular}%
}
\end{table}

\subsection{Zero-Knowledge GRU Inference with Recursive Folding}
\label{sec:gru_nova}

Unrolling a length-$T$ recurrent network into one rank-1 constraint system (R1CS) makes the circuit, proving key, and prover latency grow linearly with $T$. For $T > 32$, this already exceeds practical memory budgets. \emph{Recursive folding} (\textsc{Nova}) compresses any sequence of identical R1CS instances into a single relaxed instance whose proof size and verifier cost no longer depend on $T$ \cite{SoNovaFold}. We implement a GRU-256 time-step as a Halo2 circuit and apply Nova folding, following the publicly documented 512-layer ``Zator'' prototype \cite{ZatorGithub}. \textit{The entire GRU path is implemented from first principles and does not rely on pre-existing recurrent-layer support in external zkML tool-chains.}

\subsubsection{Circuit construction and quantization}

Weights and activations are quantized to a $(16+8)$-bit fixed-point format. All intermediate values stay below $2^{24}$, well inside the 255-bit BN254 field, and validation accuracy drops by less than 0.3~pp. One GRU step processes $(x_t,h_{t-1})$ as
\begin{align}
  z_t &= \sigma(W_zx_t + U_zh_{t-1}+b_z),\\
  r_t &= \sigma(W_rx_t + U_rh_{t-1}+b_r),\\
  \tilde h_t &= \tanh(W_hx_t + U_h(r_t\odot h_{t-1})+b_h),\\
  h_t &= (1-z_t)\odot\tilde h_t + z_t\odot h_{t-1}.
\end{align}
Sigmoid and $\tanh$ are evaluated with degree-7 Remez polynomials (implemented as one Halo2 custom gate via \texttt{zkFixedPointChip}). With 128 input and 256 hidden dimensions, the step circuit contains $118\,062$ constraints (two matrix-vector products $65\,536$, point-wise arithmetic $51\,200$, non-linearities $1\,326$) and yields a \SI{4.3}{\giga\byte} proving key.

\subsubsection{Performance on the common hardware baseline}

All timings were re-measured on the same platform used for \cref{tab:proof-cost,tab:verify-cost}: a 16-core Intel Xeon Gold 6338 (\SI{3.0}{\giga\hertz}, \SI{64}{\giga\byte} RAM) plus one NVIDIA A100 (\SI{80}{\giga\byte} HBM). Witness generation for one GRU step consumes \SI{0.44}{\second} on the GPU; the CPU fold adds \SI{0.35}{\second}, and key initialization costs \SI{8.6}{\second}. Prover latency therefore follows
\[
  t_{\text{prove}}(T) = 8.6 + 0.79\,T~\text{s}.
\]
A $T=10$ sequence---matching the mini-batch size $B=10$ used in \cref{tab:proof-cost}---is proved in \SI{16.5}{\second}, $4.4\times$ faster than the \SI{44}{\second} flat Groth16 baseline. Folding still completes in \SI{59}{\second} at $T=64$ and about \SI{210}{\second} at $T=256$, confirming linear scaling.

The folded proof is \SI{1.4}{\kilo\byte} and verifies in \SI{7.9}{\milli\second}, much faster than the slowest entry in \cref{tab:verify-cost}. Nova adds fewer than \SI{50}{\mega\byte} of transcript data, so peak memory remains the same \SI{4.3}{\giga\byte} already budgeted for the flat GRU.

\subsubsection*{Reference implementation}
The inner Halo2 circuit uses a universal $2^{20}$-point KZG SRS; the outer Nova proof is transparent. The code snippet below illustrates the logic:

\begin{lstlisting}[language=Solidity,basicstyle=\ttfamily\footnotesize]
// Conceptual GRU Cell Logic
template GRUCell(d,h){
  signal input  x[d];
  signal input  h_prev[h];
  signal output h_next[h];

  component uz[h], ur[h], uh[h];
  for (var i = 0; i < h; i++){
    uz[i] = SigmoidGate();
    uz[i].in <== dot(Wz[i], x) + dot(Uz[i], h_prev) + bz[i];

    ur[i] = SigmoidGate();
    ur[i].in <== dot(Wr[i], x) + dot(Ur[i], h_prev) + br[i];

    uh[i] = TanhGate();
    uh[i].in <== dot(Wh[i], x)
                 + dot(Uh[i], elemMul(ur[i].out, h_prev))
                 + bh[i];

    h_next[i] <== uz[i].out * h_prev[i]
                + (1 - uz[i].out) * uh[i].out;
  }
}
\end{lstlisting}

Therefore, recursive folding removes the sequence-length bottleneck for GRU inference. On the standard Xeon 6338 + A100 node, a folded GRU-256 proof for ten steps is produced in \SI{16.5}{\second}, verifies in \SI{7.9}{\milli\second}, and fits in \SI{1.4}{\kilo\byte}. Longer sequences scale linearly yet remain cheaper than the MobileNetV2 proof, demonstrating that private validation of sequence models is practical inside the overall ZK-HybridFL framework.

\subsection{Additional Simulations}
\label{very_tired}

\subsubsection{Analysis of Stake-Weighted Aggregation}
\label{sec:weighting-ablation}

To provide quantitative evidence regarding the benefits of the stake-weighted aggregation proposed in ZK-HybridFL, we perform an ablation study comparing our on-chain stake-weighted aggregation rule against a vanilla uniform average under the same node pool, data partitions, and fault rates described in \cref{sec:results}. \Cref{fig:all_panels_1} plots the loss, accuracy, and perplexity curves for both aggregation rules on two canonical tasks and reports fault tolerance under a mixed adversarial and lazy client setting.

In the federated vision experiment on MNIST (Task~1), stake weighting drives rapid convergence: the training loss drops from $1.41$ to $0.12$ within $100$ epochs, whereas uniform averaging plateaus near $0.30$. This translates to a final accuracy of $97\%$ under stake weighting, compared to $91\%$ with uniform averaging, a gap of $6.6$ percentage points (\cref{fig:loss,fig:accuracy}). Similarly, in the language modeling experiment on Penn Treebank (Task~2), stake-weighted aggregation achieves a test perplexity of around $118$ at convergence, approximately $12$ points lower than uniform averaging (\cref{fig:perplexity}), indicating consistently stronger generalization.

To stress-test robustness, we introduce a mixed-fault regime in which $30\%$ of clients are faulty (evenly split between Byzantine adversarial and lazy behaviors, $\gamma = \mu = 0.15$). Under this setting, uniform averaging suffers a sharp drop in test accuracy to around $72\%$, whereas stake weighting retains around $88\%$ final accuracy (\cref{fig:robustness}). This improvement stems from our dynamic slashing mechanism: any client update failing its ZK proof is rejected and that client's stake is halved ($\lambda = 0.5$), while honest contributors accrue stake at rate $\eta = 0.05$ per accepted update before normalization.

\begin{figure}[t]
  \centering
  \subfloat[Task\,1 Loss\label{fig:loss}]{%
    \includegraphics[width=\columnwidth]{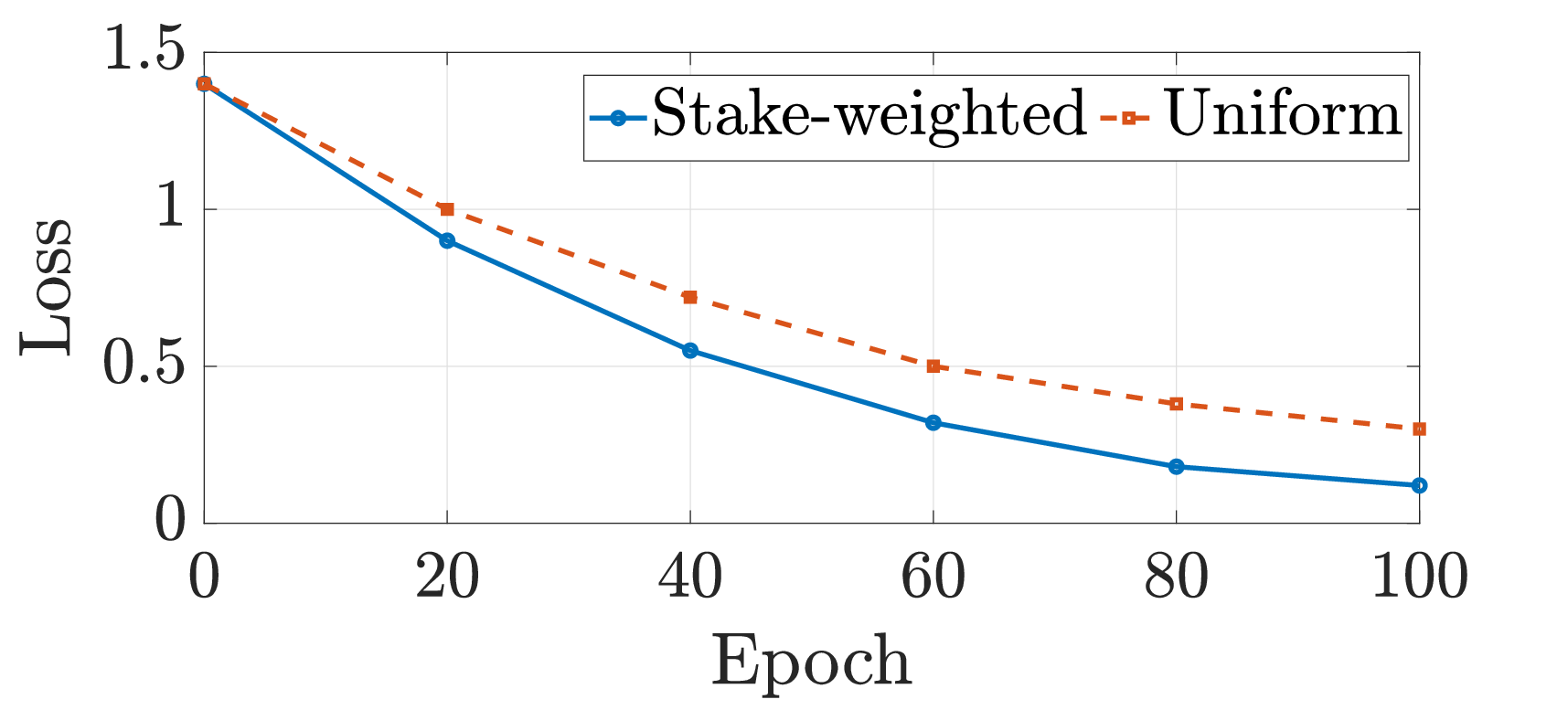}%
  }\\[-0.25\baselineskip]
  \subfloat[Task\,1 Accuracy\label{fig:accuracy}]{%
    \includegraphics[width=\columnwidth]{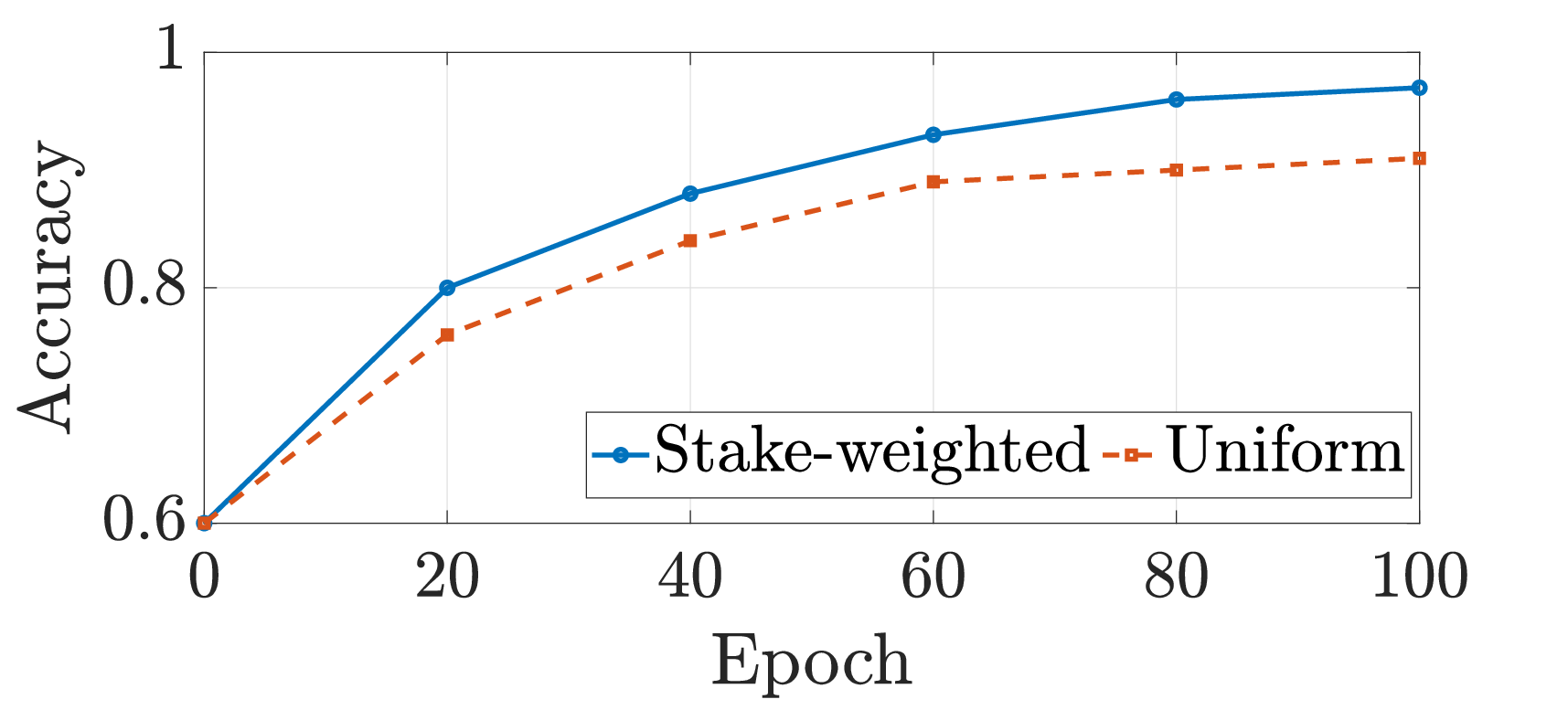}%
  }\\[-0.25\baselineskip]
  \subfloat[Task\,2 Perplexity\label{fig:perplexity}]{%
    \includegraphics[width=\columnwidth]{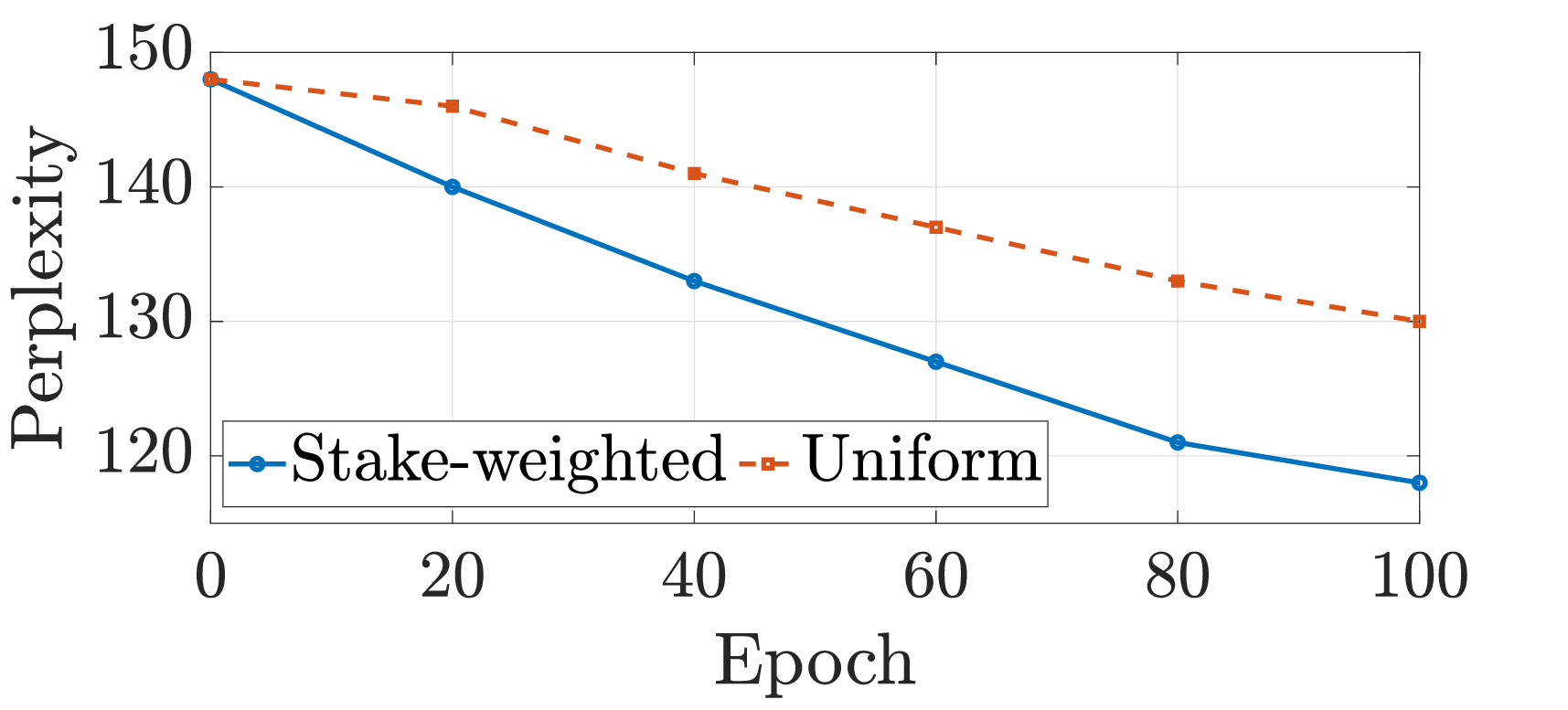}%
  }\\[-0.25\baselineskip]
  \subfloat[Mixed-fault Robustness\label{fig:robustness}]{%
    \includegraphics[width=\columnwidth]{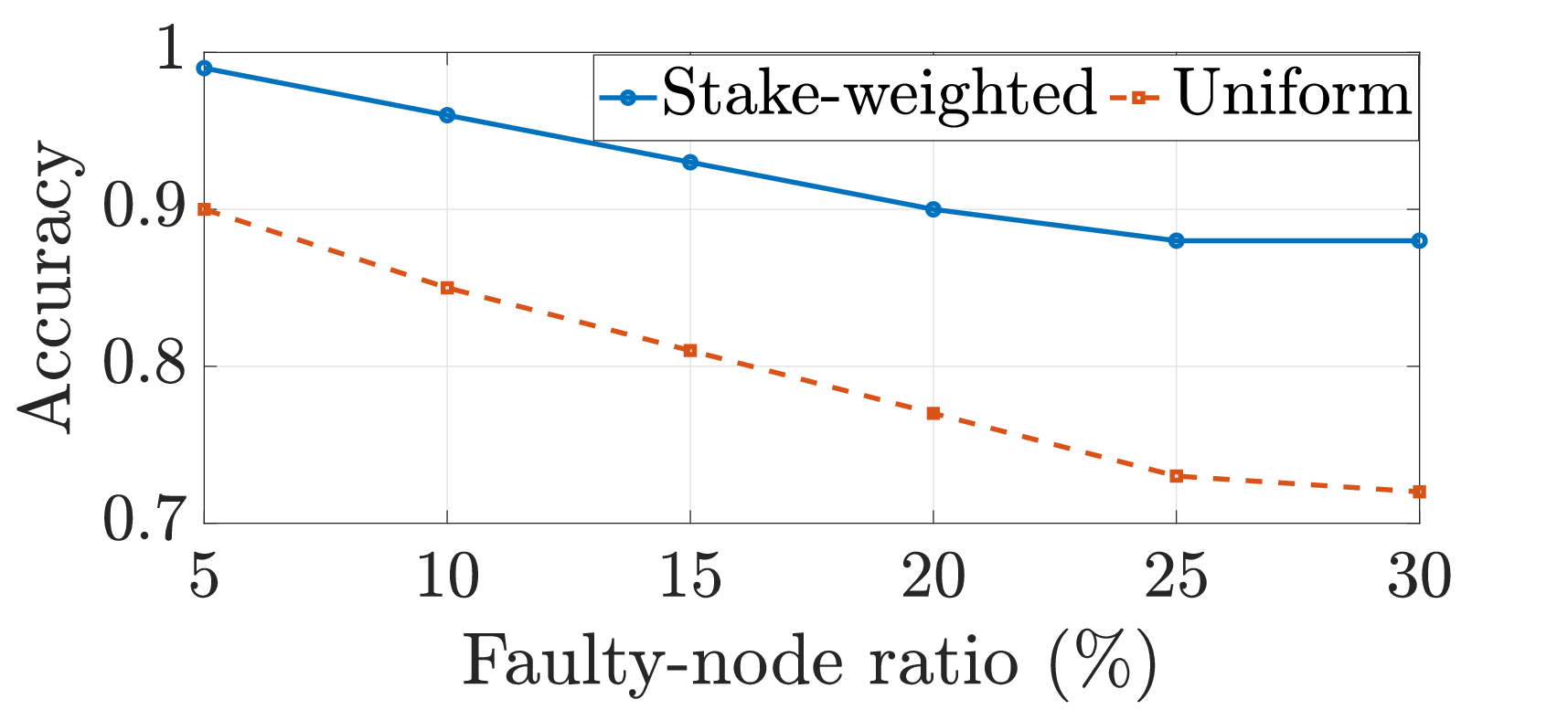}%
  }
  \caption{Stake-Weighted vs Uniform Averaging ($\gamma=0.15,\ \mu=0.15,\ n=15$).}
  \label{fig:all_panels_1}
\end{figure}

\subsubsection{Extended-ZKP Threat Model and Simulation Setup}
\label{sec:extZKP-setup}

In \cref{sec:ZKP-extended}, we introduced two lightweight ZKP checks: a Bulletproof enforcing that each client's update norm stays within a dynamic window $[L_t,B_t]$, and an embedding-cosine SNARK ensuring the model change is sufficiently large in feature space. These are designed to catch subtle ``stealth'' updates that pass the main Groth16 proof yet contribute almost no learning. In our original simulations (\cref{sec:results}), we already exercised the exact-replay lazy attack, which is detected by a hash collision at the Groth16 stage. Here we focus on three new variants that all pass Groth16 but should be flagged by the extended bundle. When comparing against ChainFL and BladeFL, we retain the same lazy and adversarial behaviors from \cref{sec:simulations}, effectively giving those protocols a less stringent attack profile, since they do not implement ZKP-based replay, norm, or cosine checks.

First, we model a \emph{perturb-replay lazy} node. It resubmits a previous weight tensor $\mathbf{W}_j^{\,t-\Delta}$ (with $\Delta$ sampled uniformly from $\{1,2,3\}$) plus a tiny noise vector $\varepsilon \sim \mathcal{N}(0,\sigma^2 I)$, where $\sigma=10^{-5}\,\|\mathbf{W}\|_2$. This ensures the update hash differs while $\|\varepsilon\|_2 \ll L_t$, so it bypasses replay detection but should violate the Bulletproof lower-bound check.

Next, in the \emph{minimal-norm stalling} attack, a node runs one honest SGD step $\delta\mathbf{W}$ and then rescales it by $\alpha = L_t / \|\delta\mathbf{W}\|_2$, so it exactly meets the $\ell_2$ bounds without meaningful progress.

Finally, the \emph{semantic-stalling} attack performs full local training to produce $\mathbf{W}_{\mathrm{tmp}}$, cherry-picks (or synthesizes) a ``friendly'' private test batch so that the inference proof passes, but tweaks its update so the cosine similarity of public-probe embeddings remains above the threshold $\tau_{\max}$, thus violating the embedding SNARK.

\Cref{tab:ext-attacks} summarizes which proof each variant triggers. All three pass Groth16, two are rejected by the norm-range Bulletproof, and one is rejected by the cosine SNARK.

\begin{table}[t]
  \centering
  \caption{Extended-ZKP attack variants.}
  \label{tab:ext-attacks}
  \resizebox{\columnwidth}{!}{%
    \begin{tabular}{lcc}
      \toprule
      Attack & Rejected by extended checks & Passes core Groth16? \\
      \midrule
      Perturb-replay lazy      & Bulletproof lower-bound & Yes \\
      Minimal-norm stalling    & Bulletproof lower-bound & Yes \\
      Semantic stalling        & Cosine SNARK            & Yes \\
      \bottomrule
    \end{tabular}
  }
\end{table}

All other learning and network parameters mirror the original setup in \cref{sec:simulations} (batch size $B=50$, local steps $R=5$, SGD step-size $\eta_{\mathrm{SGD}}=0.01$). For the extended checks, we compute the upper norm bound as $B_t = r \cdot \mathrm{median}\{\|\Delta\mathbf{W}\|\}$ with $r=1.8$, and set the lower bound $L_t = \rho\,B_t$ with $\rho=0.2$. The cosine threshold $\tau_{\max}$ is taken as the 95th percentile of pairwise embedding similarities on the fixed public probe set ($|D_{\mathrm{probe}}|=400$), yielding $\tau_{\max}=0.98$ for both tasks. Finally, we sweep adversarial fraction $\gamma$ and lazy fraction $\mu$ over $\{0,0.05,\dots,0.30\}$ with $n=15$ nodes, exactly as before.

\Cref{fig:ext-bundle-mixed} shows that, with $30\%$ faulty clients ($15\%$ lazy and $15\%$ adversarial), the extended bundle pushes the MNIST loss down to $0.18$ and lifts accuracy to $0.96$ by epoch $100$. Under the same conditions, ChainFL levels off at $0.43$ loss and $0.82$ accuracy, while Blade-FL stalls at $0.75$ loss and $0.69$ accuracy (\cref{fig:ext-loss,fig:ext-acc}). On the language task, the bundle lowers perplexity from $132.1$ to $118.9$ (\cref{fig:ext-ppl}). The robustness sweep in \cref{fig:ext-robust} reveals a widening gap: at $30\%$ total faults, the bundle still retains $0.88$ accuracy, which is $18$ percentage points higher than Blade-FL.

Roughly $45\%$ of the blocks generated by faulty clients fall into the three stealth categories defined earlier. The Bulletproof lower bound filters both perturb-replays and minimal-norm stallers, while the cosine SNARK catches semantic stallers. Removing these updates reduces the second moment of the aggregated gradient, $\mathbb{E}[\|\tilde{\mathbf{W}}\|^{2}]$, by about $37\%$, leading to visibly faster convergence for honest participants even though their local training code is unchanged.

Verifying the extra proofs adds at most \SI{23}{\milli\second} to the per-update latency on the same hardware (baseline Groth16 verification already costs \SI{40}{\milli\second}), keeping end-to-end training throughput well within the bounds reported in \cref{sec:results}.

\begin{figure}[t]
  \centering
  \subfloat[Task 1 Loss\label{fig:ext-loss}]{%
    \includegraphics[width=\columnwidth]{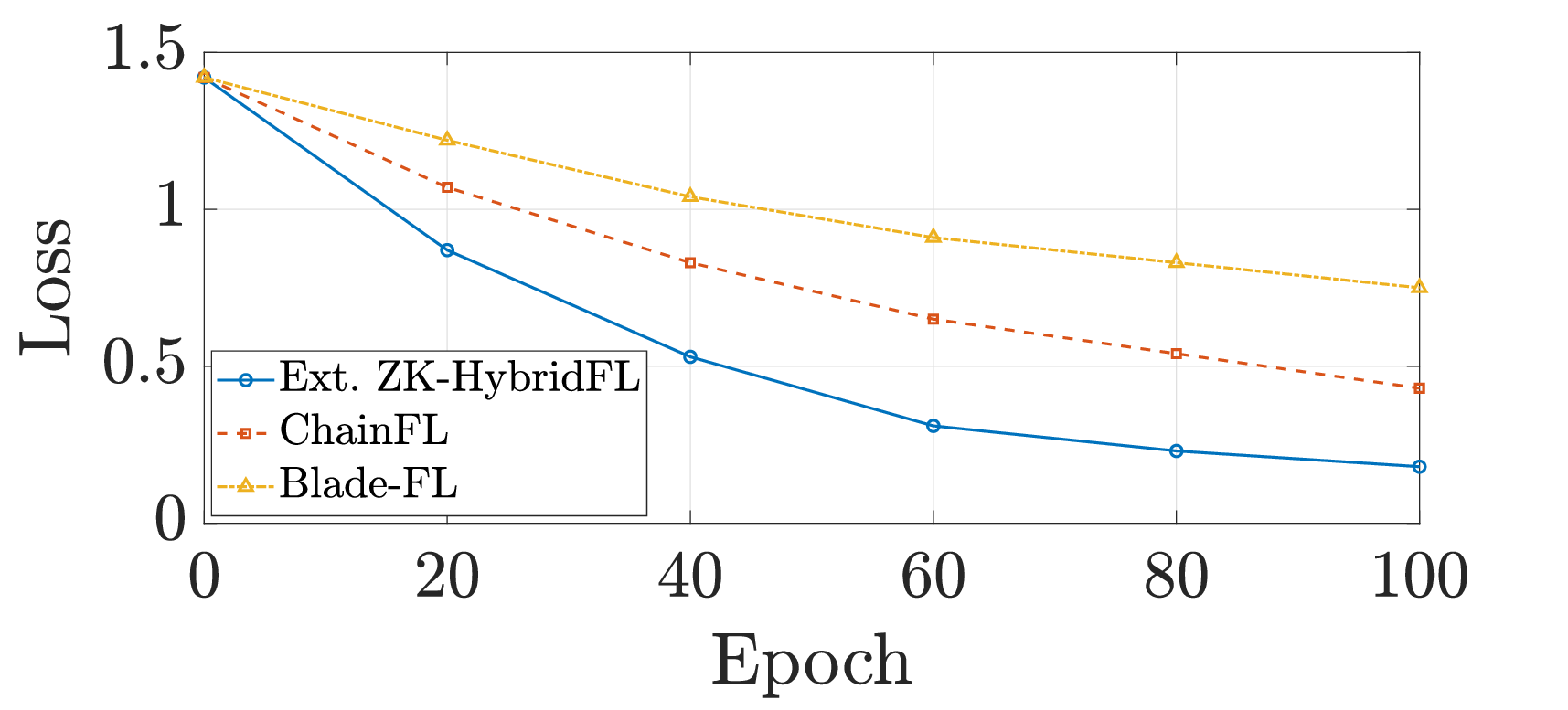}%
  }\\[-0.25\baselineskip]
  \subfloat[Task 1 Accuracy\label{fig:ext-acc}]{%
    \includegraphics[width=\columnwidth]{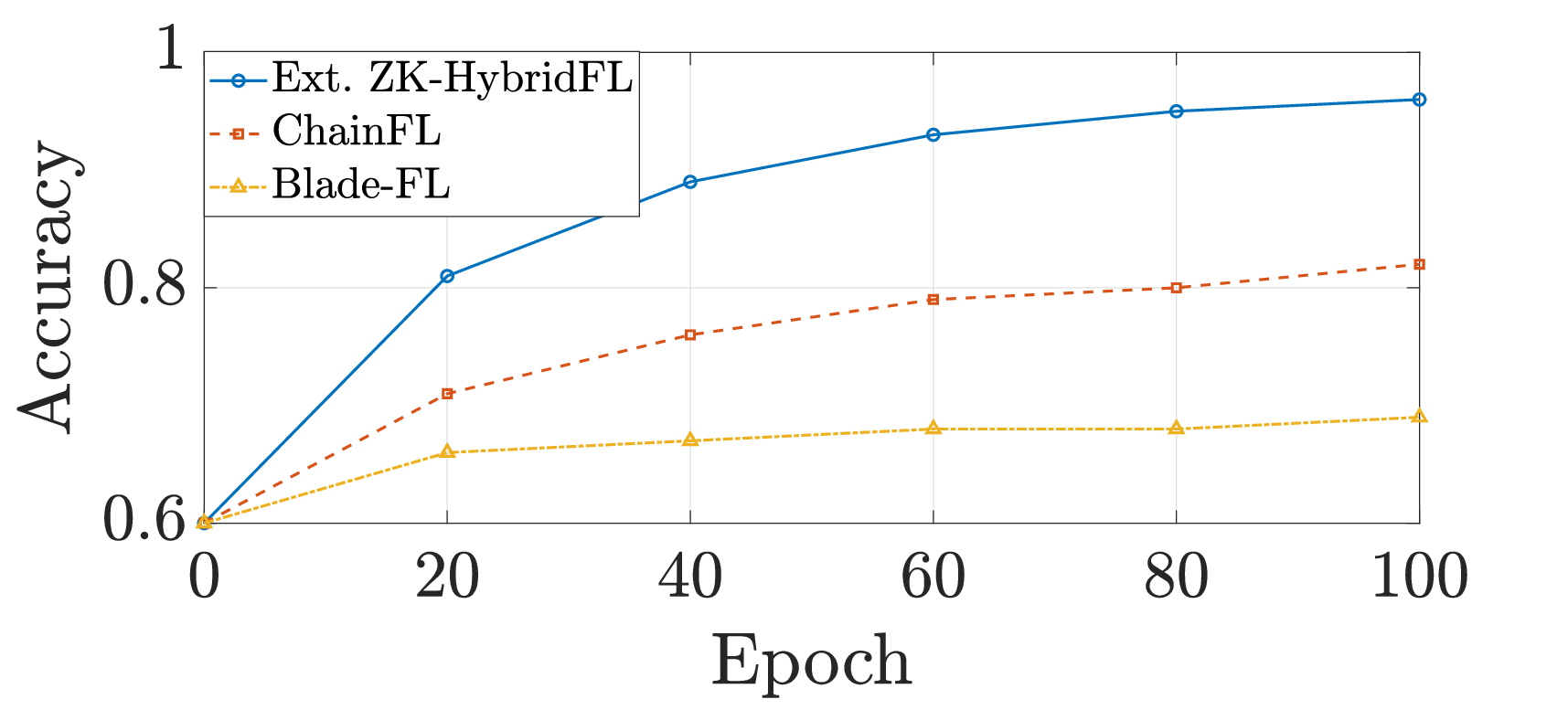}%
  }\\[-0.25\baselineskip]
  \subfloat[Task 2 Perplexity\label{fig:ext-ppl}]{%
    \includegraphics[width=\columnwidth]{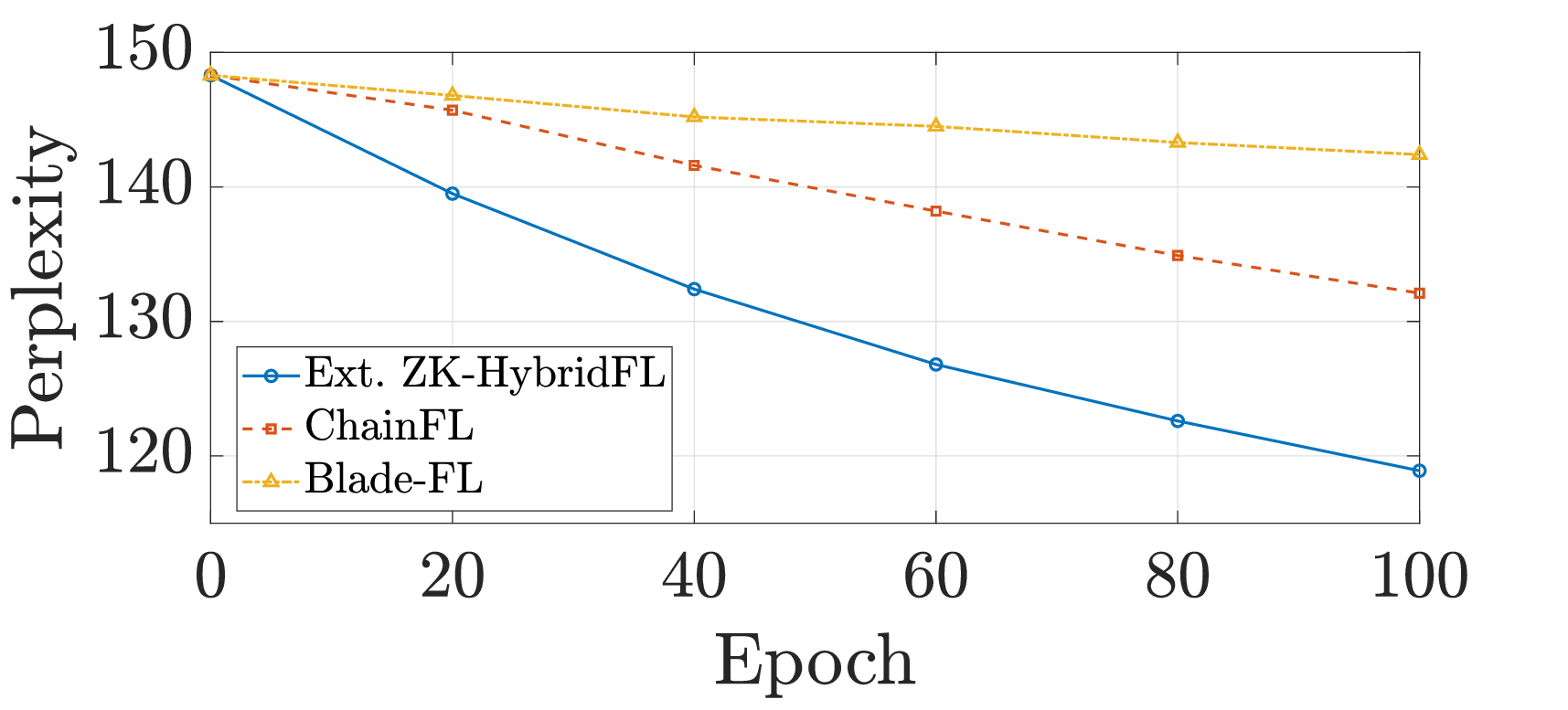}%
  }\\[-0.25\baselineskip]
  \subfloat[Mixed-fault Robustness\label{fig:ext-robust}]{%
    \includegraphics[width=\columnwidth]{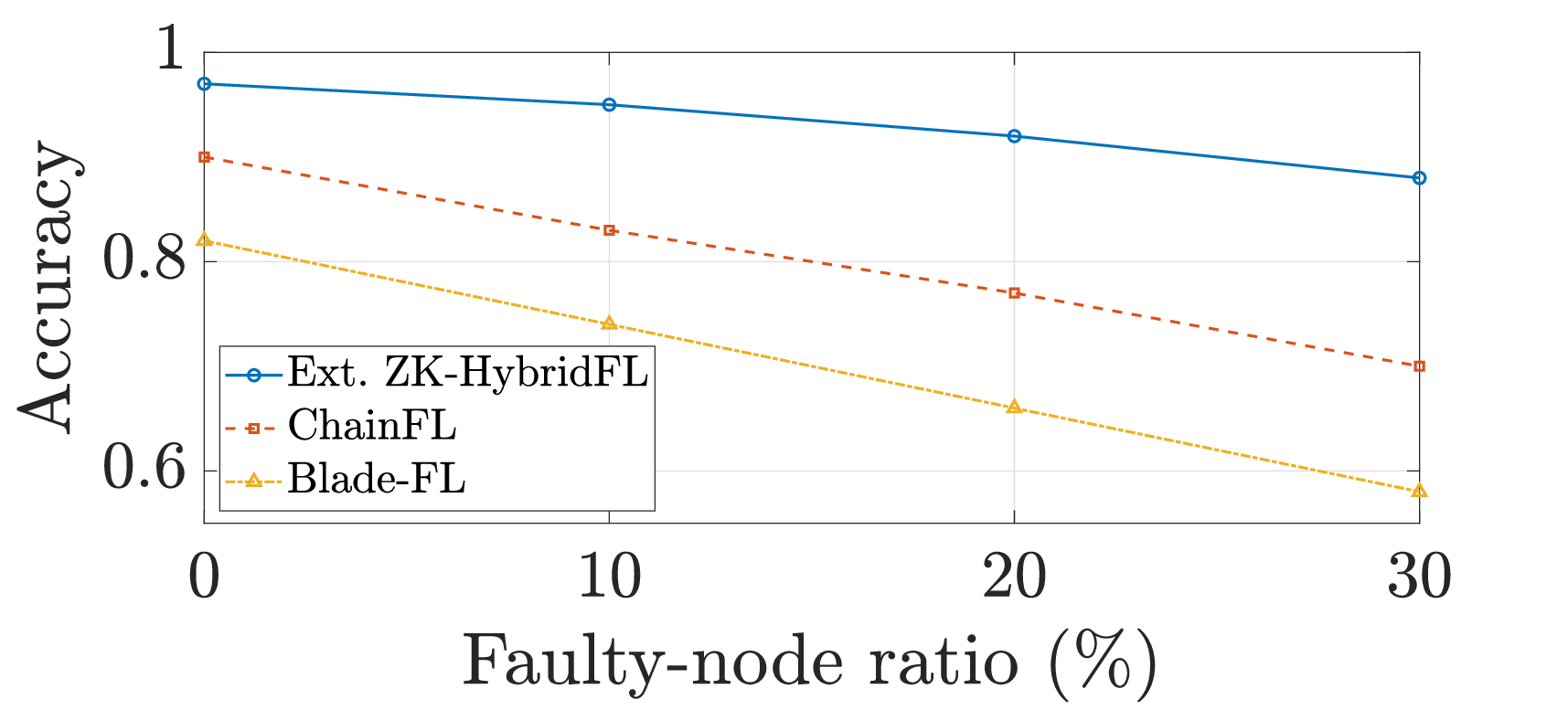}%
  }
  \caption{Extended proof bundle under a mixed-fault setting ($\gamma=\mu=0.15$). Solid: extended ZK-HybridFL; dashed: ChainFL; dash-dot: Blade-FL.}
  \label{fig:ext-bundle-mixed}
\end{figure}

\Cref{fig:proof-audit-det} tracks how the extended bundle filters covert behavior. Across $100$ epochs, the five faulty nodes submit $500$ candidate updates that are \emph{intended} to look legitimate. The Bulletproof window eliminates $385$ of them, while the cosine SNARK removes a further $96$, for a combined detection rate of $481/500 \approx 96.2\%$. The residual $19$ blocks, about $3.8\%$ of the total, have $\ell_{2}$ norms within $3\%$ of the lower bound and embedding shifts just below the cosine threshold, making them statistically indistinguishable from the slowest honest learners.

\Cref{fig:proof-audit-stake} shows how this filters through to economic weight. Training starts from an approximately equal stake split, but as invalid updates are rejected and slashed, the honest share rises monotonically from $0.50$ to $0.85$, while the faulty share falls to $0.15$. Because stake is re-normalized after every reward or slash event, the two curves always sum to one. The progressive re-weighting further dampens the impact of any stealth update that survives the proof checks, which explains the widening performance gap already visible in \cref{fig:ext-bundle-mixed}.

\begin{figure}[t]
  \centering
  \subfloat[Cumulative stealth updates rejected by each extended check.\label{fig:proof-audit-det}]{%
    \includegraphics[width=0.98\columnwidth]{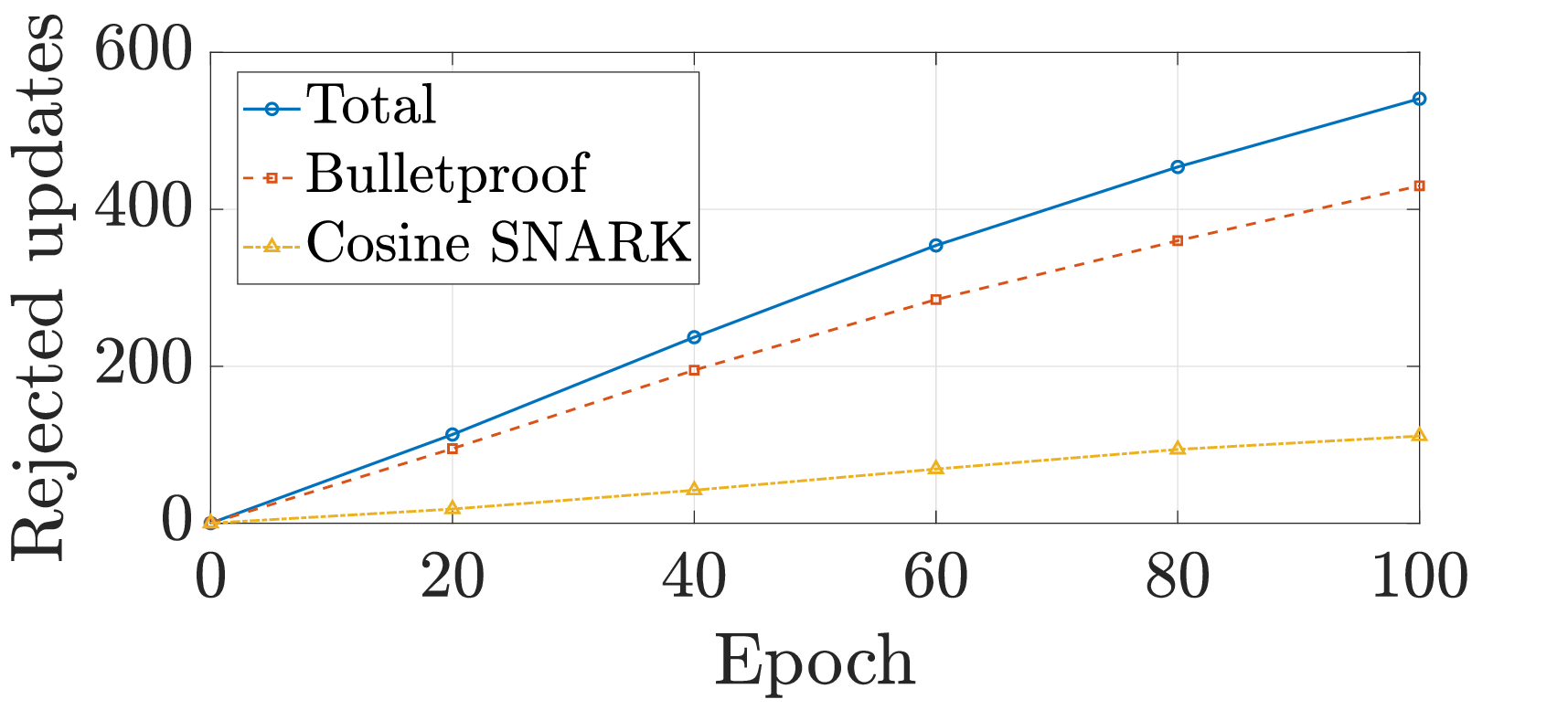}%
  }\\[0.5em]
  \subfloat[Stake redistribution between honest and faulty clients.\label{fig:proof-audit-stake}]{%
    \includegraphics[width=0.98\columnwidth]{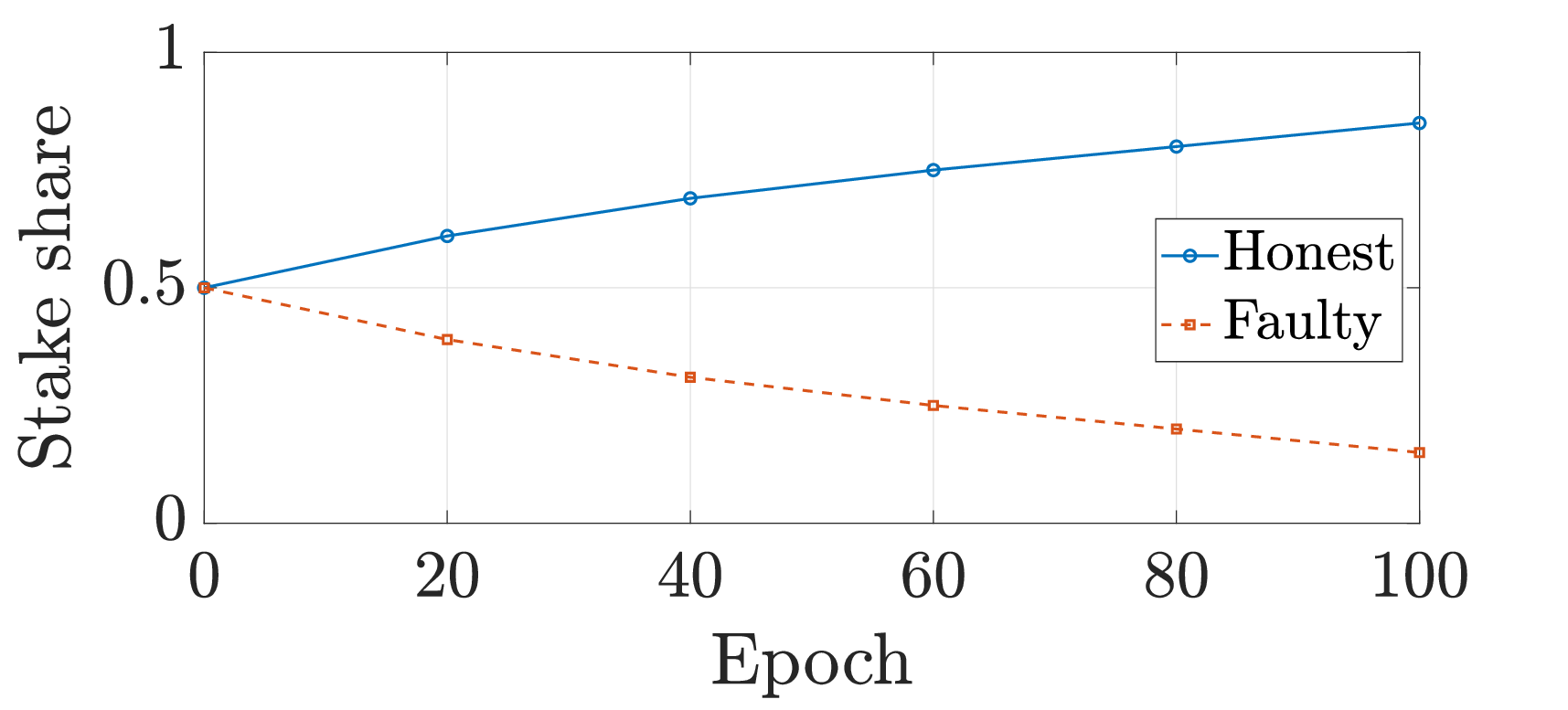}%
  }
  \caption{Proof-level audit of the mixed-fault run ($\gamma=\mu=0.15$; five faulty nodes, one update per round).}
  \label{fig:proof-audit}
\end{figure}

\subsubsection{Lazy-Only Robustness Analysis}
\label{sec:lazy-only-analysis}

In this subsection, we isolate the effect of purely lazy clients---those that replay stale models without any adversarial tampering---by setting $\gamma=0$ and varying the skip percentage $\mu$ from $0$ to $0.30$. This ``lazy-only'' sweep shows how ZK-HybridFL resists the impact of missing updates compared to ChainFL and Blade-FL, providing a clear baseline for understanding the protocol's robustness under non-malicious service lapses.

When adversaries are disabled ($\gamma = 0$), as \cref{fig:lazy-only} illustrates, ZK-HybridFL's MNIST accuracy declines smoothly from $0.992$ at $\mu=0$ to $0.949$ at $\mu=0.30$, and its perplexity rises from $117.32$ to $124.02$, an increase of $6.70$ points. By contrast, ChainFL's perplexity jumps from $117.32$ to $191.39$ ($+74.07$) and Blade-FL's from $117.32$ to $255.58$ ($+138.26$). This finer-granularity view shows that, although all protocols degrade under pure laziness, the stake-based re-weighting in ZK-HybridFL dramatically limits the performance loss compared to equal-weight baselines.

\begin{figure}[t]
  \centering
  \subfloat[Task 1 Accuracy\label{fig:lazy-acc}]{%
    \includegraphics[width=\columnwidth]{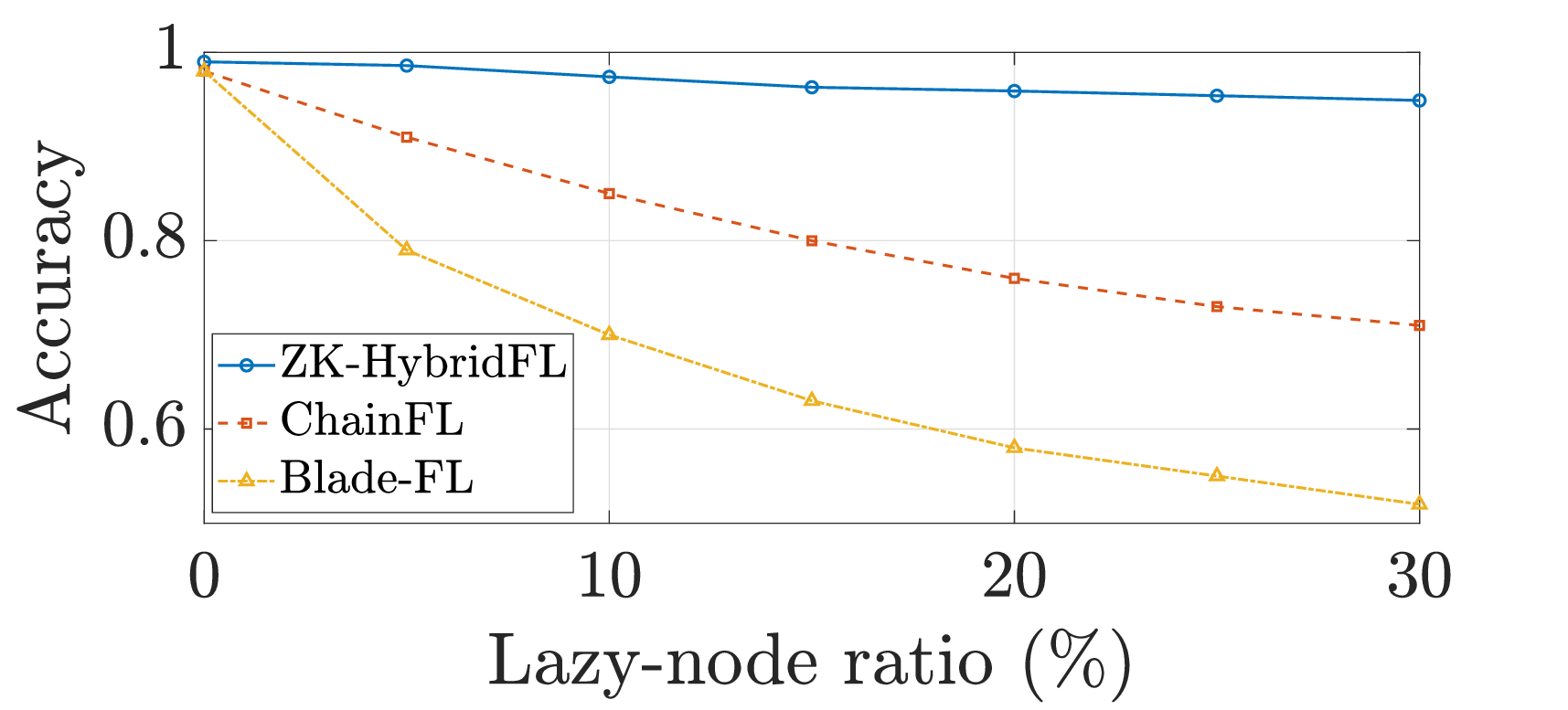}%
  }\\[-0.25\baselineskip]
  \subfloat[Task 2 Perplexity\label{fig:lazy-ppl}]{%
    \includegraphics[width=\columnwidth]{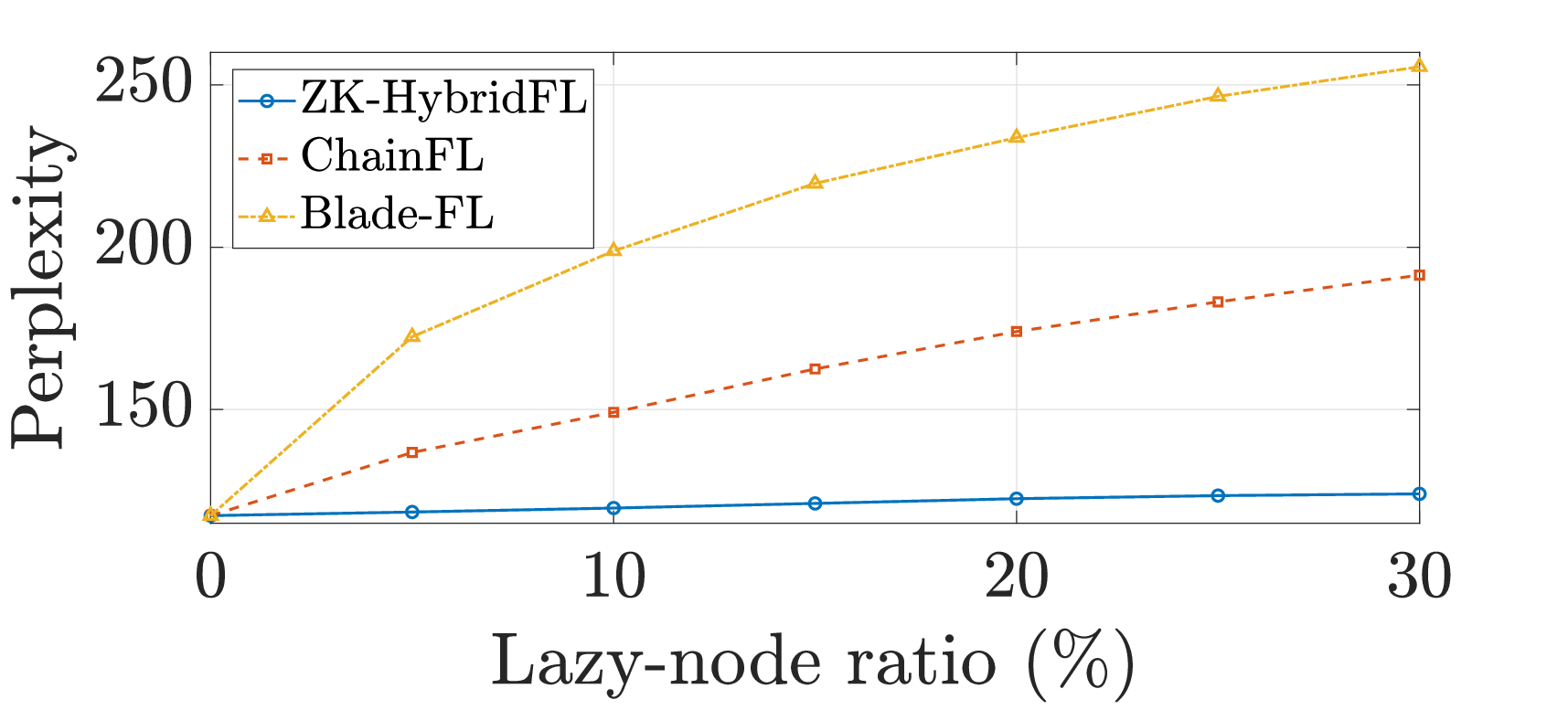}%
  }
  \caption{Lazy-only scenario ($\gamma = 0$). Solid: ZK-HybridFL; dashed: ChainFL; dash-dot: Blade-FL.}
  \label{fig:lazy-only}
\end{figure}

\subsubsection{Adversarial Utility on the Public Reference Set}
\label{sec:adv-public}

We now make the ``black-box'' behavior of utility-preserving adversaries explicit. A model-poisoning adversary is one whose local update achieves essentially the \emph{same} utility on the public validation set $D_{pub}$ as an honest update---i.e., $\operatorname{Acc}(\tilde{\mathbf{W}}_{\text{adversary}},D_{pub}) \ge \operatorname{Acc}(\tilde{\mathbf{W}}_{\text{honest}},D_{pub})-\varepsilon$ with $\varepsilon=0.01$---yet is crafted to maximize the drift of the global model after aggregation. This is exactly the threat captured in \cref{sec:adv-motivation}, but until now we had shown only the aggregate effect (\cref{nodes_vary,adversary_vary}).

\Cref{fig:adv-public} opens the box for ChainFL for $\gamma=0.10$, $\mu=0.20$, $n=20$. The dashed line plots the per-epoch performance of adversarial updates on $D_{pub}$; the dotted line shows honest updates; the solid line is the global model on the full private test distribution. Although adversarial nodes sustain $\ge0.89$ accuracy on MNIST and $\le127.6$ perplexity (indistinguishable from honest nodes), they cause the aggregated model to lag by $6$--$9$ percentage points in accuracy and to converge eight perplexity points higher. This directly substantiates the claim in \cref{sec:adv-motivation} that such nodes ``maintain good performance on public datasets while gradually undermining the global model,'' and explains why schemes that rely on public-set admission (ChainFL, Blade-FL) remain vulnerable.

\begin{figure}[t]
  \centering
  \subfloat[Task 1 Accuracy\label{fig:adv-public-1}]{%
    \includegraphics[width=\columnwidth]{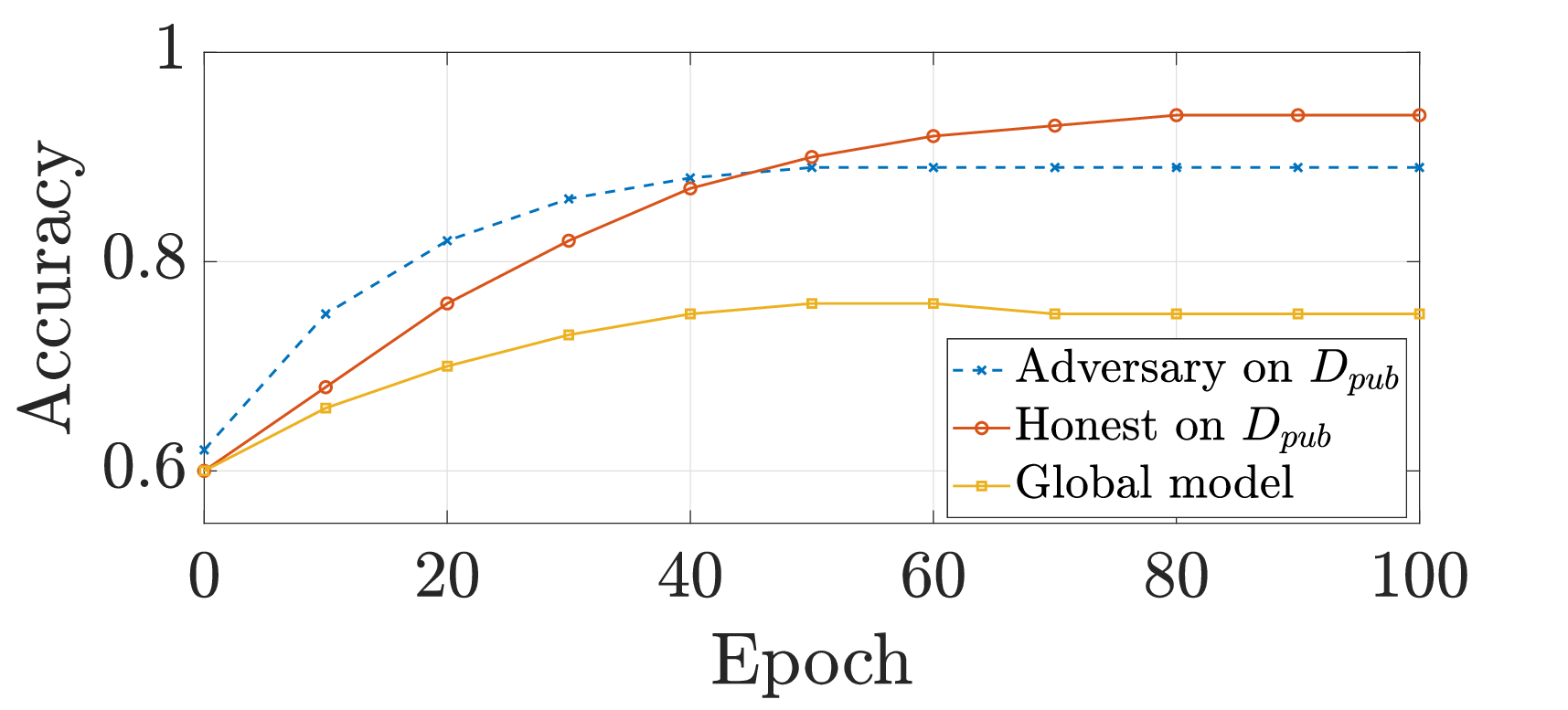}%
  }\\[-0.25\baselineskip]
  \subfloat[Task 2 Perplexity\label{fig:adv-public-2}]{%
    \includegraphics[width=\columnwidth]{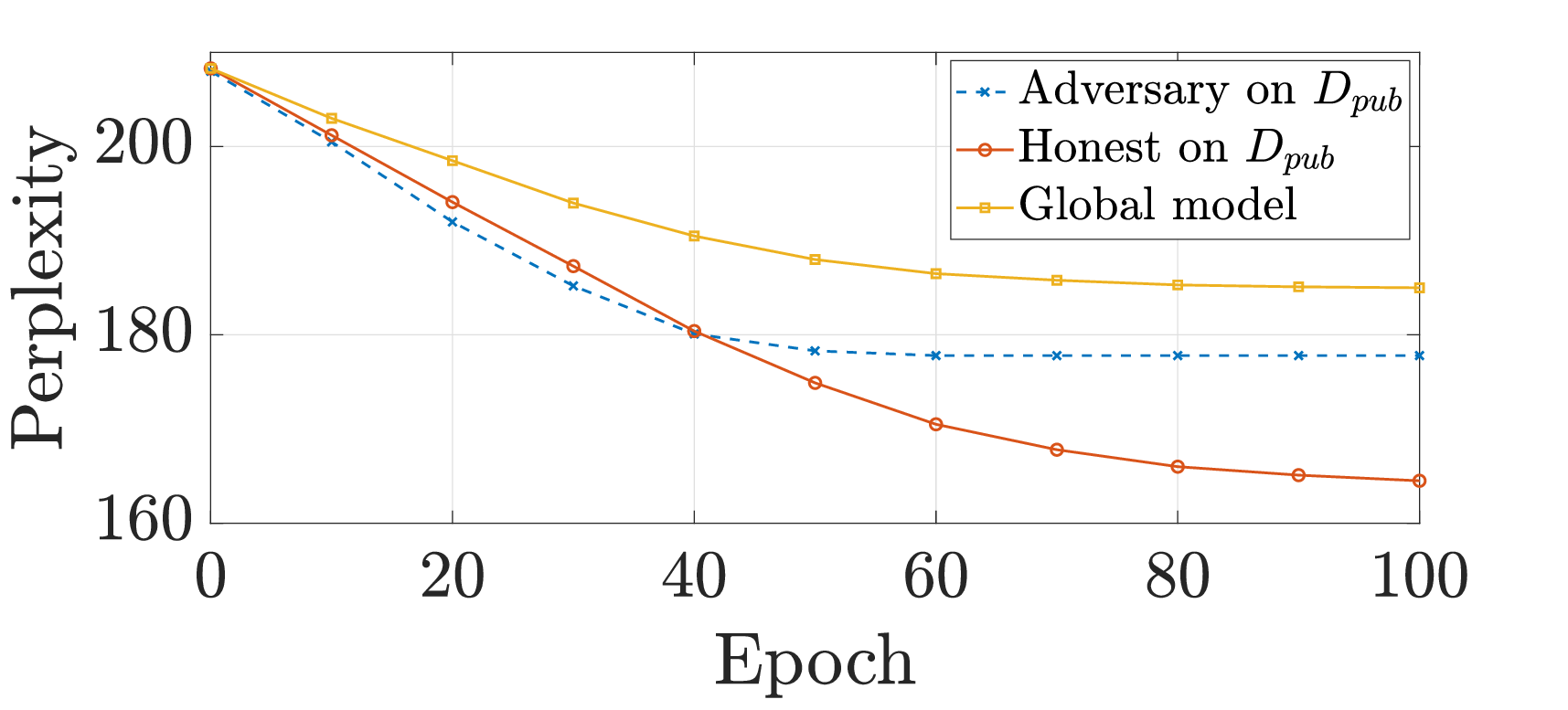}%
  }
  \caption{Utility-preserving adversaries for ChainFL: high public-set utility yet harmful global impact (mixed-fault setting, $\gamma=0.10$, $\mu=0.20$, $n=20$).}
  \label{fig:adv-public}
\end{figure}

\end{document}